\newcommand{\mboxit}[1]{\mbox{\textit{#1}}}
\newcommand{\comment}[1]{}
\newcommand{\ie}{\mbox{i.e.}}
\newcommand{\cf}{\mbox{cf.}}
\newcommand{\eg}{\mbox{e.g.}}
\newcommand{\viz}{\mbox{viz.}}
\newcommand{\wrt}{\mbox{w.r.t.}}
\newcommand{\suchthat}{\mbox{s.t.}}
\newcommand{\etal}{\mboxit{et al.}}
\newcommand{\qed}{\rule{2mm}{2mm}}
\newcommand{\tuple}[1]{\langle #1 \rangle}
\newcommand{\defined}{\ensuremath{=_{\text{def}}}}
\newcommand{\stackscript}[2]{\stackrel{#1}{\scriptscriptstyle #2}}
\newcommand{\bigM}{{\ensuremath{\text{\it big}}}}
\newtheorem{definition}{Definition}[section]
\newtheorem{lemma}{Lemma}[section]
\newtheorem{theorem}{Theorem}[section] 
\newtheorem{corollary}{Corollary}[section]
\newenvironment{proof}{\noindent\textbf{Proof:}}
{\hfill\qed}
\newcommand{\et}{\wedge}
\newcommand{\ou}{\vee}
\newcommand{\imp}{\rightarrow}
\newcommand{\sii}{\leftrightarrow}
\newcommand{\symdiff}{\dot{-}} % symmetric difference
\newcommand{\valuations}[1]{{\ensuremath{\text{\it val}(#1)}}}
\newcommand{\base}[1]{{\ensuremath{\text{\it base}(#1)}}}
\newcommand{\atm}[1]{
%\ifthenelse{\equal{#1}{}}{\ensuremath{\text{\it atm}}}
{\ensuremath{\text{\it atm}(#1)}}
}
\newcommand{\card}[1]{
%\ifthenelse{\equal{#1}{}}{\ensuremath{\text{\it card}}}
{\ensuremath{\text{\it card}(#1)}}
}
\newcommand{\Cn}[1]{{\ensuremath{\text{\it Cn}(#1)}}}
\newcommand{\IP}[1]{
%\ifthenelse{\equal{#1}{}}{\ensuremath{\text{\it PI}}}
{\ensuremath{\text{\it IP}(#1)}}
}
\newcommand{\RelTarget}[1]{%
{\ensuremath{\text{\it RelTarget}(#1)}}
}
\newcommand{\ElemAt}[1]{\ensuremath{\text{\it atm}(#1)}}
\newcommand{\EssentAt}[1]{\ensuremath{\text{\it atm}!(#1)}}
\newcommand{\sat}{\Vdash}
\newcommand{\notsat}{\not\sat}
\newcommand{\satMod}[1]{\sat_{\raisebox{-0.2ex}{$\!\!\!_{#1}$}}}
\newcommand{\satEssent}[1]{\sat_{\raisebox{-0.2ex}{$\!\!\!_{#1}$}}^
{\raisebox{-0.3ex}{$^{!}$}}}
\newcommand{\wMvalid}[2]{\models_{\raisebox{-0.2ex}{$\!\!\!_{#1}$}}^
{\raisebox{-0.3ex}{$\!\!\!^{#2}$}}\!\!}
\newcommand{\notwMvalid}{\not\wMvalid}
\newcommand{\Mvalid}[1]{\models^{\raisebox{-0.3ex}{$\!\!\!^{#1}$}}\!\!}
\newcommand{\notMvalid}{\not\Mvalid}
\newcommand{\Cvalid}[1]{\models_{\raisebox{-0.25ex}{$\!\!\!_{#1}$}}\!\!}
\newcommand{\provable}[1]{\vdash_{\raisebox{-0.2ex}{$\!\!\!_{#1}$}}\!\!}
\newcommand{\CPL}{{\sf CPL}}
\newcommand{\CPLvalid}{\Cvalid{\CPL}}
\newcommand{\notCPLvalid}{\not\CPLvalid}
\newcommand{\CPLtheorem}{\provable{\CPL}}
\newcommand{\notCPLtheorem}{\not\CPLtheorem}
\newcommand{\Kn}{{\sf K}_{n}}
\newcommand{\PDL}{{\sf PDL}}
\newcommand{\PDLvalid}{\Cvalid{\PDL}}
\newcommand{\notPDLvalid}{\not\PDLvalid}
\newcommand{\PDLtheorem}{\provable{\PDL}}
\newcommand{\notPDLtheorem}{\not\PDLtheorem}
\newcommand{\act}{\ensuremath{\text{\it a}}} % an action
\newcommand{\prp}{\ensuremath{\text{\it p}}} % an atomic proposition
\newcommand{\lit}{\ell} % a literal
\newcommand{\fml}{\ensuremath{\varphi}} % a classical formula
\newcommand{\modfml}{\ensuremath{\varPhi}} % a modal formula
\newcommand{\term}{\ensuremath{\pi}} % a term
\newcommand{\compatterm}{\overline{\atm{\term}}}
\newcommand{\poss}[1]{\langle #1 \rangle}
\newcommand{\nec}[1]{[ #1 ]}
\newcommand{\antec}{\ensuremath{\varphi}}  % the antecedent in a generic formula
\newcommand{\conseq}{\ensuremath{\psi}} % the consequent in a generic formula
\newcommand{\Worlds}{\ensuremath{\text{\it W}}}
\newcommand{\AccRel}{\ensuremath{\text{\it R}}}
\newcommand{\AccRelS}{\ensuremath{\text{\it S}}}
\newcommand{\TransStruct}[1]{\tuple{\Worlds_{#1},\AccRel_{#1}}}
\newcommand{\TransStructP}[1]{\tuple{\Worlds'_{#1},\AccRel'_{#1}}}
\newcommand{\TransStructS}[1]{\tuple{\Worlds''_{#1},\AccRel''_{#1}}}
\newcommand{\ract}{\AccRel_{\act}}
\newcommand{\model}{\mathscr{M}}
\newcommand{\ModelSet}{\ensuremath{\mathcal{M}}}
\newcommand{\val}{\ensuremath{\text{\it v}}} % propositional valuation
\newcommand{\submodel}[1]{\preceq_{#1}}
\newcommand{\Act}{\ensuremath{\mathfrak{Act}}}
\newcommand{\Prp}{\ensuremath{\mathfrak{Prop}}}
\newcommand{\Lit}{\ensuremath{\mathfrak{Lit}}}
\newcommand{\Fml}{\ensuremath{\mathfrak{Fml}}}
\newcommand{\STAT}[2]{\ensuremath{\mathcal{S}^{#1}_{#2}}}
\newcommand{\EFF}[2]{\ensuremath{\mathcal{E}^{#1}_{#2}}}
\newcommand{\EXE}[2]{\ensuremath{\mathcal{X}^{#1}_{#2}}}
\newcommand{\Theory}[1]{\ensuremath{\mathcal{T}_{#1}}}
\newcommand{\buy}{\ensuremath{\text{\it buy}}}
\newcommand{\Token}{\ensuremath{\text{\it token}}}
\newcommand{\Coffee}{\ensuremath{\text{\it coffee}}}
\newcommand{\Hot}{\ensuremath{\text{\it hot}}}
\newcommand{\RevisionModels}[2]{{#1}^{\proprevision}_{#2}} % models of revision
\newcommand{\ErasureModels}[2]{{#1}^{-}_{#2}} % models of erasure
\newcommand{\ErasureSet}[2]{{#1}^{-}_{#2}} % erased set of formulas
\newcommand{\revise}[2]{\ensuremath{\text{\it revise}(#1,#2)}}
\newcommand{\erasure}[2]{\ensuremath{\text{\it contract}(#1,#2)}}
\newcommand{\propcontract}{\ominus}
\newcommand{\proprevision}{\star}
\newcommand{\Abuy}{\ensuremath{\text{\it b}}}
\newcommand{\AToken}{\ensuremath{\text{\it t}}}
\newcommand{\ACoffee}{\ensuremath{\text{\it c}}}
\newcommand{\AHot}{\ensuremath{\text{\it h}}}
\newcommand{\myskip}{\medskip} % space interlines
\newcommand{\Set}{\ensuremath{\text{\it A}}}
\newcommand{\pspace}{{\sc pspace}}
\begin{document}

\title{Action Theory Evolution}
%\title{Evolving Action Domain Descriptions}
%\title{On the Evolution of Action Domain Descriptions}
\date{}

\author{Ivan Jos\'{e} Varzinczak\\
       Meraka Institute\\
       CSIR Pretoria, South Africa \\
       \url{ivan.varzinczak@meraka.org.za}}

% For research notes, remove the comment character in the line below.
% \researchnote

\maketitle

\begin{abstract}
Like any other logical theory, domain descriptions in reasoning about actions may evolve, and thus need revision methods to adequately accommodate new information about the behavior of actions. The present work is about changing action domain descriptions in propositional dynamic logic. Its contribution is threefold: first we revisit the semantics of action theory contraction that has been done in previous work, giving more robust operators that express minimal change based on a notion of distance between Kripke-models. Second we give algorithms for syntactical action theory contraction and establish their correctness \wrt\ our semantics. Finally we state postulates for action theory contraction and assess the behavior of our operators \wrt\ them. Moreover, we also address the revision counterpart of action theory change, showing that it benefits from our semantics for contraction.
\end{abstract}

\newpage

\thispagestyle{empty}
\vspace*{-1.2cm}

\tableofcontents

\newpage

%%%%%%%%%%%%%%%%%%%%%%%%%%%%%%%%%%%%%%%%%%%%%%%%%%%
\section{Introduction}\label{Introduction}
%%%%%%%%%%%%%%%%%%%%%%%%%%%%%%%%%%%%%%%%%%%%%%%%%%%

	Consider an intelligent agent designed to perform rationally in a dynamic world, and suppose she should reason about the dynamics of an automatic coffee machine (Figure~\ref{RobotCoffeeMachine}). Suppose, for example, that the agent believes that coffee is always a hot beverage. Suppose now that some day she gets a coffee and observes that it is cold. In such a case, the agent must change her beliefs about the relation between the propositions ``I hold a coffee'' and ``I hold a hot beverage''. This example is an instance of the problem of changing propositional belief bases and is largely addressed in the literature about belief change~\cite{Gardenfors88} and belief update~\cite{KatsunoMendelzon92}.

\begin{figure}[htbp]
\begin{center}
\includegraphics[height=5cm]{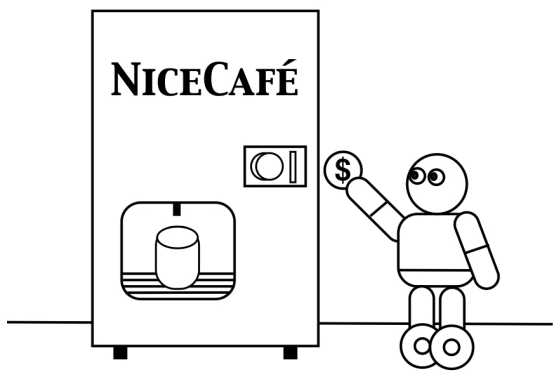}
\caption{The coffee deliverer agent.}
\label{RobotCoffeeMachine}
\end{center}
\end{figure}

	Next, let our agent believe that whenever she buys a coffee from the machine, she gets a hot beverage. This means that in every state of the world that follows the execution of buying a coffee, the agent possesses a hot beverage. Then, in a situation where the machine is running out of cups, after buying, the coffee runs through the shelf and the agent does not hold a hot beverage in her hands.

	Imagine now that the agent never considered any relation between buying a coffee on the machine and its service availability, in the sense that the agent always believed that buying does not prevent other users from using the machine. Nevertheless, someday our agent is queuing to buy a coffee and observes that just after the agent before her has bought, the machine went out of order (maybe due to a lack of coffee powder).

	Completing our agent's struggle in discovering the intricacies of a coffee machine, suppose she always believed that if she has a token, then it is possible to buy coffee, provided that some other conditions like being close enough to the button, having a free hand, etc, are satisfied. However, during a blackout,  the agent, even with a token, does not manage to buy her coffee.

\myskip

	The last three examples illustrate situations where changing the beliefs about the {\em behavior} of the action of buying coffee is mandatory. In the first one, buying coffee, once believed to be deterministic, has now to be seen as nondeterministic, or alternatively to have a different outcome in a more specific context (\eg\ if there is no cup in the machine). In the second example, buying a coffee is now known to have side-effects (ramifications) one was not aware of. Finally, in the last example, the executability of the action under concern is questioned in the light of new information showing a context that was not known to preclude its execution.

	Such cases of theory change are very important when one deals with logical descriptions of 
dynamic domains: it may always happen that one discovers that an action actually has a behavior 
that is different from that one has always believed it had.

\myskip

	Up to now, theory change has been studied mainly for knowledge bases in classical logics, both in terms of revision and update. Since the work by Fuhrmann~\cite{Fuhrmann89}, only in a few recent studies has it been considered in the realm of modal logics, \viz\ in epistemic logic~\cite{Hansson99} and in dynamic logics~\cite{HerzigEtAl-ECAI06}. Recently some studies have investigated revision of beliefs about {\em facts} of the world~\cite{ShapiroEtAl00,JinThielscher05} or the agent's goals~\cite{ShapiroEtAl-IJCAI05}. In our scenario, this would concern for instance the truth of $\Token$ in a given state: the agent believes that she has a token, but is actually wrong about that. Then she might subsequently be forced to revise her beliefs about the current state of affairs or change her goals according to what she can perform in that state. Such belief revision operations do not modify the agent's beliefs about the {\em action laws}. In opposition to that, here we are interested exactly in such modifications. Starting with Baral and Lobo's work~\cite{BaralLobo-IJCAI97}, some recent studies have been done on that issue~\cite{EiterEtAl-IJCAI05,EiterEtAl-ECAI06} for domain descriptions in action languages~\cite{GelfondLifschitz93}.
	
	We here take a step further in this direction and propose a method based on that given by Herzig \etal~\cite{HerzigEtAl-ECAI06} that is more robust by integrating a notion of {\em minimal change} and complying with {\em postulates} of theory change.

\myskip

	The present text is structured as follows: in Section~\ref{Preliminaries} we establish the formal background that will be used throughout this work.
Sections~\ref{SemanticsErasure}--\ref{Correctness} are the core of the work: in Section~\ref{SemanticsErasure} we present the central definitions for a semantics of action theory change, 
Section~\ref{SyntaxErasure} is devoted to its syntactical counterpart while
Section~\ref{Correctness} to the proof of its correspondence with the semantics. In 
Section~\ref{Postulates} we discuss some postulates for contraction/erasure and then present a semantics for action theory revision (Section~\ref{SemanticsRevision}). In Section~\ref{RelatedWork} we address existing work in the field. After making some comments on our method (Section~\ref{Comments}), we finish with some conclusions and future directions of research.

%%%%%%%%%%%%%%%%%%%%%%%%%%%%%%%%%%%%%%%%%%%%%%%%%%%
\section{Logical Preliminaries}\label{Preliminaries}
%%%%%%%%%%%%%%%%%%%%%%%%%%%%%%%%%%%%%%%%%%%%%%%%%%%

	Following the tradition in the reasoning about actions (RAA) community, we consider action theories to be finite collections of statements that have the particular form:
\begin{itemize}
\item if {\em context}, then {\em effect} after {\em every execution} of {\em action} (effect laws);
\item if {\em precondition}, then {\em action executable} (executability laws).
\end{itemize}
Statements mentioning no action at all represent laws about the underlying structure of the world, \ie, its possible states (static laws).

	Several logical frameworks have been proposed to formalize such statements. Among the most prominent ones are the Situation Calculus~\cite{McCarthyHayes69,Reiter91}, the family of Action Languages~\cite{GelfondLifschitz93,KarthaLifschitz94,GiunchigliaEtAl97}, the Fluent Calculus~\cite{Thielscher95,Thielscher97}, and the dynamic logic-based approaches~\cite{DeGiacomoLenzerini95,CastilhoEtAl99,ZhangFoo01}. Here we opt to formalize action theories using a version of Propositional Dynamic Logic (\PDL)~\cite{HarelEtAl00}.

\subsection{Action Theories in Dynamic Logic}\label{ActTheoriesPDL}

	Let $\Act=\{\act_{1},\act_{2},\ldots\}$ be the set of all {\em atomic action constants} of a given domain. An example of atomic action is $\buy$. To each atomic action $\act$ there is associated a modal operator $\nec\act$.\footnote{We here suppose that our multimodal logic is independently axiomatized~\cite{KrachtWolter91}, \ie, the logic is a fusion and there is no interaction between the modal operators. This is a requirement to achieve modularity of action theories~\cite{HerzigVarzinczak-Aiml04Proc05} (see further).}

	$\Prp=\{\prp_{1},\prp_{2},\ldots\}$ denotes the set of all {\em propositional constants}, also called {\em fluents} or {\em atoms}. Examples of those are $\Token$ (``the agent has a token'') and $\Coffee$ (``the agent holds a coffee''). The set of all literals is $\Lit=\{\lit_{1},\lit_{2},\ldots\}$, where each $\lit_{i}$ is either $\prp$ or $\neg\prp$, for some $\prp\in\Prp$. If $\lit=\neg\prp$, then we identify $\neg\lit$ with $\prp$. By $|\lit|$ we denote the atom in $\lit$.

	We use small Greek letters $\fml,\psi,\ldots$ to denote {\em Boolean formulas}. They are 
recursively defined in the usual way:
\[
\fml\ ::= p\ |\ \top\ |\ \bot\ |\ \neg\fml\ |\ \fml\et\fml\ |\ \fml\ou\fml\ |\
\fml\imp\fml\ |\ \fml\sii\fml\
\]
$\Fml$ is the set of all Boolean formulas. An example of a Boolean formula is $\Coffee\imp\Hot$. A propositional valuation $\val$ is a {\em maximally consistent} set of literals. We denote by $\val\sat\fml$ the fact that $\val$ satisfies a propositional formula $\fml$. By $\valuations{\fml}$ we denote the set of all valuations satisfying $\fml$. $\CPLvalid$ denotes the classical consequence relation. $\Cn{\fml}$ denotes all logical consequences of $\fml$ in classical propositional logic.

	If $\fml$ is a propositional formula, $\ElemAt{\fml}$ denotes the set of elementary atoms {\em actually} occurring in $\fml$. For example, $\ElemAt{\neg\prp_{1}\et(\neg\prp_{1}\ou\prp_{2})}=\{\prp_{1},\prp_{2}\}$.

\myskip

	For $\fml$ a Boolean formula, $\IP{\fml}$ denotes the set of its {\em prime implicants}~
\cite{Quine52}, \ie, the weakest terms (conjunctions of literals) that imply $\fml$. As an example, $\IP{\prp_{1}\oplus\prp_{2}}=\{\prp_{1}\et\neg\prp_{2},\neg\prp_{1}\et\prp_{2}\}$. For more on prime implicants, their properties and how to compute them see the chapter by Marquis~\cite{Marquis2000}. By $\term$ we denote a prime implicant, and given $\lit$ and $\term$, $\lit\in\term$ abbreviates `$\lit$ is a literal of $\term$'.

\myskip

	We denote complex formulas (possibly with modal operators) by $\modfml,\varPsi,\ldots$ They are 
recursively defined in the following way:
\[
\modfml\ ::= \fml\ |\ \nec\act\modfml\ |\ \neg\modfml\ |\ \modfml\et\modfml\ |\
\modfml\ou\modfml\ |\ \modfml\imp\modfml\ |\ \modfml\sii\modfml\ 
\]$\poss\act$ is the dual operator of $\nec\act$, defined as $\poss\act\modfml\defined\neg\nec\act
\neg\modfml$. An example of a complex formula is $\neg\Coffee\imp\nec\buy\Coffee$.

\myskip

  The semantics is that of \PDL\ without the $*$ operator, which amounts to multimodal logic~$\Kn$~\cite{Popkorn94}. In the following we will refer to \PDL\ but our underlying logical formalism is essentially the simpler multimodal logic $\Kn$, which turns out to be expressive enough for our purposes here.

\begin{definition}[\PDL-model]
A {\em \PDL-model} is a tuple $\model=\TransStruct{}$ where $\Worlds$ is a set of valuations (also called possible worlds), and $\AccRel$ maps action constants $\act$ to accessibility relations $\ract\subseteq\Worlds\times\Worlds$.
\end{definition}

  As an example, for $\Act=\{\act_{1},\act_{2}\}$ and $\Prp=\{\prp_{1},
\prp_{2}\}$, we have the \PDL-model $\model=\TransStruct{}$, where
\[
\Worlds=\{\{\prp_{1},\prp_{2}\},\{\prp_{1},\neg\prp_{2}\},\{\neg\prp_{1},
\prp_{2}\}\},
\]\[
\AccRel(\act_{1})=
\left\{
   \begin{array}{c}
     (\{\prp_{1},\prp_{2}\},\{\prp_{1},\neg\prp_{2}\}),
     (\{\prp_{1},\prp_{2}\},\{\neg\prp_{1},\prp_{2}\}), \\
     (\{\prp_{1},\neg\prp_{2}\},\{\prp_{1},\neg\prp_{2}\}),
     (\{\prp_{1},\neg\prp_{2}\},\{\neg\prp_{1},\prp_{2}\})
   \end{array}                     
\right\}
\]\[
\AccRel(\act_{2})=\{(\{\prp_{1},\prp_{2}\},\{\neg\prp_{1},\prp_{2}\}),
(\{\neg\prp_{1},\prp_{2}\},\{\neg\prp_{1},\prp_{2}\})\}
\]

Figure~\ref{PDL-model} gives a graphical representation of $\model$.\footnote{Notice that our notion of \PDL-model does not follow the standard notion from modal logics: here no two worlds satisfy the same valuation. This is a pragmatic choice (see~Section~\ref{SyntaxErasure}). Nevertheless, all we are about to state in the sequel can be straightforwardly formulated for standard \PDL\ models as well.}

\begin{figure}[h]
\begin{center}
\parbox{2cm}{
$\model$ :
}\parbox{4cm}{
\scriptsize{
\begin{picture}(40,42)(0,0) % (0,0) Ž embaixo ˆ esquerda, (50,50) Ž acima ˆ direita
    \gasset{Nw=9,Nh=4,linewidth=0.3}
    \thinlines
    \node(A1)(0,30){$\prp_{1},\prp_{2}$}
    \node[Nw=10](A2)(30,30){$\prp_{1},\neg\prp_{2}$}
    \node[Nw=10](A3)(15,5){$\neg\prp_{1},\prp_{2}$}
    \drawedge[ELside=l](A1,A2){$\act_{1}$}
    \drawedge[curvedepth=-3,ELside=r](A1,A3){$\act_{1}$}
    \drawedge[curvedepth=3,ELside=l](A1,A3){$\act_{2}$}
    \drawedge[ELside=l](A2,A3){$\act_{1}$}
    \drawloop[loopangle=90](A2){$\act_{1}$}
    \drawloop[loopangle=270](A3){$\act_{2}$}
\end{picture}
} % End \scriptsize
} % End \parbox
\end{center}
\caption{Example of a \PDL-model for $\Act=\{\act_{1},\act_{2}\}$, and $\Prp=\{\prp_{1},\prp_{2}\}
$.}
\label{PDL-model}
\end{figure}

\begin{definition}[Truth conditions]
Given a \PDL-model $\model=\TransStruct{}$,
\begin{itemize}
\item $\wMvalid{w}{\model}\prp$ ($\prp$ is true at world $w$ of model $\model$) if $w\sat\prp$ (the valuation $w$ satisfies $\prp$, \ie, $\prp\in w$);
\item $\wMvalid{w}{\model}\nec\act\modfml$ if $\wMvalid{w'}{\model}\modfml$ for every $w'$\ \suchthat\ $(w,w')\in\ract$;
\item $\wMvalid{w}{\model}\modfml\et\varPsi$ if $\wMvalid{w}{\model}\modfml$ and  $\wMvalid{w}{\model}\varPsi$;
\item $\wMvalid{w}{\model}\modfml\ou\varPsi$ if $\wMvalid{w}{\model}\modfml$ or  $\wMvalid{w}{\model}\varPsi$, or both;
\item $\wMvalid{w}{\model}\neg\modfml$ if $\notwMvalid{w}{\model}\modfml$, \ie, not $\wMvalid{w}{\model}\modfml$;
\item truth conditions for the other connectives are as usual.
\end{itemize}
\end{definition}

	By $\ModelSet$ we will denote a set of \PDL-models.

\myskip

	A \PDL-model $\model$ is a model of $\modfml$ (denoted $\Mvalid{\model}\modfml$) if and only if for all $w\in\Worlds$, $\wMvalid{w}{\model}\modfml$. In the model depicted in Figure~\ref{PDL-model}, we have $\Mvalid{\model}\prp_{1}\imp\nec{\act_{2}}\prp_{2}$ and $\Mvalid{\model}\prp_{1}\ou\prp_{2}$.

\begin{definition}[Global consequence]
$\model$ is a model of a set of formulas $\Sigma$ (noted $\Mvalid{\model}\Sigma$) if and only if 
$\Mvalid{\model}\modfml$ for every $\modfml\in\Sigma$. A formula $\modfml$ is a {\em 
consequence of a set of global axioms} $\Sigma$ in the class of all \PDL-models (noted $\Sigma\PDLvalid\modfml$) if and only if for every \PDL-model $\model$, if $\Mvalid{\model}\Sigma$, then $\Mvalid{\model}\modfml$.
\end{definition}

\myskip

	With \PDL\ we can state laws describing the behavior of actions. One way of doing this is by stating some formulas as global axioms.\footnote{An alternative to that is given by Castilho~\etal~\cite{CastilhoEtAl99}, with laws being stated with the aid of an extra universal modality and local consequence being thus considered.} As usually done in the RAA community, we here distinguish three types of laws. The first kind of statements are {\em static laws}, which are Boolean formulas that must hold in every possible state of the world.

\begin{definition}[Static Law]
A {\em static law} is a formula $\fml\in\Fml$.
\end{definition}

\noindent An example of a static law is $\Coffee\imp\Hot$, saying that if the agent holds a coffee, 
then 
she holds a hot beverage. The set of all static laws of a domain is denoted by $\STAT{}{}
\subseteq
\Fml$. In our example we will have $\STAT{}{}=\{\Coffee\imp\Hot\}$.

\myskip

	The second kind of action law we consider is given by the {\em effect laws}. These are 
formulas relating an action to its effects, which can be~conditional.

\begin{definition}[Effect Law]
An {\em effect law for action} $\act$ is of the form $\antec\imp\nec\act\conseq$, where $\antec,
\conseq\in\Fml$.
\end{definition}

\noindent The consequent $\conseq$ is the effect which always obtains when action $\act$ is executed in a state where the antecedent $\antec$ holds. If \act\ is a nondeterministic action, then the consequent $\conseq$ is typically a disjunction. An example of an effect law is $\neg\Coffee\imp\nec\buy\Coffee$, saying that in a situation where the agent has no coffee, after buying, the agent has a coffee. If $\conseq$ is inconsistent, then we have a special kind of effect law that we call an {\em inexecutability law}. For example, we could also have $\neg\Token\imp\nec\buy\bot$, expressing that $\buy$ cannot be executed if the agent has no token.

	The set of effect laws of a domain is denoted by $\EFF{}{}$. In our coffee machine scenario, we could have for example:
\[
\EFF{}{}=\left\{
  \begin{array}{c}
  	\neg\Coffee\imp\nec\buy\Coffee, \Token\imp\nec\buy\neg\Token, \neg\Token\imp\nec\buy
\bot
  \end{array}
  \right\}
\]

\myskip

	Finally, we also define {\em executability laws}, which stipulate the context where an action is guaranteed to be executable. In \PDL, the operator $\poss\act$ is used to express executability. $\poss\act\top$ thus reads ``the execution of~$\act$~is~possible''.

\begin{definition}[Executability Law]
An {\em executability law for action} $\act$ is of the form $\antec\imp\poss\act\top$, where $\antec\in\Fml$.
\end{definition}

\noindent For instance, $\Token\imp\poss\buy\top$ says that buying can be executed whenever the agent has a token. The set of all executability laws of a given domain is denoted by $\EXE{}{}$. In our scenario example we would have $\EXE{}{}=\{\Token\imp\poss\buy\top\}$.

\myskip

	With our three basic types of laws, we are able to define action theories:

\begin{definition}[Action Theory]
Given a domain and any (possibly empty) sets of laws $\STAT{}{}$, $\EFF{}{}$, and $\EXE{}{}$, $\Theory{}=\STAT{}{}\cup\EFF{}{}\cup\EXE{}{}$ is an {\em action theory}.
\end{definition}

	For given action $\act$, $\EFF{}{\act}$ (resp.\ $\EXE{}{\act}$) will denote the set of only those effect (resp.\ executability) laws about $\act$. $\Theory{\act}=\STAT{}{}\cup\EFF{}{\act}\cup\EXE{}{\act}$ is then the action theory for $\act$.\footnote{Notice that for $\act_{1},\act_{2}\in\Act$, $\act_{1}\neq\act_{2}$, the intuition is indeed that $\Theory{\act_{1}}$ and $\Theory{\act_{2}}$ overlap only on $\STAT{}{}$, \ie, the only laws that are common to both $\Theory{\act_{1}}$ and $\Theory{\act_{2}}$ are the laws about the structure of the world. This requirement is somehow related with the logic being independently axiomatized (see above).}

\myskip

	For the sake of clarity, we abstract here from the frame and ramification problems, and suppose the agent's theory already entails all the relevant frame axioms. We could have used any suitable solution to the frame problem, like \eg\ the dependence relation~\cite{CastilhoEtAl99}, which is used in the work of Herzig \etal~\cite{HerzigEtAl-ECAI06}, or a kind of successor state axioms in a slightly modified setting~\cite{DemolombeEtAl-JANCL03}. To make the presentation more clear to the reader, here we do not bother with a solution to the frame problem and just assume all frame axioms can be inferred from the theory. Actually we can suppose that all intended frame axioms are automatically recovered and stated in the theory, more specific, in the set of effect laws.\footnote{Frame axioms are a special type of effect law, having the form $\lit\imp\nec\act\lit$, for $\lit\in\Lit$.} Hence the action theory of our example will be:
\[
\Theory{}=\left\{
  \begin{array}{c}
	 \Coffee\imp\Hot, \Token\imp\poss\buy\top, \\
	  \neg\Coffee\imp\nec\buy\Coffee, \\
	 \Token\imp\nec\buy\neg\Token, \neg\Token\imp\nec\buy\bot, \\
	 \Coffee\imp\nec\buy\Coffee, \Hot\imp\nec\buy\Hot
  \end{array}
  \right\}
\]
(We have not stated the frame axiom $\neg\Token\imp\nec\buy\neg\Token$ because it can be trivially deduced from the inexecutability law $\neg\Token\imp\nec\buy\bot$.)

Figure~\ref{ExModelCoffeeMachine} below shows a \PDL-model for the theory $\Theory{}$.

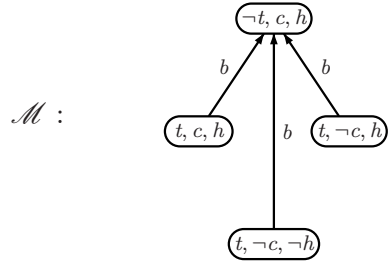
\begin{figure}[h]
\begin{center}
\parbox{1.5cm}{
$\model$ :
}\parbox{4cm}{
\scriptsize{
\begin{picture}(35,35)(0,0) % (0,0) Ž embaixo ˆ esquerda, (50,50) Ž acima ˆ direita
    \gasset{Nw=9,Nh=4,linewidth=0.3}
    \thinlines
    \node(A1)(10,15){$\AToken,\ACoffee,\AHot$}
    \node[Nw=10](A2)(20,30){$\neg\AToken,\ACoffee,\AHot$}
    \node[Nw=10](A3)(30,15){$\AToken,\neg\ACoffee,\AHot$}
    %\node[Nw=13](A4)(0,0){$\neg\AToken,\neg\ACoffee,\neg\AHot$}
    %\node[Nw=12](A5)(40,0){$\neg\AToken,\neg\ACoffee,\AHot$}
    \node[Nw=12](A6)(20,0){$\AToken,\neg\ACoffee,\neg\AHot$}
    \drawedge[ELside=l](A1,A2){$\Abuy$}
    \drawedge[ELside=r](A3,A2){$\Abuy$}
    \drawedge[ELside=r](A6,A2){$\Abuy$}
%    \drawedge[curvedepth=-3,linewidth=0.4,ELside=r](A1,A3){$\act_{1}$}
%    \drawedge[curvedepth=3,linewidth=0.4,ELside=l](A1,A3){$\act_{2}$}
%    \drawedge[linewidth=0.4,ELside=l](A2,A3){$\act_{1}$}
%    \drawloop[linewidth=0.4,loopangle=90](A2){$\act_{1}$}
%    \drawloop[linewidth=0.4,loopangle=270](A3){$\act_{2}$}
\end{picture}
} % End \scriptsize
} % End \parbox
\end{center}
\caption{A model for our coffee machine scenario: \Abuy, \AToken, \ACoffee, and \AHot\ stand for, respectively, \buy, \Token, \Coffee, and \Hot.}
\label{ExModelCoffeeMachine}
\end{figure}

	Given an action theory $\Theory{}$, sometimes it will be useful to consider models whose possible worlds are {\em all} the possible worlds allowed by $\Theory{}$:

\begin{definition}[Big Model]
Let $\Theory{}=\STAT{}{}\cup\EFF{}{}\cup\EXE{}{}$ be an action theory. $\model_{\bigM}=\tuple{\Worlds_{\bigM},\AccRel_{\bigM}}$ is the {\em big model} of $\Theory{}$ if and only if:
\begin{itemize}
\item $\Worlds_{\bigM}=\valuations{\STAT{}{}}$; and
\item $\AccRel_{\bigM}=\bigcup_{\act\in\Act}\ract$ \suchthat\ $\ract=\{(w,w') : \text{ for all }\antec\imp\nec\act\conseq\in\EFF{}{\act},\text{ if } \wMvalid{w}{\model}\antec,\text{ then }\wMvalid{w'}{\model}\conseq\}$.
\end{itemize}
\end{definition}

	Figure~\ref{ExBigModelCoffeeMachine} below shows the big model of $\Theory{}$.

\begin{figure}[h]
\begin{center}
\parbox{1.5cm}{
$\model$ :
}\parbox{4cm}{
\scriptsize{
\begin{picture}(35,35)(0,0) % (0,0) Ž embaixo ˆ esquerda, (50,50) Ž acima ˆ direita
    \gasset{Nw=9,Nh=4,linewidth=0.3}
    \thinlines
    \node(A1)(10,15){$\AToken,\ACoffee,\AHot$}
    \node[Nw=10](A2)(20,30){$\neg\AToken,\ACoffee,\AHot$}
    \node[Nw=10](A3)(30,15){$\AToken,\neg\ACoffee,\AHot$}
    \node[Nw=13](A4)(0,0){$\neg\AToken,\neg\ACoffee,\neg\AHot$}
    \node[Nw=12](A5)(40,0){$\neg\AToken,\neg\ACoffee,\AHot$}
    \node[Nw=12](A6)(20,0){$\AToken,\neg\ACoffee,\neg\AHot$}
    \drawedge[ELside=l](A1,A2){$\Abuy$}
    \drawedge[ELside=r](A3,A2){$\Abuy$}
    \drawedge[ELside=r](A6,A2){$\Abuy$}
%    \drawedge[curvedepth=-3,linewidth=0.4,ELside=r](A1,A3){$\act_{1}$}
%    \drawedge[curvedepth=3,linewidth=0.4,ELside=l](A1,A3){$\act_{2}$}
%    \drawedge[linewidth=0.4,ELside=l](A2,A3){$\act_{1}$}
%    \drawloop[linewidth=0.4,loopangle=90](A2){$\act_{1}$}
%    \drawloop[linewidth=0.4,loopangle=270](A3){$\act_{2}$}
\end{picture}
} % End \scriptsize
} % End \parbox
\end{center}
\caption{The big model for the coffee machine scenario.}
\label{ExBigModelCoffeeMachine}
\end{figure}
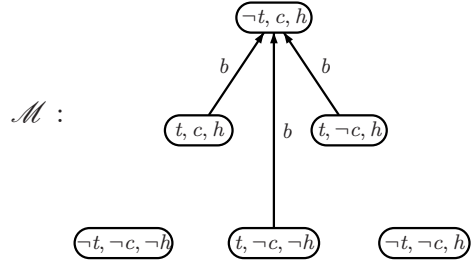

\subsection{Essential Atoms}

	An atom $\prp$ is {\em essential} to a formula $\fml$ if and only if $\prp\in
\ElemAt{\fml'}$ for every $\fml'$ such that $\CPLvalid\fml\sii\fml'$. For instance, $\prp_{1}$ is 
essential to $\neg\prp_{1}\et(\neg\prp_{1}\ou\prp_{2})$. Given $\fml$, $\EssentAt{\fml}$ denotes the set of essential atoms of $\fml$. (If $\fml$ is not contingent, \ie, $\fml$ is a tautology or a contradiction, then $\EssentAt{\fml}=\emptyset$.)

	Given $\fml$ a Boolean formula, $\fml*$ is the set of all formulas $\fml'$ such that $\fml\CPLvalid\fml'$ and $\ElemAt{\fml'}\subseteq\EssentAt{\fml}$. For instance, $\prp_{1}\ou\prp_{2}\notin\prp_{1}*$, as $\prp_{1}\CPLvalid\prp_{1}\ou\prp_{2}$ but $\ElemAt{\prp_{1}\ou\prp_{2}}\not\subseteq\EssentAt{\prp_{1}}$. Clearly, $\ElemAt{\bigwedge\fml*}=\EssentAt{\bigwedge\fml*}$, moreover whenever $\CPLvalid\fml\sii\fml'$ is the case, then $\EssentAt{\fml}=\EssentAt{\fml'}$ and also $\fml*=\fml'*$.
	
\begin{theorem}[Least atom-set theorem~\cite{Parikh99}]
Given $\fml$ a propositional formula, $\CPLvalid\fml\sii\bigwedge\fml*$, and for every $\fml'$\ \suchthat\ $\CPLvalid\fml\sii\fml'$, $\ElemAt{\fml*}\subseteq\ElemAt{\fml'}$.
\end{theorem}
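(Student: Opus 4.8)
The plan is to prove the two assertions of the theorem separately. The second (minimality) assertion I expect to fall out almost immediately from the definitions of $\fml*$ and of essential atoms, so the real work concentrates on the equivalence $\CPLvalid\fml\sii\bigwedge\fml*$, and within that, on its nontrivial direction $\bigwedge\fml*\CPLvalid\fml$. Throughout, I read $\ElemAt{\fml*}$ as $\bigcup_{\fml'\in\fml*}\ElemAt{\fml'}$, which coincides with $\ElemAt{\bigwedge\fml*}$; and I note that although $\fml*$ may be syntactically infinite, all its members have atoms in the finite set $\EssentAt{\fml}$, so $\bigwedge\fml*$ is well defined up to logical equivalence as a finite conjunction of representatives.

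For the minimality assertion I argue directly. By the very definition of $\fml*$, every $\fml'\in\fml*$ satisfies $\ElemAt{\fml'}\subseteq\EssentAt{\fml}$, hence $\ElemAt{\fml*}\subseteq\EssentAt{\fml}$. By the definition of an essential atom, every $\prp\in\EssentAt{\fml}$ occurs in $\ElemAt{\fml'}$ for \emph{every} $\fml'$ with $\CPLvalid\fml\sii\fml'$, i.e.\ $\EssentAt{\fml}\subseteq\ElemAt{\fml'}$ for each such $\fml'$. Chaining these inclusions yields $\ElemAt{\fml*}\subseteq\EssentAt{\fml}\subseteq\ElemAt{\fml'}$, which is exactly what the second assertion demands.

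For the equivalence, one direction is routine: since $\fml\CPLvalid\fml'$ holds for each conjunct $\fml'\in\fml*$ (again by the definition of $\fml*$), we get $\fml\CPLvalid\bigwedge\fml*$. The converse is the heart of the theorem, and I would reduce it to producing a \emph{single} witness $\fml_{0}$ with $\CPLvalid\fml\sii\fml_{0}$ and $\ElemAt{\fml_{0}}\subseteq\EssentAt{\fml}$. Such an $\fml_{0}$ is a consequence of $\fml$ whose atoms are all essential, so $\fml_{0}\in\fml*$; being one of the conjuncts, it gives $\bigwedge\fml*\CPLvalid\fml_{0}$, and since $\fml_{0}\CPLvalid\fml$ we conclude $\bigwedge\fml*\CPLvalid\fml$. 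Combined with the easy direction this proves $\CPLvalid\fml\sii\bigwedge\fml*$.

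It remains to construct $\fml_{0}$, which I would do by eliminating inessential atoms one at a time. The key lemma is: if $\prp\in\ElemAt{\fml}$ is not essential to $\fml$, then $\CPLvalid\fml[\prp/\top]\sii\fml[\prp/\bot]$ and hence $\CPLvalid\fml\sii\fml[\prp/\top]$, where $\ElemAt{\fml[\prp/\top]}\subseteq\ElemAt{\fml}\setminus\{\prp\}$. The proof is that $\prp$ inessential means there is some $\fml'$ with $\CPLvalid\fml\sii\fml'$ and $\prp\notin\ElemAt{\fml'}$; substituting $\top$ and then $\bot$ for $\prp$ in the valid biconditional $\fml\sii\fml'$ preserves validity and leaves $\fml'$ untouched, forcing $\fml[\prp/\top]$, $\fml'$, and $\fml[\prp/\bot]$ to be pairwise equivalent. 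Starting from $\fml$ and repeatedly replacing the current formula by its $\top$-substitution at a remaining inessential atom removes one such atom per step, introduces no new atoms, and — because $\EssentAt{\cdot}$ is invariant under logical equivalence, as already observed in the excerpt — does not disturb the essential/inessential status of the other atoms; as $\fml$ has finitely many atoms, the process halts at the desired $\fml_{0}$. The non-contingent case $\EssentAt{\fml}=\emptyset$ is subsumed, the procedure then terminating at an atom-free formula equivalent to $\top$ or $\bot$. I expect the main obstacle to be exactly this key lemma together with the bookkeeping of the elimination: checking that the $\top$-substitution simultaneously preserves equivalence, strictly shrinks the atom set, and leaves the essentiality of the surviving atoms intact, so that termination is guaranteed and $\ElemAt{\fml_{0}}\subseteq\EssentAt{\fml}$ holds at the end.
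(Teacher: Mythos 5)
Your proof is correct, but there is nothing in the paper to compare it against: the paper explicitly omits the argument, deferring to Makinson~\cite{Makinson07} (``we do not state it here''). Your proposal is thus a legitimate self-contained replacement, and every step checks out. The decomposition---reduce both the equivalence $\CPLvalid\fml\sii\bigwedge\fml*$ and (implicitly) the minimality claim to exhibiting a single witness $\fml_{0}$ with $\CPLvalid\fml\sii\fml_{0}$ and $\ElemAt{\fml_{0}}\subseteq\EssentAt{\fml}$, then construct $\fml_{0}$ by eliminating inessential atoms one at a time---is sound: the key substitution lemma follows from uniform substitution applied to the valid biconditional $\fml\sii\fml'$ (where $\fml'$, the witness of inessentiality of $\prp$, is untouched because $\prp\notin\ElemAt{\fml'}$), giving $\CPLvalid\fml[\prp/\top]\sii\fml'\sii\fml[\prp/\bot]$ and hence $\CPLvalid\fml\sii\fml[\prp/\top]$; the invariance of $\EssentAt{\cdot}$ under logical equivalence (stated in the paper) guarantees that surviving atoms keep their status along the elimination chain; and termination is immediate from the finiteness of $\ElemAt{\fml}$. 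Your handling of the side issues (the infinitary reading of $\bigwedge\fml*$ via finitely many atoms, the non-contingent case) is also fine. It is worth noting that this letter-elimination route is more elementary than the arguments standardly given for this result in the sources the paper points to---Parikh works through language splittings, and Makinson's treatment rests on the family of letter-sets of formulas equivalent to $\fml$ being closed under intersection, an interpolation-style fact---so what your approach buys is self-containment and an explicit procedure that computes a representative of $\fml$ over its essential atoms, at essentially no cost in length.
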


	A proof of this theorem is given by Makinson~\cite{Makinson07} and we do not state it here. Essentially, the theorem establishes that for every formula $\fml$, there is a unique least set of elementary atoms such that $\fml$ may equivalently be expressed using only letters from that set.\footnote{The dual notion, \ie, that of redundant atoms is also addressed in the literature~\cite{HerzigRifi-AIJ99}, with similar purposes.} Hence, $\Cn{\fml}=\Cn{\fml*}$.

\subsection{Prime Valuations}

	Given a valuation $\val$, $\val'\subseteq\val$ is a {\em subvaluation}. Given a set of valuations $\Worlds$, a subvaluation $\val'$ {\em satisfies} a propositional formula $\fml$ modulo $\Worlds$ (noted $\val'\satMod{\Worlds}\fml$) if and only if $\val\sat\fml$ for all $\val\in\Worlds$ such that $\val'\subseteq\val$.
	
	We say that a subvaluation $\val$ {\em essentially satisfies} $\fml$ (modulo $\Worlds$), noted $\val\satEssent{\Worlds}\fml$, if and only if $\val\satMod{\Worlds}\fml$ and $\{ |\lit| : \lit\in\val \}\subseteq\EssentAt{\fml}$. If $\val\satEssent{\Worlds}\fml$, we call $\val$ an {\em essential 
subvaluation} of $\fml$ (modulo $\Worlds$).

\begin{definition}[Prime Subvaluation]
Let $\fml$ be a propositional formula and $\Worlds$ a set of valuations. A subvaluation $\val$ is a {\em prime subvaluation} of $\fml$ (modulo $\Worlds$) if and only if $\val\satEssent{\Worlds}\fml$ and there is no $\val'\subseteq\val$\ \suchthat\ $\val'\satEssent{\Worlds}\fml$.
\end{definition}

	Our notion of prime subvaluation is closely related to Veltman's definition of {\em basis} for a formula~\cite{Veltman05}.\footnote{The author is indebted to Andreas Herzig for pointing this out.} A prime subvaluation of a formula $\fml$ is thus one of the weakest states of truth in which $\fml$ is true. Hence, prime subvaluations are just another way of seeing prime implicants~\cite{Quine52} of $\fml$. By $\base{\fml,\Worlds}$ we will denote the set of all prime subvaluations of $\fml$ modulo $\Worlds$.

\begin{theorem}
Let $\fml\in\Fml$ and $\Worlds$ be a set of valuations. Then for all $w\in\Worlds$, $w\sat\fml$ if and only if $w\sat\bigvee_{\val\in\base{\fml,\Worlds}}\bigwedge_{\lit\in\val}\lit$.
\end{theorem}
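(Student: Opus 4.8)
The statement is an equivalence, and I would establish the two directions separately; the converse is a direct unfolding of the definitions, while the forward direction carries the real content.

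For the converse direction, suppose $w\sat\bigvee_{\val\in\base{\fml,\Worlds}}\bigwedge_{\lit\in\val}\lit$. Then some prime subvaluation $\val\in\base{\fml,\Worlds}$ makes the disjunct $\bigwedge_{\lit\in\val}\lit$ true at $w$. Since $w$ is a maximally consistent set of literals, a conjunction of literals holds at $w$ exactly when all those literals belong to $w$; hence $\val\subseteq w$. Being prime, $\val$ essentially satisfies $\fml$ modulo $\Worlds$, so in particular $\val\satMod{\Worlds}\fml$, which by definition means $w'\sat\fml$ for every $w'\in\Worlds$ with $\val\subseteq w'$. Instantiating $w'$ with $w$ (recall $w\in\Worlds$ and $\val\subseteq w$) yields $w\sat\fml$.

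For the forward direction, assume $w\in\Worlds$ and $w\sat\fml$. The plan is to exhibit a prime subvaluation contained in $w$. First I would restrict $w$ to the essential atoms of $\fml$, setting $\val_{0}=\{\lit\in w : |\lit|\in\EssentAt{\fml}\}$. By construction $\{|\lit|:\lit\in\val_{0}\}\subseteq\EssentAt{\fml}$, so it remains only to check $\val_{0}\satMod{\Worlds}\fml$. Here the Least atom-set theorem does the work: since $\CPLvalid\fml\sii\bigwedge\fml*$ and $\bigwedge\fml*$ mentions only essential atoms, the truth value of $\fml$ at any valuation depends solely on that valuation's restriction to $\EssentAt{\fml}$. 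Now every $w'\in\Worlds$ with $\val_{0}\subseteq w'$ agrees with $w$ on all of $\EssentAt{\fml}$ (because $\val_{0}$ already fixes, exactly as in $w$, the literal over each essential atom, and $w'$ being a valuation cannot diverge), so $w'\sat\fml$ iff $w\sat\fml$; as $w\sat\fml$ we get $w'\sat\fml$, establishing $\val_{0}\satMod{\Worlds}\fml$ and hence $\val_{0}\satEssent{\Worlds}\fml$.

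Finally, since $\EssentAt{\fml}$ is finite, $\val_{0}$ is finite, so among all subvaluations $\val'\subseteq\val_{0}$ with $\val'\satEssent{\Worlds}\fml$ there is a $\subseteq$-minimal one, call it $\val$. Any proper subset of $\val$ is also a subset of $\val_{0}$, so minimality within $\val_{0}$ is minimality outright; thus $\val$ is a prime subvaluation, i.e.\ $\val\in\base{\fml,\Worlds}$. Since $\val\subseteq\val_{0}\subseteq w$, we have $w\sat\bigwedge_{\lit\in\val}\lit$, and therefore $w$ satisfies the whole disjunction. The \emph{main obstacle} is precisely the middle step, proving $\val_{0}\satMod{\Worlds}\fml$, and it is exactly where the Least atom-set theorem is indispensable; the non-contingent cases are handled automatically, with $\val_{0}=\emptyset$ and $\base{\fml,\Worlds}=\{\emptyset\}$ when $\fml$ is a tautology, and the forward hypothesis $w\sat\fml$ being vacuous when $\fml$ is a contradiction.
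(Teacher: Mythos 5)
Your proof is correct and follows essentially the same route as the paper's: the converse direction by unfolding the definition of $\satMod{\Worlds}$ (using $w\in\Worlds$), and the forward direction by invoking the Least atom-set theorem to show that the restriction of $w$ to $\EssentAt{\fml}$ still forces $\fml$ modulo $\Worlds$, then extracting a minimal, hence prime, subvaluation contained in $w$. The only difference is one of detail: the paper compresses this into "let $w'\subseteq w$ be the least subset of $w$ still satisfying $\fml*$; clearly $w'$ is a prime subvaluation," whereas you justify the existence, minimality, and primality steps explicitly.
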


\begin{proof} 
Right to left direction is straightforward. For the left to right direction, if $w\sat\fml$, then $w\sat\fml*$. Let $w'\subseteq w$ be the least subset of $w$ still satisfying $\fml*$. Clearly, $w'$ is a prime subvaluation of $\fml$ modulo $\Worlds$, and then because $w\sat\bigwedge_{\lit\in w'}\lit$, the result follows.
\end{proof}

\subsection{Closeness between Models}\label{Closeness}

	 When contracting a formula from a model, we will perform a change in its structure. Because there can be several different ways of modifying a model (not all of them minimal), we need a notion of distance between models to identify those that are closest to the original one.
	 
	As we are going to see in more depth in what follows, changing a model amounts to modifying its possible worlds or its accessibility relation. Hence, the distance between two \PDL-models will depend upon the distance between their sets of worlds and accessibility relations. These here will be based on the {\em symmetric difference} between sets, defined as $X\symdiff Y=(X\setminus Y)\cup(Y\setminus X)$.

\begin{definition}[Closeness between \PDL-Models]\label{ClosenessPDLModels}
Let $\model=\TransStruct{}$ be a model. Then $\model'=\TransStructP{}$ is {\em at least as close to} $\model$ as $\model''=\TransStructS{}$, noted $\model'\submodel{\model}\model''$, if~and~only~if
\begin{itemize}
\item either $\Worlds\symdiff\Worlds'\subseteq\Worlds\symdiff\Worlds''$
\item or $\Worlds\symdiff\Worlds'=\Worlds\symdiff\Worlds''$ and $\AccRel\symdiff
\AccRel'\subseteq
\AccRel\symdiff\AccRel''$
\end{itemize}
\end{definition}

	Although simple, this notion of closeness is sufficient for our purposes here, as we will see in the sequel. Notice that other distance notions could have been considered as well, like \eg\ the {\em cardinality} of symmetric differences. (See Section~\ref{Comments} for a discussion on this.)

%%%%%%%%%%%%%%%%%%%%%%%%%%%%%%%%%%%%%%%%%%%%%%%%%%%
\section{Semantics of Action Theory Change}\label{SemanticsErasure}
%%%%%%%%%%%%%%%%%%%%%%%%%%%%%%%%%%%%%%%%%%%%%%%%%%%

	When admitting the possibility of a law $\modfml$ failing, one must ensure that $\modfml$ becomes invalid, \ie, not true in at least one model of the dynamic domain. Because there can be lots of such models, we may have a {\em set} $\ModelSet$ of models in which $\modfml$ is (potentially) valid. Thus contracting $\modfml$ amounts to making it no longer valid in this set of models. What are the operations that must be carried out to achieve that? Throwing models out of $\ModelSet$ does not work, since $\modfml$ will keep on being valid in all models of the remaining set. Thus one should {\em add} new models to $\ModelSet$. Which models? Well, models in which $\modfml$ is not true. But not any of such models: taking models falsifying $\modfml$ that are too different from our original models will certainly violate minimal change.
	
	Hence, we shall take some model $\model\in\ModelSet$ as basis and manipulate it to get a new model $\model'$ in which $\modfml$ is not true. In dynamic logic, the removal of a law $\modfml$ from a model $\model=\TransStruct{}$ means modifying the possible worlds or the accessibility relation in $\model$ so that $\modfml$ becomes false. Such an operation gives as result a {\em set} $\ErasureModels{\model}{\modfml}$ of models each of which is no longer a model of $\modfml$. But if there are several candidates, which ones should we choose? We shall take those models that are {\em minimal} modifications of the original $\model$, \ie, those minimal \wrt\ $\submodel{\model}$. Note that there can be more than one $\model'$ that is minimal. Hence, because adding just one of these new models is enough to invalidate $\modfml$, we take all possible combinations $\ModelSet\cup\{\model'\}$ of expanding our original set of models by one of these minimal models. The result will be a {\em set of sets of models}. In each set of models there will be one $\model'$ falsifying $\modfml$.

\subsection{Model Contraction of Executability Laws}\label{ModelContractionExec}

	To contract an executability law $\antec\imp\poss\act\top$ from {\em one} model, one intuitively {\em removes arrows} leaving $\antec$-worlds. In order to succeed in the operation, we have to guarantee that in the resulting model there will be at least one $\antec$-world with no departing $\act$-arrow.

\begin{definition}
\label{DefErasureOneModelExe}
Let $\model=\TransStruct{}$. $\model'=\TransStructP{}\in\ErasureModels{\model}{\antec\imp\poss\act\top}$ if and only if 
\begin{itemize}
\item $\Worlds'=\Worlds$
\item $\AccRel'\subseteq\AccRel$
\item If $(w,w')\in\AccRel\setminus\AccRel'$, then $\wMvalid{w}{\model}\antec$
\item There is $w\in\Worlds'$\ \suchthat\ $\notwMvalid{w}{\model'}\antec\imp\poss\act\top$
\end{itemize}
\end{definition}	

	Observe that $\ErasureModels{\model}{\antec\imp\poss\act\top}\neq\emptyset$ if and only if $\antec$ is satisfiable in $\Worlds$. Moreover, $\model\in\ErasureModels{\model}{\antec\imp\poss\act\top}$ if and only if $\notMvalid{\model}\antec\imp\poss\act\top$.

\myskip

	To get minimal change, we want such an operation to be minimal \wrt\ the original model: one should remove a minimum set of arrows sufficient to get the desired result.

\begin{definition}\label{DefMinimalErasureExe}
$\erasure{\model}{\antec\imp\poss\act\top}=\bigcup\min\{\ErasureModels{\model}{\antec\imp
\poss\act\top},\submodel{\model}\}$
\end{definition}

	And now we define the sets of possible models resulting from the contraction of an executability law in a set of models:

\begin{definition}\label{DefErasureModelsExe}
Let $\ModelSet$ be a set of models, and $\antec\imp\poss\act\top$ an executability law. Then
\[
\ErasureModels{\ModelSet}{\antec\imp\poss\act\top}=\{\ModelSet' : \ModelSet'=\ModelSet\cup\{\model'\}, \model'\in\erasure{\model}{\antec\imp\poss\act\top}, \model\in\ModelSet\}
\]
\end{definition}

	In our running example, consider $\ModelSet=\{\model\}$, where $\model$ is the model in Figure~\ref{ExBigModelCoffeeMachine}. When the agent discovers that even with a token she does not manage to buy a coffee any more, she has to change her models in order to admit (new) models with states where $\Token$ is the case but from which there is no \buy-transition at all. Because having just one such world in each new model is enough, taking those resulting models whose accessibility relations are maximal guarantees minimal change. Hence we will have $\ErasureModels{\ModelSet}{\Token\imp\poss\buy\top}=\{\ModelSet\cup\{\model'_{1}\},\ModelSet\cup\{\model'_{2}\},\ModelSet\cup\{\model'_{3}\}\}$, where each $\model'_{i}$ is depicted in Figure~\ref{ExErasureExecModelCoffeeMachine}.

\begin{figure}[h]
\begin{center}
\parbox{1.5cm}{
$\model'_{1}$ :
}\parbox{5.3cm}{
\scriptsize{
\begin{picture}(35,35)(0,0) % (0,0) Ž embaixo ˆ esquerda, (50,50) Ž acima ˆ direita
    \gasset{Nw=9,Nh=4,linewidth=0.3}
    \thinlines
    \node(A1)(10,15){$\AToken,\ACoffee,\AHot$}
    \node[Nw=10](A2)(20,30){$\neg\AToken,\ACoffee,\AHot$}
    \node[Nw=10](A3)(30,15){$\AToken,\neg\ACoffee,\AHot$}
    \node[Nw=13](A4)(0,0){$\neg\AToken,\neg\ACoffee,\neg\AHot$}
    \node[Nw=12](A5)(40,0){$\neg\AToken,\neg\ACoffee,\AHot$}
    \node[Nw=12](A6)(20,0){$\AToken,\neg\ACoffee,\neg\AHot$}
 %\drawedge[ELside=l](A1,A2){$\Abuy$}
    \drawedge[ELside=r](A3,A2){$\Abuy$}
    \drawedge[ELside=r](A6,A2){$\Abuy$}
%    \drawedge[curvedepth=-3,linewidth=0.4,ELside=r](A1,A3){$\act_{1}$}
%    \drawedge[curvedepth=3,linewidth=0.4,ELside=l](A1,A3){$\act_{2}$}
%    \drawedge[linewidth=0.4,ELside=l](A2,A3){$\act_{1}$}
%    \drawloop[linewidth=0.4,loopangle=90](A2){$\act_{1}$}
%    \drawloop[linewidth=0.4,loopangle=270](A3){$\act_{2}$}
\end{picture}
} % End \scriptsize
}\parbox{1.5cm}{
$\model'_{2}$ :
}\parbox{5.3cm}{
\scriptsize{
\begin{picture}(35,35)(0,0) % (0,0) Ž embaixo ˆ esquerda, (50,50) Ž acima ˆ direita
    \gasset{Nw=9,Nh=4,linewidth=0.3}
    \thinlines
    \node(A1)(10,15){$\AToken,\ACoffee,\AHot$}
    \node[Nw=10](A2)(20,30){$\neg\AToken,\ACoffee,\AHot$}
    \node[Nw=10](A3)(30,15){$\AToken,\neg\ACoffee,\AHot$}
    \node[Nw=13](A4)(0,0){$\neg\AToken,\neg\ACoffee,\neg\AHot$}
    \node[Nw=12](A5)(40,0){$\neg\AToken,\neg\ACoffee,\AHot$}
    \node[Nw=12](A6)(20,0){$\AToken,\neg\ACoffee,\neg\AHot$}
    \drawedge[ELside=l](A1,A2){$\Abuy$}
 %\drawedge[ELside=r](A3,A2){$\Abuy$}
    \drawedge[ELside=r](A6,A2){$\Abuy$}
%    \drawedge[curvedepth=-3,linewidth=0.4,ELside=r](A1,A3){$\act_{1}$}
%    \drawedge[curvedepth=3,linewidth=0.4,ELside=l](A1,A3){$\act_{2}$}
%    \drawedge[linewidth=0.4,ELside=l](A2,A3){$\act_{1}$}
%    \drawloop[linewidth=0.4,loopangle=90](A2){$\act_{1}$}
%    \drawloop[linewidth=0.4,loopangle=270](A3){$\act_{2}$}
\end{picture}
} % End \scriptsize
}

\bigskip

\parbox{1.5cm}{
$\model'_{3}$ :
}\parbox{5.3cm}{
\scriptsize{
\begin{picture}(35,35)(0,0) % (0,0) Ž embaixo ˆ esquerda, (50,50) Ž acima ˆ direita
    \gasset{Nw=9,Nh=4,linewidth=0.3}
    \thinlines
    \node(A1)(10,15){$\AToken,\ACoffee,\AHot$}
    \node[Nw=10](A2)(20,30){$\neg\AToken,\ACoffee,\AHot$}
    \node[Nw=10](A3)(30,15){$\AToken,\neg\ACoffee,\AHot$}
    \node[Nw=13](A4)(0,0){$\neg\AToken,\neg\ACoffee,\neg\AHot$}
    \node[Nw=12](A5)(40,0){$\neg\AToken,\neg\ACoffee,\AHot$}
    \node[Nw=12](A6)(20,0){$\AToken,\neg\ACoffee,\neg\AHot$}
    \drawedge[ELside=l](A1,A2){$\Abuy$}
    \drawedge[ELside=r](A3,A2){$\Abuy$}
 %\drawedge[ELside=r](A6,A2){$\Abuy$}
%    \drawedge[curvedepth=-3,linewidth=0.4,ELside=r](A1,A3){$\act_{1}$}
%    \drawedge[curvedepth=3,linewidth=0.4,ELside=l](A1,A3){$\act_{2}$}
%    \drawedge[linewidth=0.4,ELside=l](A2,A3){$\act_{1}$}
%    \drawloop[linewidth=0.4,loopangle=90](A2){$\act_{1}$}
%    \drawloop[linewidth=0.4,loopangle=270](A3){$\act_{2}$}
\end{picture}
} % End \scriptsize
} % End the last \parbox
\end{center}
\caption{Models resulting from contracting $\Token\imp\poss\buy\top$ in the model $\model$ of 
Figure~\ref{ExBigModelCoffeeMachine}.}
\label{ExErasureExecModelCoffeeMachine}
\end{figure}

	Clearly, if $\antec$ is not satisfied in $\ModelSet$, \ie, $\Mvalid{\model}\neg\antec$ for all $\model\in\ModelSet$, then the contraction of $\antec\imp\poss\act\top$ does not succeed. In this case, $\neg\antec$ should be contracted from the set of models (see further in this section).

\subsection{Model Contraction of Effect Laws}\label{ModelContractionEff}

	When the agent discovers that there may be cases where after buying she gets no hot 
beverage, she must \eg\ give up the belief $\Token\imp\nec\buy\Hot$ in her set of models. This 
means that $\Token\et\poss\buy\neg\Hot$ shall now be admitted in at least one world of some of her new models of beliefs. Hence, to contract an effect law $\antec\imp\nec\act\conseq$ from a given model, intuitively we have to {\em add arrows} leaving $\antec$-worlds to worlds satisfying $\neg\conseq$. The challenge in such an operation is how to guarantee minimal change.

	In our example, when contracting $\Token\imp\nec\buy\Hot$ in the model of Figure~\ref{ExBigModelCoffeeMachine} we add arrows from $\Token$-worlds to $\neg\Hot$-worlds. Because $\Coffee\imp\Hot$, and then $\neg\Hot\imp\neg\Coffee$, this should also give $\poss\buy\neg\Coffee$ in some $\Token$-world ($\neg\Coffee$ is {\em relevant} to $\neg\Hot$, \ie, to have $\neg\Hot$ we must have $\neg\Coffee$). This means that if we allow for $\poss\buy\neg\Hot$ in some \Token-world, we also have to allow for $\poss\buy\neg\Coffee$ in that same world.
	
	Hence, in our example one can add arrows from $\Token$-worlds to $\neg\Hot\et\neg\Coffee\et\Token$-worlds, as well as to $\neg\Hot\et\neg\Coffee\et\neg\Token$ (Figure~\ref{ExErasureEffectModelCoffeeMachine1}). For instance, one can add a \buy-arrow from $\{\Token,\neg\Coffee,\neg\Hot\}$ to one of these candidates (Figure~\ref{ExErasureEffectModelCoffeeMachine3}).
	
\begin{figure}[h]
\begin{center}
\parbox{1.5cm}{
$\model$ :
}\parbox{4cm}{
\scriptsize{
\begin{picture}(35,35)(0,0) % (0,0) Ž embaixo ˆ esquerda, (50,50) Ž acima ˆ direita
    \gasset{Nw=9,Nh=4,linewidth=0.3}
    \thinlines
    \node(A1)(10,15){$\AToken,\ACoffee,\AHot$}
    \node[Nw=10](A2)(20,30){$\neg\AToken,\ACoffee,\AHot$}
    \node[Nw=10](A3)(30,15){$\AToken,\neg\ACoffee,\AHot$}
    \node[Nw=13,dash={1}0](A4)(0,0){$\neg\AToken,\neg\ACoffee,\neg\AHot$}
    \node[Nw=12](A5)(40,0){$\neg\AToken,\neg\ACoffee,\AHot$}
    \node[Nw=12,dash={1}0](A6)(20,0){$\AToken,\neg\ACoffee,\neg\AHot$}
    \drawedge[ELside=l](A1,A2){$\Abuy$}
    \drawedge[ELside=r](A3,A2){$\Abuy$}
    \drawedge[ELside=r](A6,A2){$\Abuy$}
%    \drawedge[curvedepth=-3,linewidth=0.4,ELside=r](A1,A3){$\act_{1}$}
%    \drawedge[curvedepth=3,linewidth=0.4,ELside=l](A1,A3){$\act_{2}$}
%    \drawedge[linewidth=0.4,ELside=l](A2,A3){$\act_{1}$}
%    \drawloop[linewidth=0.4,loopangle=90](A2){$\act_{1}$}
%    \drawloop[linewidth=0.4,loopangle=270](A3){$\act_{2}$}
\end{picture}
} % End \scriptsize
} % End \parbox
\end{center}
\caption{Candidate worlds to receive arrows from \Token-worlds.}
\label{ExErasureEffectModelCoffeeMachine1}
\end{figure}
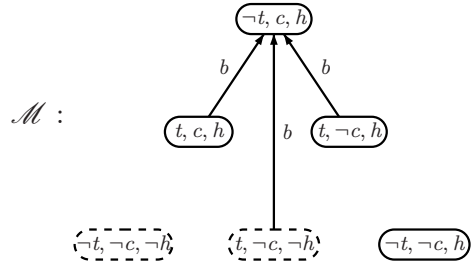

\begin{figure}[h]
\begin{center}
\parbox{1.5cm}{
$\model$ :
}\parbox{4cm}{
\scriptsize{
\begin{picture}(40,40)(0,0) % (0,0) Ž embaixo ˆ esquerda, (50,50) Ž acima ˆ direita
    \gasset{Nw=9,Nh=4,linewidth=0.3}
    \thinlines
    \node(A1)(10,18){$\AToken,\ACoffee,\AHot$}
    \node[Nw=10](A2)(20,33){$\neg\AToken,\ACoffee,\AHot$}
    \node[Nw=10](A3)(30,18){$\AToken,\neg\ACoffee,\AHot$}
    \node[Nw=13](A4)(0,3){$\neg\AToken,\neg\ACoffee,\neg\AHot$}
    \node[Nw=12](A5)(40,3){$\neg\AToken,\neg\ACoffee,\AHot$}
    \node[Nw=12](A6)(20,3){$\AToken,\neg\ACoffee,\neg\AHot$}
    \drawedge[ELside=l](A1,A2){$\Abuy$}
    \drawedge[ELside=r](A3,A2){$\Abuy$}
    \drawedge[ELside=r](A6,A2){$\Abuy$}
    \drawedge[ELside=r,dash={1}0](A6,A4){$\Abuy$}
    \drawloop[loopangle=270,dash={1}0](A6){$\Abuy$}
%    \drawedge[curvedepth=3,linewidth=0.4,ELside=l](A1,A3){$\act_{2}$}
%    \drawedge[linewidth=0.4,ELside=l](A2,A3){$\act_{1}$}
%    \drawloop[linewidth=0.4,loopangle=90](A2){$\act_{1}$}
%    \drawloop[linewidth=0.4,loopangle=270](A3){$\act_{2}$}
\end{picture}
} % End \scriptsize
} % End \parbox
\end{center}
\caption{Two candidate new \buy-arrows to falsify $\Token\imp\nec\buy\Hot$ in $\model$.}
\label{ExErasureEffectModelCoffeeMachine3}
\end{figure}

	Notice that adding the arrow to $\{\Token,\neg\Coffee,\neg\Hot\}$ itself would make us lose the effect $\neg\Token$, true after every execution of \buy\ in the original model ($\Mvalid{\model}\Token\imp\nec\buy\neg\Token$). How do we preserve this law while allowing for the new transition to a $\neg\Hot$-world? That is, how do we get rid of the effect $\Hot$ without losing effects that are not relevant for that? We here develop an approach for this issue.

\myskip

	When adding a new arrow leaving a world $w$ we intuitively want to preserve as many effects as we had before doing so. To achieve this, it is enough to preserve old effects only in $w$ (because the remaining structure of the model remains unchanged after adding the new arrow).  Of course, we cannot preserve effects that are inconsistent with $\neg\conseq$ (those will all be lost). So, it suffices to preserve only the effects that are consistent with $\neg\conseq$. To achieve that we must observe what is true in $w$ and in the target world $w'$:
\begin{itemize}
\item What changes from $w$ to $w'$ ($w'\setminus w$) must be what is obliged to do so: either because that is necessary to having $\neg\conseq$ in $w'$ or because that is necessary to having another effect (independent of $\neg\conseq$) in $w'$ that we want to preserve.
\item What does not change from $w$ to $w'$ ($w\cap w'$) should be what is allowed to do so: certain literals are never preserved (like $\Token$ in our example), then when pointing the arrow to a world where it does not change \wrt\ the leaving world ($\neg\Hot\et\neg\Coffee\et\Token$ in our example), we lose effects that held in $w$ before adding the arrow.
\end{itemize}

	This means that the only things allowed to change in the candidate target world must be those that are forced to change, either by some non-related law or because of having $\neg\conseq$ modulo a set of states $\Worlds$. In other words, we want the literals that change to be {\em at most} those that are sufficient to get $\neg\conseq$ modulo $\Worlds$, while preserving the maximum of effects. Every change outside that is not an intended one. Similarly, we want the literals that are preserved in the target world to be {\em at most} those that are usually preserved in a given set of models. Every preservation outside those may make us lose some law. This looks like prime implicants, and  that is where prime subvaluations play their role: the worlds to which the new arrow will point are those whose difference \wrt\ the departing world are literals that are relevant and whose similarity \wrt\ it are literals that we know do not change.

\begin{definition}[Relevant Target Worlds]
Let $\model=\TransStruct{}$ be a model, $w,w'\in\Worlds$, $\ModelSet$ a set of models such that $\model\in\ModelSet$, and $\antec\imp\nec\act\conseq$ an effect law. Then $w'$ is a {\em relevant target world of} $w$ \wrt\ $\antec\imp\nec\act\conseq$ for $\model$ in $\ModelSet$ if and only if
\begin{itemize}
\item $\wMvalid{w}{\model}\antec$, $\notwMvalid{w'}{\model}\conseq$
\item for all $\lit\in w'\setminus w$
\begin{itemize}
\item either there is $\val\in\base{\neg\conseq,\Worlds}$ \suchthat\ $\val\subseteq w'$ and $\lit\in\val$
\item or there is $\conseq'\in\Fml$ \suchthat\ there is $\val'\in\base{\conseq',\Worlds}$ \suchthat\ $\val'\subseteq w'$, $\lit\in\val'$, and for every $\model_{i}\in\ModelSet$, $\wMvalid{w}{\model_{i}}\nec\act\conseq'$
\end{itemize}
\item for all $\lit\in w\cap w'$
\begin{itemize}
\item either there is $\val\in\base{\neg\conseq,\Worlds}$ \suchthat\ $\val\subseteq w'$ and $\lit\in\val$
\item or there is $\model_{i}\in\ModelSet$ such that $\notwMvalid{w}{\model_{i}}\nec\act\neg\lit$
\end{itemize}
\end{itemize}
By $\RelTarget{w,\antec\imp\nec\act\conseq,\model,\ModelSet}$ we denote the set of all relevant target worlds of $w$ \wrt\ $\antec\imp\nec\act\conseq$ for $\model$ in $\ModelSet$.
\end{definition}

	Note that we need the set of models $\ModelSet$ (and here we can suppose it contains all models of the theory we want to change) because preserving effects depends on what other effects hold in the other models that interest us. We need to take them into account in the local operation of changing one model:\footnote{The reason we do not need $\ModelSet$ in the definition of the local (one model) contraction of executability laws $\ErasureModels{\model}{\antec\imp\poss\act\top}$ is that when removing arrows there is no way of losing effects, as every effect law that held in the world from which an arrow has been removed remains true in the same world in the resulting model.}

\begin{definition}\label{DefErasureOneModelEff}
Let $\model=\TransStruct{}$, and $\ModelSet$ be such that $\model\in\ModelSet$. Then $\model'=\TransStructP{}\in\ErasureModels{\model}{\antec\imp\nec\act\conseq}$ if and only if 
\begin{itemize}
\item $\Worlds'=\Worlds$
\item $\AccRel\subseteq\AccRel'$
\item If $(w,w')\in\AccRel'\setminus\AccRel$, then $w'\in\RelTarget{w,\antec\imp\nec\act\conseq,\model,\ModelSet}$
\item There is $w\in\Worlds'$\ \suchthat\ $\notwMvalid{w}{\model'}\antec\imp\nec\act\conseq$
\end{itemize}
\end{definition}

	Observe that $\ErasureModels{\model}{\antec\imp\nec\act\conseq}\neq\emptyset$ if and only if $\antec$ and $\neg\conseq$ are both satisfiable in $\Worlds$. Moreover, $\model\in\ErasureModels{\model}{\antec\imp\nec\act\conseq}$ if and only if $\notMvalid{\model}\antec\imp\nec\act\conseq$.

\myskip

	Because having just one world where the law is no longer true in each model is enough, taking those resulting models whose accessibility relations are minimal \wrt\ the original one guarantees minimal change.

\begin{definition}\label{DefMinimalErasureEff}
$\erasure{\model}{\antec\imp\nec\act\conseq}=\bigcup\min\{\ErasureModels{\model}{\antec\imp\nec\act\conseq},\submodel{\model}\}$
\end{definition}

	Now we can define the possible sets of models resulting from contracting an effect law from a set of models:

\begin{definition}\label{DefErasureModelsEff}
Let $\ModelSet$ be a set of models, and $\antec\imp\nec\act\conseq$ an effect law. Then
\[
\ErasureModels{\ModelSet}{\antec\imp\nec\act\conseq}=\{\ModelSet' : \ModelSet'=\ModelSet\cup\{\model'\}, \model'\in\erasure{\model}{\antec\imp\nec\act\conseq}, \model\in\ModelSet\}
\]
\end{definition}

	Taking again $\ModelSet=\{\model\}$, where $\model$ is the model in Figure~\ref{ExBigModelCoffeeMachine}, after contracting $\Token\imp\nec\buy\Hot$ from $\ModelSet$ we get $\ErasureModels{\ModelSet}{\Token\imp\nec\buy\Hot}=\{\ModelSet\cup\{\model'_{1}\},\ModelSet\cup\{\model'_{2}\},\ModelSet\cup\{\model'_{3}\}\}$, where all $\model'_{i}$s are as depicted in Figure~\ref{ExErasureEffectModelCoffeeMachine2}.

\begin{figure}[h]
\begin{center}
\parbox{1.5cm}{
$\model'_{1}$ :
}\parbox{5.3cm}{
\scriptsize{
\begin{picture}(35,35)(0,0) % (0,0) Ž embaixo ˆ esquerda, (50,50) Ž acima ˆ direita
    \gasset{Nw=9,Nh=4,linewidth=0.3}
    \thinlines
    \node(A1)(10,15){$\AToken,\ACoffee,\AHot$}
    \node[Nw=10](A2)(20,30){$\neg\AToken,\ACoffee,\AHot$}
    \node[Nw=10](A3)(30,15){$\AToken,\neg\ACoffee,\AHot$}
    \node[Nw=13](A4)(0,0){$\neg\AToken,\neg\ACoffee,\neg\AHot$}
    \node[Nw=12](A5)(40,0){$\neg\AToken,\neg\ACoffee,\AHot$}
    \node[Nw=12](A6)(20,0){$\AToken,\neg\ACoffee,\neg\AHot$}
    \drawedge[ELside=l](A1,A2){$\Abuy$}
    \drawedge[ELside=r](A3,A2){$\Abuy$}
    \drawedge[ELside=r](A6,A2){$\Abuy$}
    \drawedge[ELside=r](A1,A4){$\Abuy$}
%    \drawedge[curvedepth=-3,linewidth=0.4,ELside=r](A1,A3){$\act_{1}$}
%    \drawedge[curvedepth=3,linewidth=0.4,ELside=l](A1,A3){$\act_{2}$}
%    \drawedge[linewidth=0.4,ELside=l](A2,A3){$\act_{1}$}
%    \drawloop[linewidth=0.4,loopangle=90](A2){$\act_{1}$}
%    \drawloop[linewidth=0.4,loopangle=270](A3){$\act_{2}$}
\end{picture}
} % End \scriptsize
}\parbox{1.5cm}{
$\model'_{2}$ :
}\parbox{5.3cm}{
\scriptsize{
\begin{picture}(35,35)(0,0) % (0,0) Ž embaixo ˆ esquerda, (50,50) Ž acima ˆ direita
    \gasset{Nw=9,Nh=4,linewidth=0.3}
    \thinlines
    \node(A1)(10,15){$\AToken,\ACoffee,\AHot$}
    \node[Nw=10](A2)(20,30){$\neg\AToken,\ACoffee,\AHot$}
    \node[Nw=10](A3)(30,15){$\AToken,\neg\ACoffee,\AHot$}
    \node[Nw=13](A4)(0,0){$\neg\AToken,\neg\ACoffee,\neg\AHot$}
    \node[Nw=12](A5)(40,0){$\neg\AToken,\neg\ACoffee,\AHot$}
    \node[Nw=12](A6)(20,0){$\AToken,\neg\ACoffee,\neg\AHot$}
    \drawedge[ELside=l](A1,A2){$\Abuy$}
    \drawedge[ELside=r](A3,A2){$\Abuy$}
    \drawedge[ELside=r](A6,A2){$\Abuy$}
    \drawedge[ELside=r](A6,A4){$\Abuy$}
%    \drawedge[curvedepth=-3,linewidth=0.4,ELside=r](A1,A3){$\act_{1}$}
%    \drawedge[curvedepth=3,linewidth=0.4,ELside=l](A1,A3){$\act_{2}$}
%    \drawedge[linewidth=0.4,ELside=l](A2,A3){$\act_{1}$}
%    \drawloop[linewidth=0.4,loopangle=90](A2){$\act_{1}$}
%    \drawloop[linewidth=0.4,loopangle=270](A3){$\act_{2}$}
\end{picture}
} % End \scriptsize
}

\bigskip

\parbox{1.5cm}{
$\model'_{3}$ :
}\parbox{5.3cm}{
\scriptsize{
\begin{picture}(35,35)(0,0) % (0,0) Ž embaixo ˆ esquerda, (50,50) Ž acima ˆ direita
    \gasset{Nw=9,Nh=4,linewidth=0.3}
    \thinlines
    \node(A1)(10,15){$\AToken,\ACoffee,\AHot$}
    \node[Nw=10](A2)(20,30){$\neg\AToken,\ACoffee,\AHot$}
    \node[Nw=10](A3)(30,15){$\AToken,\neg\ACoffee,\AHot$}
    \node[Nw=13](A4)(0,0){$\neg\AToken,\neg\ACoffee,\neg\AHot$}
    \node[Nw=12](A5)(40,0){$\neg\AToken,\neg\ACoffee,\AHot$}
    \node[Nw=12](A6)(20,0){$\AToken,\neg\ACoffee,\neg\AHot$}
    \drawedge[ELside=l](A1,A2){$\Abuy$}
    \drawedge[ELside=r](A3,A2){$\Abuy$}
    \drawedge[ELside=r](A6,A2){$\Abuy$}
    \drawedge[ELside=r](A3,A4){$\Abuy$}
%    \drawedge[curvedepth=-3,linewidth=0.4,ELside=r](A1,A3){$\act_{1}$}
%    \drawedge[curvedepth=3,linewidth=0.4,ELside=l](A1,A3){$\act_{2}$}
%    \drawedge[linewidth=0.4,ELside=l](A2,A3){$\act_{1}$}
%    \drawloop[linewidth=0.4,loopangle=90](A2){$\act_{1}$}
%    \drawloop[linewidth=0.4,loopangle=270](A3){$\act_{2}$}
\end{picture}
} % End \scriptsize
} % End the last \parbox
\end{center}
\caption{Models resulting from contracting $\Token\imp\nec\buy\Hot$ in the model $\model$ of 
Figure~\ref{ExBigModelCoffeeMachine}.}
\label{ExErasureEffectModelCoffeeMachine2}
\end{figure}

\myskip

	In both cases where $\antec$ is not satisfiable in $\model$ or $\conseq$ is valid in $\model$, of course our operator does not succeed in falsifying $\antec\imp\nec\act\conseq$ (\cf\ end of Section~\ref{ModelContractionExec}). Intuitively, prior to doing that we have to change our set of possible states. This is what we address in the next section.

\subsection{Model Contraction of Static Laws}\label{ModelContractionStat}

	When contracting a static law from a model, we want to admit the existence of at least one (new) possible state falsifying it. This means that intuitively we should {\em add new worlds} to the original model. This is quite easy. A very delicate issue however is what to do with the accessibility relation: should new arrows leave/arrive at the new world? If no arrow leaves the new added world, we may lose some executability law. If some arrow leaves it, then we may lose some effect law, the same holding if we add an arrow pointing to the new world. On the other hand, if no arrow arrives at the new world, what about the intuition? Is it intuitive to have an unreachable state?

	All this discussion shows how drastic a change in the static laws may be: it is a change in the 
underlying structure (possible states) of the world! Changing it may have as consequence the loss of an effect law or an executability law. What we can do is choose which laws we accept to lose and postpone their change (by the other operators). Following the tradition in the RAA community which states that executability laws are, in general, more difficult to formalize than effect laws, and hence are more likely to be incorrect, here we prefer not to change the accessibility relation, which means preserving effect laws and postponing correction of executability laws, if needed. (\cf\ Sections~\ref{SyntaxStatErasure} and~\ref{Conclusion} below).

\begin{definition}\label{DefErasureOneModelStat}
Let $\model=\TransStruct{}$. $\model'=\TransStructP{}\in\ErasureModels{\model}{\fml}$ if and 
only if 
\begin{itemize}
\item $\Worlds\subseteq\Worlds'$
\item $\AccRel=\AccRel'$
\item There is $w\in\Worlds'$\ \suchthat\ $\notwMvalid{w}{\model'}\fml$
\end{itemize}
\end{definition}

	Notice that we have $\ErasureModels{\model}{\fml}=\emptyset$ if and only if $\fml$ is a tautology. Moreover, $\model\in\ErasureModels{\model}{\fml}$ if and only if $\notMvalid{\model}\fml$.

\myskip

	The minimal modifications of one model are defined as usual:

\begin{definition}\label{DefMinimalErasureStat}
$\erasure{\model}{\fml}=\bigcup\min\{\ErasureModels{\model}{\fml},\submodel{\model}\}$
\end{definition}

	And now we define the sets of models resulting from contracting a static law from a given set of models:

\begin{definition}\label{DefErasureModelsStat}
Let $\ModelSet$ be a set of models, and $\fml$ a static law. Then
\[
\ErasureModels{\ModelSet}{\fml}=\{\ModelSet' : \ModelSet'=\ModelSet\cup\{\model'\}, \model'\in\erasure{\model}{\fml}, \model\in\ModelSet\}
\]
\end{definition}

	In our scenario example, if $\ModelSet=\{\model\}$, where $\model$ is the model in Figure~\ref{ExBigModelCoffeeMachine}, then contracting $\Coffee\imp\Hot$ from $\ModelSet$ would give us  $\ErasureModels{\ModelSet}{\Coffee\imp\Hot}=\{\ModelSet\cup\{\model'_{1}\},\ModelSet\cup\{\model'_{2}\}\}$, where each $\model'_{i}$ is as depicted in Figure~\ref{ExErasureStatModelCoffeeMachine}.

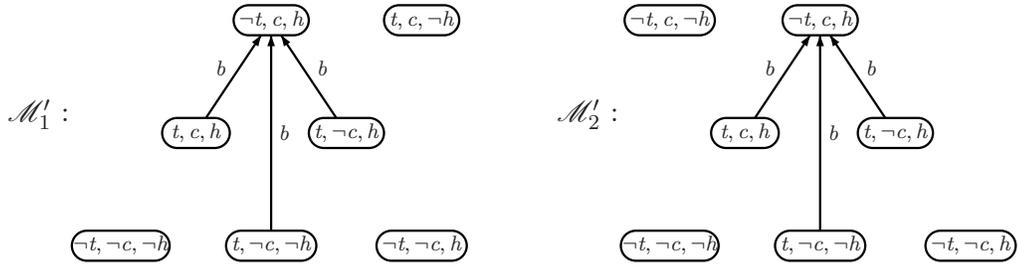
\begin{figure}[h]
\begin{center}
\parbox{1.5cm}{
$\model'_{1}$ :
}\parbox{5.8cm}{
\scriptsize{
\begin{picture}(35,35)(0,0) % (0,0) Ž embaixo ˆ esquerda, (50,50) Ž acima ˆ direita
    \gasset{Nw=9,Nh=4,linewidth=0.3}
    \thinlines
    \node(A1)(10,15){$\AToken,\ACoffee,\AHot$}
    \node[Nw=10](A2)(20,30){$\neg\AToken,\ACoffee,\AHot$}
    \node[Nw=10](A3)(30,15){$\AToken,\neg\ACoffee,\AHot$}
    \node[Nw=13](A4)(0,0){$\neg\AToken,\neg\ACoffee,\neg\AHot$}
    \node[Nw=12](A5)(40,0){$\neg\AToken,\neg\ACoffee,\AHot$}
    \node[Nw=12](A6)(20,0){$\AToken,\neg\ACoffee,\neg\AHot$}
    \node[Nw=10](A7)(40,30){$\AToken,\ACoffee,\neg\AHot$} 
    \drawedge[ELside=l](A1,A2){$\Abuy$}
    \drawedge[ELside=r](A3,A2){$\Abuy$}
    \drawedge[ELside=r](A6,A2){$\Abuy$}
%    \drawedge[curvedepth=-3,linewidth=0.4,ELside=r](A1,A3){$\act_{1}$}
%    \drawedge[curvedepth=3,linewidth=0.4,ELside=l](A1,A3){$\act_{2}$}
%    \drawedge[linewidth=0.4,ELside=l](A2,A3){$\act_{1}$}
%    \drawloop[linewidth=0.4,loopangle=90](A2){$\act_{1}$}
%    \drawloop[linewidth=0.4,loopangle=270](A3){$\act_{2}$}
\end{picture}
} % End \scriptsize
}\parbox{1.5cm}{
$\model'_{2}$ :
}\parbox{5.3cm}{
\scriptsize{
\begin{picture}(35,35)(0,0) % (0,0) Ž embaixo ˆ esquerda, (50,50) Ž acima ˆ direita
    \gasset{Nw=9,Nh=4,linewidth=0.3}
    \thinlines
    \node(A1)(10,15){$\AToken,\ACoffee,\AHot$}
    \node[Nw=10](A2)(20,30){$\neg\AToken,\ACoffee,\AHot$}
    \node[Nw=10](A3)(30,15){$\AToken,\neg\ACoffee,\AHot$}
    \node[Nw=13](A4)(0,0){$\neg\AToken,\neg\ACoffee,\neg\AHot$}
    \node[Nw=12](A5)(40,0){$\neg\AToken,\neg\ACoffee,\AHot$}
    \node[Nw=12](A6)(20,0){$\AToken,\neg\ACoffee,\neg\AHot$}
    \node[Nw=12](A7)(0,30){$\neg\AToken,\ACoffee,\neg\AHot$}    
    \drawedge[ELside=l](A1,A2){$\Abuy$}
    \drawedge[ELside=r](A3,A2){$\Abuy$}
    \drawedge[ELside=r](A6,A2){$\Abuy$}
%    \drawedge[curvedepth=-3,linewidth=0.4,ELside=r](A1,A3){$\act_{1}$}
%    \drawedge[curvedepth=3,linewidth=0.4,ELside=l](A1,A3){$\act_{2}$}
%    \drawedge[linewidth=0.4,ELside=l](A2,A3){$\act_{1}$}
%    \drawloop[linewidth=0.4,loopangle=90](A2){$\act_{1}$}
%    \drawloop[linewidth=0.4,loopangle=270](A3){$\act_{2}$}
\end{picture}
} % End \scriptsize
} % End \parbox
\end{center}
\caption{Models resulting from contracting $\Coffee\imp\Hot$ in the model $\model$ of Figure~
\ref{ExBigModelCoffeeMachine}.}
\label{ExErasureStatModelCoffeeMachine}
\end{figure}

	Notice that by not modifying the accessibility relation all the effect laws are preserved with minimal change. Moreover, our approach is in line with intuition: when learning that a new state is now possible, we do not necessarily know all the behavior of the actions in the new added state. We may expect some action laws to hold in the new state (see Section~\ref{Conclusion} for an alternative solution), but, with the information we dispose, not touching the accessibility relation is the safest way of contracting static laws.

%%%%%%%%%%%%%%%%%%%%%%%%%%%%%%%%%%%%%%%%%%%%%%%%%%%
\section{Syntactic Operators for Contraction of Laws}\label{SyntaxErasure}
%%%%%%%%%%%%%%%%%%%%%%%%%%%%%%%%%%%%%%%%%%%%%%%%%%%

	Now that we have defined the semantics of our theory change, we turn our attention to the definition of syntactic operators for changing sets of formulas.
	
	As Nebel~\cite{Nebel89} says, ``[\ldots] finite bases usually represent [\ldots] laws, and when we are forced to change the theory we would like to stay as close as possible to the original [\ldots] base.'' Hence, besides the definition of syntactical operators, we should also guarantee that they  perform minimal change.

\myskip

	By $\ErasureSet{\Theory{}}{\modfml}$ we denote in the sequel  the result of contracting a law $\modfml$ from the set of laws~$\Theory{}$.

\subsection{Contracting Executability Laws}\label{SyntacticContractionExec}

	For the case of contracting $\antec\imp\poss\act\top$ from an action theory, first we have to ensure that the action $\act$ is still executable in all those contexts where $\neg\antec$ is the case. Second, in order to get minimality, we must make $\act$ executable in {\em some} contexts where $\antec$ is true, \viz\ all $\antec$-worlds but one. This means that we can have several action theories as outcome. Algorithm~\ref{ExecOperator} gives a syntactical operator to achieve this.

\begin{algorithm}
\caption{Erasure of an executability law}
\label{ExecOperator}
\begin{algorithmic}[1]
\onehalfspacing
\INPUT $\Theory{}$, $\antec\imp\poss\act\top$
\OUTPUT $\ErasureSet{\Theory{}}{\antec\imp\poss\act\top}$
\STATE $\ErasureSet{\Theory{}}{\antec\imp\poss\act\top}\GETS\ \emptyset$
\IF{$\Theory{}\PDLvalid\antec\imp\poss\act\top$}
	\FORALL{$\term\in\IP{\STAT{}{}\et\antec}$}
		\FORALL{$\Set\subseteq\compatterm$}
			\STATE $\antec_{\Set}\GETS\ \bigwedge_{\stackscript{\prp_{i}\in\compatterm}{\prp_{i}\in\Set}}\prp_{i}\et\bigwedge_{\stackscript{\prp_{i}\in\compatterm}{\prp_{i}\notin\Set}}\neg\prp_{i}$
			\IF{$\STAT{}{}\notCPLvalid(\term\et\antec_{\Set})\imp\bot$}
				\STATE \parbox{4cm}{\[ \Theory{}'\GETS\ \left( \begin{array}{l}
				(\Theory{}\setminus\EXE{}{\act})\ \cup \\
				\{(\antec_{i}\et\neg(\term\et\antec_{\Set}))\imp\poss\act\top : 
				\antec_{i}\imp\poss\act\top \in\EXE{}{\act}\}
				\end{array}
				\right)
				\]
				} % End \parbox
			\ENDIF
			\STATE $\ErasureSet{\Theory{}}{\antec\imp\poss\act\top}\GETS\ 
			       \ErasureSet{\Theory{}}{\antec\imp\poss\act\top}\cup\{\Theory{}'\}$
		\ENDFOR
	\ENDFOR
	\ELSE 
		\STATE $\ErasureSet{\Theory{}}{\antec\imp\poss\act\top}\GETS\ \{\Theory{}\}$
\ENDIF
\RETURN $\ErasureSet{\Theory{}}{\antec\imp\poss\act\top}$
\end{algorithmic}
\end{algorithm}
\singlespacing

	Observe that from the finiteness of $\Theory{}$ and that of $\compatterm$, for any $\term\in\IP{\STAT{}{}\et\antec}$, and the decidability of \PDL~\cite{HarelEtAl00} and of classical propositional logic, it follows that Algorithm~\ref{ExecOperator} terminates.

\myskip

	In our running example, contracting the executability law $\Token\imp\poss\buy\top$ from the action theory $\Theory{}$ would give us $\ErasureSet{\Theory{}}{\Token\imp\poss\buy\top}=\{\Theory{1}',\Theory{2}',\Theory{3}'\}$, where:

\[
\Theory{1}'=\left\{
  \begin{array}{c}
	 \Coffee\imp\Hot, \neg\Coffee\imp\nec\buy\Coffee, \\
	  \Token\imp\nec\buy\neg\Token, \neg\Token\imp\nec\buy\bot, \\
	  \Coffee\imp\nec\buy\Coffee, \Hot\imp\nec\buy\Hot, \\
	  (\Token\et\neg\Coffee\et\Hot)\imp\poss\buy\top, \\
	 (\Token\et\neg\Coffee\et\neg\Hot)\imp\poss\buy\top
  \end{array}
  \right\}
\]

\[
\Theory{2}'=\left\{
  \begin{array}{c}
	 \Coffee\imp\Hot, \neg\Coffee\imp\nec\buy\Coffee, \\
	 \Token\imp\nec\buy\neg\Token, \neg\Token\imp\nec\buy\bot, \\
	 \Coffee\imp\nec\buy\Coffee, \Hot\imp\nec\buy\Hot, \\
	  (\Token\et\Coffee\et\Hot)\imp\poss\buy\top, \\
	 (\Token\et\neg\Coffee\et\neg\Hot)\imp\poss\buy\top
  \end{array}
  \right\}
\]

\[
\Theory{3}'=\left\{
  \begin{array}{c}
	 \Coffee\imp\Hot, \neg\Coffee\imp\nec\buy\Coffee, \\
	 \Token\imp\nec\buy\neg\Token, \neg\Token\imp\nec\buy\bot, \\
	 \Coffee\imp\nec\buy\Coffee, \Hot\imp\nec\buy\Hot, \\
	  (\Token\et\Coffee\et\Hot)\imp\poss\buy\top, \\
	 (\Token\et\neg\Coffee\et\Hot)\imp\poss\buy\top
  \end{array}
  \right\}
\]

\myskip

	Now the knowledge engineer has only to choose which theory is more in line with her 
intuitions and implement the changes (\cf\ Figure~\ref{ExErasureExecModelCoffeeMachine}).

\subsection{Contracting Effect Laws}

	When contracting $\antec\imp\nec\act\conseq$ from a theory $\Theory{}$, intuitively we should contract some effect laws that preclude $\neg\conseq$ in target worlds. In order to cope with minimality, we must change only those laws that are relevant to $\antec\imp\nec\act\conseq$.

	Let $\EFF{\antec,\conseq}{\act}$ denote a minimum subset of $\EFF{}{\act}$ such that $\STAT{}{},\EFF{\antec,\conseq}{\act}\PDLvalid\antec\imp\nec\act\conseq$. In the case the theory is modular~\cite{HerzigVarzinczak-Aiml04Proc05} (see further), such a set always exists. Moreover, note that there can be more than one such a set, in which case we denote them $(\EFF{\antec,\conseq}{\act})_{1},\ldots,(\EFF{\antec,\conseq}{\act})_{n}$. Let
\[
\EFF{-}{\act}=\bigcup_{1\leq i\leq n}(\EFF{\antec,\conseq}{\act})_{i}
\]
The laws in $\EFF{-}{\act}$ will serve as guidelines to get rid of $\antec\imp\nec\act\conseq$ in the theory.

\myskip

	The first thing we must do is to ensure that action $\act$ still has effect $\conseq$ in all 
those contexts in which $\antec$ does not hold. This means we shall weaken the laws in $\EFF{\antec,\conseq}{\act}$ specializing them to $\neg\antec$. Now, we need to preserve all old effects in all $\antec$-worlds but one. To achieve that we specialize the above laws to each possible valuation (maximal conjunction of literals) satisfying $\antec$ but one. Then, in the left $\antec$-valuation, we must ensure that action $\act$ has either its old effects or $\neg\conseq$ as outcome. We achieve that by weakening the {\em consequent} of the laws in $\EFF{-}{\act}$. Finally, in order to get minimal change, we must ensure that all literals in this $\antec$-valuation that are not forced to change in $\neg\conseq$-worlds should be preserved. We do this by stating an effect law of the form $(\antec_{k}\et\lit)\imp\nec\act(\conseq\ou\lit)$, where $\antec_{k}$ is the above $\antec$-valuation. The reason this is needed is clear: there can be several $\neg\conseq$-valuations, and as far as we want at most one to be reachable from the $\antec_{k}$-world, we should force it to be the one whose difference to this $\antec_{k}$-valuation is minimal.

	Again, the result will be a set of action theories. Algorithm~\ref{EffectOperator} below gives the operator.

\begin{algorithm}
\caption{Contraction of an effect law}
\label{EffectOperator}
\begin{algorithmic}[1]
\onehalfspacing
\INPUT $\Theory{}$, $\antec\imp\nec\act\conseq$
\OUTPUT $\ErasureSet{\Theory{}}{\antec\imp\nec\act\conseq}$
\STATE $\ErasureSet{\Theory{}}{\antec\imp\nec\act\conseq}\GETS\ \emptyset$
\IF{$\Theory{}\PDLvalid\antec\imp\nec\act\conseq$}
\FORALL{$\term\in\IP{\STAT{}{}\et\antec}$}
	\FORALL{$\Set\subseteq\compatterm$}
		\STATE $\antec_{\Set}\GETS\ 
				\bigwedge_{\stackscript{\prp_{i}\in\compatterm}{\prp_{i}\in\Set}}\prp_{i}\et
				\bigwedge_{\stackscript{\prp_{i}\in\compatterm}{\prp_{i}\notin\Set}}\neg\prp_{i}$
		\IF{$\STAT{}{}\notCPLvalid(\term\et\antec_{\Set})\imp\bot$}
			\FORALL{$\term'\in\IP{\STAT{}{}\et\neg\conseq}$}
				\STATE \parbox{4cm}{\[ \Theory{}'\GETS\ 
						\left(\begin{array}{l}
						(\Theory{}\setminus\EFF{-}{\act})\ \cup \\
						\{(\antec_{i}\et\neg(\term\et\antec_{\Set})) \imp\nec\act\conseq_{i} : 
						\antec_{i}\imp\nec\act\conseq_{i} \in\EFF{-}{\act}\}\ \cup \\
						\{(\antec_{i}\et\term\et\antec_{\Set})
						\imp\nec\act(\conseq_{i}\ou\term') : 
						\antec_{i}\imp\nec\act\conseq_{i} \in\EFF{-}{\act}\}								\end{array}\right)
					\]
				} % End \parbox
				
				\FORALL{$L\subseteq\Lit$}
					\IF{$\STAT{}{}\CPLvalid
					(\term\et\antec_{\Set})\imp\bigwedge_{\lit\in L}\lit$ \AND\ 
					$\STAT{}{}\notCPLvalid(\term'\et\bigwedge_{\lit\in L}\lit)\imp\bot$}
					\FORALL{$\lit\in L$}
						\IF{$\Theory{}\notPDLvalid(\term\et\antec_{\Set}\et\lit)\imp
								\nec\act\neg\lit$ \OR\ $\lit\in\term'$}
								\STATE $\Theory{}'\GETS\ \Theory{}' \cup
							\{(\term\et\antec_{\Set}\et\lit)\imp\nec\act(\conseq\ou\lit)\}$
						\ENDIF
					\ENDFOR
					\ENDIF
				\ENDFOR
			\STATE $\ErasureSet{\Theory{}}{\antec\imp\nec\act\conseq}
			\GETS\ 
		       	\ErasureSet{\Theory{}}{\antec\imp\nec\act\conseq}\cup\{\Theory{}'\}$
		\ENDFOR
	   \ENDIF
	\ENDFOR
\ENDFOR
\ELSE
	\STATE $\ErasureSet{\Theory{}}{\antec\imp\nec\act\conseq}\GETS\ \{\Theory{}\}$
\ENDIF
\RETURN $\ErasureSet{\Theory{}}{\antec\imp\nec\act\conseq}$
\end{algorithmic}
\end{algorithm}
\singlespacing

	The reader is invited to check that Algorithm~\ref{EffectOperator} always terminates (\cf\ Section~\ref{SyntacticContractionExec}).
	
\myskip

	For an example of execution of the algorithm, suppose we want to contract the effect law $\Token\imp\nec\buy\Hot$ from our theory $\Theory{}$. We first determine the minimum sets of effect laws that together with $\STAT{}{}$ entail $\Token\imp\nec\buy\Hot$. They are
\[
(\EFF{\Token,\Hot}{\buy})_{1}=\left\{
  \begin{array}{c}
	\Coffee\imp\nec\buy\Coffee, \\
	\neg\Coffee\imp\nec\buy\Coffee
  \end{array}
\right\} 
\]\[
(\EFF{\Token,\Hot}{\buy})_{2}=\left\{
  \begin{array}{c}
	\Hot\imp\nec\buy\Hot, \\
	\neg\Coffee\imp\nec\buy\Coffee
  \end{array}
\right\}
\]

Now for each context where $\Token$ is the case, we weaken the effect laws in $\EFF{-}{\buy}=(\EFF{\Token,\Hot}{\buy})_{1}\cup(\EFF{\Token,\Hot}{\buy})_{2}$. Given $\STAT{}{}=\{\Coffee\imp\Hot\}$, such contexts are $\Token\et\Coffee\et\Hot$, $\Token\et\neg\Coffee\et\neg\Hot$ and $\Token\et\neg\Coffee\et\Hot$.

	For $\Token\et\Coffee\et\Hot$: we replace in $\Theory{}$ the laws from $\EFF{-}{\buy}$ by
\[
\left\{
  \begin{array}{c}
	(\Coffee\et\neg(\Token\et\Coffee\et\Hot))\imp\nec\buy\Coffee, \\
	(\Hot\et\neg(\Token\et\Coffee\et\Hot))\imp\nec\buy\Hot, \\
	(\neg\Coffee\et\neg(\Token\et\Coffee\et\Hot))\imp\nec\buy\Coffee
  \end{array}
  \right\}
\]so that we preserve their effects in all possible contexts but $\Token\et\Coffee\et\Hot$. Now, in order to preserve some effects in $\Token\et\Coffee\et\Hot$-contexts while allowing for reachable $\neg\Hot$-worlds, we add the laws:
\[
\left\{
  \begin{array}{c}
	(\Token\et\Coffee\et\Hot)\imp\nec\buy(\Coffee\ou\neg\Hot), \\
	(\Token\et\Coffee\et\Hot)\imp\nec\buy(\Hot\ou\neg\Coffee)
  \end{array}
\right\}
\]
Now, we search all possible combinations of laws from $\EFF{}{\buy}$ that apply on $\Token\et\Coffee\et\Hot$ contexts and find $\Token\imp\nec\buy\neg\Token$. Because $\neg\Token$ must be true after every execution of \buy, we do not state the law $(\Token\et\Coffee\et\Hot)\imp\nec\buy(\Hot\ou\Token)$, and end up with the theory:
\[
\Theory{1}'=\left\{
  \begin{array}{c}
	\Coffee\imp\Hot, \Token\imp\poss\buy\top, \\
	 \Token\imp\nec\buy\neg\Token, \neg\Token\imp\nec\buy\bot, \\
	(\Coffee\et\neg(\Token\et\Coffee\et\Hot))\imp\nec\buy\Coffee, \\
	(\Hot\et\neg(\Token\et\Coffee\et\Hot))\imp\nec\buy\Hot, \\
	(\neg\Coffee\et\neg(\Token\et\Coffee\et\Hot))\imp\nec\buy\Coffee, \\
	(\Token\et\Coffee\et\Hot)\imp\nec\buy(\Coffee\ou\neg\Hot), \\
	(\Token\et\Coffee\et\Hot)\imp\nec\buy(\Hot\ou\neg\Coffee)
  \end{array}
\right\}
\]
On the other hand, if in our language we also had an atom $\prp$ with the same theory $\Theory{}$, then we should add a law $(\Token\et\Coffee\et\Hot\et\prp)\imp\nec\buy(\Hot\ou\prp)$ to meet minimal change by preserving effects that are not relevant to $\neg\conseq$.

\myskip

The execution for contexts $\Token\et\neg\Coffee\et\neg\Hot$ and $\Token\et\neg\Coffee\et\Hot$ are analogous and the algorithm ends with $\ErasureSet{\Theory{}}{\Token\imp\nec\buy\Hot}=\{\Theory{1}',\Theory{2}',\Theory{3}'\}$, where:
\[
\Theory{2}'=\left\{
  \begin{array}{c}
	\Coffee\imp\Hot, \Token\imp\poss\buy\top, \\
	 \Token\imp\nec\buy\neg\Token, \neg\Token\imp\nec\buy\bot, \\
	(\Coffee\et\neg(\Token\et\neg\Coffee\et\neg\Hot))\imp\nec\buy\Coffee, \\
	(\Hot\et\neg(\Token\et\neg\Coffee\et\neg\Hot))\imp\nec\buy\Hot, \\
	(\neg\Coffee\et\neg(\Token\et\neg\Coffee\et\neg\Hot))\imp\nec\buy\Coffee, \\
	(\Token\et\neg\Coffee\et\neg\Hot)\imp\nec\buy(\Coffee\ou\neg\Hot)
  \end{array}
  \right\}
\]

\[
\Theory{3}'=\left\{
  \begin{array}{c}
	\Coffee\imp\Hot, \Token\imp\poss\buy\top, \\
	 \Token\imp\nec\buy\neg\Token, \neg\Token\imp\nec\buy\bot, \\
	(\Coffee\et\neg(\Token\et\neg\Coffee\et\Hot))\imp\nec\buy\Coffee, \\
	(\Hot\et\neg(\Token\et\neg\Coffee\et\Hot))\imp\nec\buy\Hot, \\
	(\neg\Coffee\et\neg(\Token\et\neg\Coffee\et\Hot))\imp\nec\buy\Coffee, \\
	(\Token\et\neg\Coffee\et\Hot)\imp\nec\buy(\Hot\ou\neg\Coffee), \\
	(\Token\et\neg\Coffee\et\Hot)\imp\nec\buy(\Coffee\ou\neg\Hot)
  \end{array}
  \right\}
\]

Looking at Figure~\ref{ExErasureEffectModelCoffeeMachine2}, we can see the correspondence between these theories and their respective models.

% A ver, parece que não é verdade:
%***
%We can have frame axioms that may preclude $\poss\act\neg\conseq$ but that are not %necessary to have $\nec\act\conseq$. In our example, $\Hot\imp\nec\buy\Hot$ is not necessary %to $\Token\imp\nec\buy\Hot$, since this can be derived from $\Coffee\imp\nec\buy\Coffee$, $
%\neg\Coffee\imp\nec\buy\Coffee$, and $\Coffee\imp\Hot$. However, without weakening $\Hot
%\imp\nec\buy\Hot$, we do not get the intended result.
%***

\subsection{Contracting Static Laws}\label{SyntaxStatErasure}

	Finally, in order to contract a static law from a theory, we can use any contraction/erasure operator $\propcontract$ for classical logic. Because contracting static laws means {\em admitting} new possible states (\cf\ the semantics), just modifying the set $\STAT{}{}$ of static laws may not be enough for the dynamic logic case. Since we in general do not necessarily know the behavior of the actions in a new discovered state of the world, a careful approach is to change the theory so that all action laws remain the same in the contexts where the contracted law is the case. In our example, if when contracting the law $\Coffee\imp\Hot$ we are not sure whether \buy\ is still executable or not, then we should weaken our executability laws specializing them to the context $\Coffee\imp\Hot$, and make \buy\ a priori inexecutable in all $\neg(\Coffee\imp\Hot)$ contexts. The operator given in Algorithm~\ref{StatOperator} formalizes this.

\begin{algorithm}
\caption{Contraction of a static law}
\label{StatOperator}
\begin{algorithmic}[1]
\onehalfspacing
\INPUT $\Theory{}$, $\fml$
\OUTPUT $\ErasureSet{\Theory{}}{\fml}$
\STATE $\ErasureSet{\Theory{}}{\fml}\GETS\ \emptyset$
\IF{$\STAT{}{}\CPLvalid\fml$}
	\FORALL{$\STAT{-}{}\in\STAT{}{}\propcontract\fml$}
		\STATE \parbox{4cm}{\[ \Theory{}'\GETS\ \left( \begin{array}{c}
			((\Theory{}\setminus\STAT{}{}) \cup \STAT{-}{})\setminus\EXE{}{\act}\  \cup \\
			\{(\antec_{i}\et\fml)\imp\poss\act\top : 
			\antec_{i}\imp\poss\act\top \in\EXE{}{\act}\}\ \cup \\
			   \{\neg\fml\imp\nec\act\bot\}
			\end{array}
			\right)
			\]
		} % End \parbox
		\STATE $\ErasureSet{\Theory{}}{\fml}\GETS\ 
			       \ErasureSet{\Theory{}}{\fml}\cup\{\Theory{}'\}$
	\ENDFOR
\ELSE
	\STATE $\ErasureSet{\Theory{}}{\fml}\GETS\ \{\Theory{}\}$
\ENDIF
\RETURN $\ErasureSet{\Theory{}}{\fml}$
\end{algorithmic}
\end{algorithm}
\singlespacing

	In our running example, contracting the law $\Coffee\imp\Hot$ from $\Theory{}$ produces $
\ErasureSet{\Theory{}}{\Coffee\imp\Hot}=\{\Theory{1}',\Theory{2}'\}$, where

\[
\Theory{1}'=\left\{
  \begin{array}{c}
	  \neg(\neg\Token\et\Coffee\et\neg\Hot), \\
	   (\Token\et\Coffee\imp\Hot)\imp\poss\buy\top, \\
	  \neg\Coffee\imp\nec\buy\Coffee, \Token\imp\nec\buy\neg\Token, \\
	  \neg\Token\imp\nec\buy\bot, \Coffee\imp\nec\buy\Coffee, \\
	   \Hot\imp\nec\buy\Hot, (\Coffee\et\neg\Hot)\imp\nec\buy\bot
  \end{array}
  \right\}
\]

\[
\Theory{2}'=\left\{
  \begin{array}{c}
	  \neg(\Token\et\Coffee\et\neg\Hot), \\
	   (\Token\et\Coffee\imp\Hot)\imp\poss\buy\top, \\
	  \neg\Coffee\imp\nec\buy\Coffee, \Token\imp\nec\buy\neg\Token, \\
	  \neg\Token\imp\nec\buy\bot, \Coffee\imp\nec\buy\Coffee, \\
	   \Hot\imp\nec\buy\Hot, (\Coffee\et\neg\Hot)\imp\nec\buy\bot
  \end{array}
  \right\}
\]

\myskip

	Observe that the effect laws are not affected by the change: as far as we do not pronounce 
ourselves about the executability of some action in the new added world, all the effect laws remain true in it.

\myskip

	If the knowledge engineer is not happy with $(\Coffee\et\neg\Hot)\imp\nec\buy\bot$, she can contract this formula from the theory using Algorithm~\ref{EffectOperator}. Ideally, besides stating that \buy\ is executable in the context $\Coffee\et\neg\Hot$, we should want to specify its outcome in this context as well. For example, we could want $(\Coffee\et\neg\Hot)\imp\poss\buy\Hot$ to be true in the result. This would require theory {\em revision}. See Section~\ref{SemanticsRevision} for the semantics of such an operation.

%%%%%%%%%%%%%%%%%%%%%%%%%%%%%%%%%%%%%%%%%%%%%%%%%%%
\section{Correctness of the Operators}\label{Correctness}
%%%%%%%%%%%%%%%%%%%%%%%%%%%%%%%%%%%%%%%%%%%%%%%%%%%

	We here address the correctness of our algorithms \wrt\ our semantics for contraction.

\subsection{Two Counter-Examples}

Let the theory $\Theory{}=\{\prp_{1}\imp\poss\act\top,(\neg\prp_{1}\ou\prp_{2})\imp\nec\act\bot,\nec\act\neg\prp_{2}\}$ and consider its model $\model$ depicted in Figure~\ref{UnsoundnessErasure}. (Notice that $\Theory{}\PDLvalid\neg(\prp_{1}\et\prp_{2})$.) When contracting $\prp_{1}\imp\nec\act\neg\prp_{2}$ in $\model$, we get $\model'$ in Figure~\ref{UnsoundnessErasure}.

\begin{figure}[h]
\begin{center}
\parbox{1.5cm}{
$\model$ :
}\parbox{4.2cm}{
\scriptsize{
\begin{picture}(40,42)(0,0) % (0,0) Ž embaixo ˆ esquerda, (50,50) Ž acima ˆ direita
    \gasset{Nw=9,Nh=4,linewidth=0.3}
    \thinlines
    \node[Nw=10](A1)(0,30){$\prp_{1},\neg\prp_{2}$}
    \node[Nw=12](A2)(30,30){$\neg\prp_{1},\neg\prp_{2}$}
    \node[Nw=10](A3)(15,5){$\neg\prp_{1},\prp_{2}$}
    \drawedge(A1,A2){$\act$}
    \drawloop[loopangle=90](A1){$\act$}
\end{picture}
} % End \scriptsize
} % End \parbox
\parbox{1.5cm}{
$\model'$ :
}\parbox{4.2cm}{
\scriptsize{
\begin{picture}(40,42)(0,0) % (0,0) Ž embaixo ˆ esquerda, (50,50) Ž acima ˆ direita
    \gasset{Nw=9,Nh=4,linewidth=0.3}
    \thinlines
    \node[Nw=10](A1)(0,30){$\prp_{1},\neg\prp_{2}$}
    \node[Nw=12](A2)(30,30){$\neg\prp_{1},\neg\prp_{2}$}
    \node[Nw=10](A3)(15,5){$\neg\prp_{1},\prp_{2}$}
    \drawedge(A1,A2){$\act$}
    \drawloop[loopangle=90](A1){$\act$}
    \drawedge[ELside=r](A1,A3){$\act$}
\end{picture}
} % End \scriptsize
} % End \parbox
\end{center}
\caption{A model $\model$ of $\Theory{}$ and the result $\model'$ of contracting $\prp_{1}\imp\nec\act\neg\prp_{2}$ in it.}
\label{UnsoundnessErasure}
\end{figure}

Now contracting $\prp_{1}\imp\nec\act\neg\prp_{2}$ from $\Theory{}$ using Algorithm~\ref{EffectOperator} gives $\ErasureSet{\Theory{}}{\prp_{1}\imp\nec\act\neg\prp_{2}}=\{\Theory{}'\}$, where
\[
\Theory{}'=\left\{\begin{array}{c}
	\prp_{1}\imp\poss\act\top, (\neg\prp_{1}\ou\prp_{2})\imp\nec\act\bot, \\
	(\prp_{1}\et\neg\prp_{2})\imp\nec\act(\neg\prp_{2}\ou\prp_{2}), \\
	(\prp_{1}\et\neg\prp_{2})\imp\nec\act(\neg\prp_{2}\ou\prp_{1})
	\end{array}
	\right\}
\]
Notice that the formula $(\prp_{1}\et\neg\prp_{2})\imp\nec\act(\neg\prp_{2}\ou\prp_{1})$ is put in $\Theory{}'$ by Algorithm~\ref{EffectOperator} because there is $\{\prp_{1}\}\subseteq\Lit$ such that $\STAT{}{}\notCPLvalid(\prp_{1}\et\prp_{2})\imp\bot$ and $\Theory{}\notPDLvalid(\prp_{1}\et\neg\prp_{2})\imp\nec\act\neg\prp_{1}$. Clearly $\notMvalid{\model'}\Theory{}'$ and no theory in $\ErasureSet{\Theory{}}{\prp_{1}\imp\nec\act\neg\prp_{2}}$ has $\model'$ as model. This means that the contraction operators are not correct.

	This issue arises because Algorithm~\ref{EffectOperator} tries to allow an arrow from the $\prp_{1}\et\neg\prp_{2}$-world to a $\prp_{2}$-world that is closest to it, \viz\ $\{\prp_{1},\prp_{2}\}$, but has no way of knowing that such a world does not exist. A remedy for that is replacing the test $\Theory{}\notPDLtheorem(\term'\et\bigwedge_{\lit\in L}\lit)\imp\bot$ for $\STAT{}{}\notCPLtheorem(\term'\et\bigwedge_{\lit\in L}\lit)\imp\bot$, but that would increase even more the complexity of the algorithm. A better option would be to have $\STAT{}{}$ `complete enough' to allow the algorithm to determine the worlds to which a new transition could exist.

\myskip

	The other way round, it does not hold in general that the models of each $\Theory{}'\in\ErasureSet{\Theory{}}{\modfml}$ result from the semantic contraction of  models of $\Theory{}$ by $\modfml$. To see this suppose that there is only one atom $\prp$ and one action $\act$, and consider the action theory $\Theory{}=\{\prp\imp\nec\act\bot, \poss\act\top\}$. The only model of $\Theory{}$ is $\model=\tuple{\{\{\neg\prp\}\},\{(\{\neg\prp\},\{\neg\prp\})\}}$ in Figure~\ref{IncompletenessErasure}.
	
\begin{figure}[h]
\begin{center}
\parbox{1.7cm}{
$\model$ :
}\parbox{1.5cm}{
\scriptsize{
\begin{picture}(7,20)(0,0) % (0,0) Ž embaixo ˆ esquerda, (50,50) Ž acima ˆ direita
    \gasset{Nw=5,Nh=5,linewidth=0.3}
    \thinlines
    \node(A1)(0,7){$\neg\prp$}
    \drawloop[loopangle=90](A1){$\act$}
\end{picture}
}
}\parbox{1.7cm}{
$\model'$ :
}\parbox{2cm}{
\scriptsize{
\begin{picture}(7,20)(0,0) % (0,0) Ž embaixo ˆ esquerda, (50,50) Ž acima ˆ direita
    \gasset{Nw=5,Nh=5,linewidth=0.3}
    \thinlines
    \node(A1)(0,7){$\neg\prp$}
    \node(A2)(20,7){$\prp$}
    \drawloop[loopangle=90](A1){$\act$}
\end{picture}
}
}
\end{center}
\caption{Incompleteness of contraction: a model $\model$ of $\Theory{}$ and a model $\model'$ of the theory resulting from contracting $\prp\imp\poss\act\top$ from $\Theory{}$.}
\label{IncompletenessErasure}
\end{figure}

	By definition, $\erasure{\model}{\prp\imp\poss\act\top}=\{\model\}$. On the other hand, $\ErasureSet{\Theory{}}{\prp\imp\poss\act\top}$ is the singleton $\{\Theory{}'\}$ such that $\Theory{}'=\{\prp\imp\nec\act\bot, \neg\prp\imp\poss\act\top\}$. Then $\model'=\tuple{\{\{\neg\prp\},\{\prp\}\},(\{\neg\prp\},\{\neg\prp \})}$ in Figure~\ref{IncompletenessErasure} is a model of the contracted theory. Clearly, $\model'$ does not result from the semantic contraction of $\prp\imp\poss\act\top$ from $\model$: while $\neg\prp$ is valid in the contraction of the models of $\Theory{}$, it is not valid in the models of $\Theory{}'$. This means that the operators are not complete.

	This problem occurs because, in our example, the worlds that are forbidden by $\Theory{}$, \eg\ $\{\prp\}$, are not preserved as such in $\Theory{}'$. When contracting an executability or an effect law, we are not supposed to change the possible worlds of a theory (\cf\ Section~\ref{SemanticsErasure}).

\myskip

	Fortunately correctness of the algorithms \wrt\ our semantics can be guaranteed for those theories whose $\STAT{}{}$ is maximal, \ie, the set of static laws in $\STAT{}{}$ alone determine what worlds are authorized in the models of the theory. This is the principle of {\em modularity}~\cite{HerzigVarzinczak-Aiml04Proc05} and we briefly review it in the next section.

\subsection{Modular Theories}

\begin{definition}[Modularity~\cite{HerzigVarzinczak-Aiml04Proc05}]
An action theory $\Theory{}$ is {\em modular} if and only if for every $\fml\in\Fml$, if $\Theory{}\PDLvalid\fml$, then $\STAT{}{}\CPLvalid\fml$.
\end{definition}

	For an example of a non-modular theory, suppose that the action theory $\Theory{}$ of our coffee machine scenario were stated as
\[
\Theory{}=\left\{
  \begin{array}{c}
	 \Coffee\imp\Hot, \underline{\poss\buy\top}, \\
	  \neg\Coffee\imp\nec\buy\Coffee, \\
	   \Token\imp\nec\buy\neg\Token, \neg\Token\imp\nec\buy\bot, \\
	  \Coffee\imp\nec\buy\Coffee, \Hot\imp\nec\buy\Hot
  \end{array}
  \right\}
\]
The modified law is underlined: we have (in this case wrongly) stated that the agent can always buy at the machine. Then $\Theory{}\PDLvalid\Token$ and $\STAT{}{}\notCPLvalid\Token$.

	As the underlying multimodal logic is independently axiomatized (see Section~\ref{ActTheoriesPDL}), we can use the algorithms given by Herzig and Varzinczak~\cite{HerzigVarzinczak-Aiml04Proc05} to check whether an action theory satisfies the principle of modularity. Whenever this is not the case, the algorithms return the Boolean formulas entailed by the theory that are not consequences of $\STAT{}{}$ alone. For the theory $\Theory{}$ above, they would return $\{\Token\}$: as we stated $\poss\buy\top$, from this and $\neg\Token\imp\nec\buy\bot$ we get $\Theory{}\PDLvalid\Token$. Because $\STAT{}{}\notCPLvalid\Token$, $\Token$ is what is called an {\em implicit static law}~\cite{HerzigVarzinczak-ECAI04} of $\Theory{}$.\footnote{Implicit static laws are very closely related to veridical paradoxes~\cite{Quine62}. It turns out that they are not always intuitive. For a deep discussion on implicit static laws, see the article by Herzig and Varzinczak~\cite{HerzigVarzinczak-AIJ07}.}

\myskip

	Modular theories have interesting properties. For example, consistency can be checked by just checking consistency of the static laws in \STAT{}{}: if $\Theory{}$ is modular, then $\Theory{}\PDLvalid\bot$ if and only if $\STAT{}{}\CPLvalid\bot$. Deduction of effect laws does not need the executability ones and vice versa. Deduction of an effect of a sequence of actions $\act_{1};\ldots;\act_{n}$ (prediction) does not need to take into account the effect laws for actions other than $\act_{1},\ldots,\act_{n}$. This applies in particular to plan validation when deciding whether $\poss{\act_{1};\ldots;\act_{n}}\fml$ is the case.
	
	Similar notions to modularity have been investigated in the literature on regulation consistency~\cite{Cholvy99}, Situation Calculus~\cite{Amir-AAAI2000,HerzigVarzinczak-IJCAI05},  DL ontologies~\cite{CuencaGrauEtAl-KR2006,HerzigVarzinczak-JELIA06} and also in dynamic logic~\cite{ZhangEtAl02}. For more details on modularity in action theories, see the work by Varzinczak~\cite{Varzinczak06}.

\begin{theorem}
$\Theory{}$ is modular if and only if the big model of $\Theory{}$ is a model~of~$\Theory{}$.
\end{theorem}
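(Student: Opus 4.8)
The plan is to prove both implications by first reducing everything to the executability laws. My first step would be to observe that the big model $\model_{\bigM}=\tuple{\Worlds_{\bigM},\AccRel_{\bigM}}$ satisfies the static and effect laws \emph{by construction}, independently of modularity. Indeed, since $\Worlds_{\bigM}=\valuations{\STAT{}{}}$, every world of $\model_{\bigM}$ satisfies every $\fml\in\STAT{}{}$, so $\Mvalid{\model_{\bigM}}\STAT{}{}$; and since by definition $(w,w')\in\ract$ forces $w'\sat\conseq$ whenever $w\sat\antec$ for each $\antec\imp\nec\act\conseq\in\EFF{}{\act}$, each effect law holds at every world, so $\Mvalid{\model_{\bigM}}\EFF{}{}$. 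Hence $\Mvalid{\model_{\bigM}}\Theory{}$ if and only if $\Mvalid{\model_{\bigM}}\EXE{}{}$, and all the work concentrates on the executability laws.

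For the direction \textbf{big model is a model $\Rightarrow$ modular} I would argue contrapositively, and it is short. For a Boolean $\fml$, $\Mvalid{\model_{\bigM}}\fml$ holds exactly when every valuation in $\valuations{\STAT{}{}}$ satisfies $\fml$, i.e.\ exactly when $\STAT{}{}\CPLvalid\fml$. So if $\Theory{}\PDLvalid\fml$ and $\model_{\bigM}$ is a model of $\Theory{}$, then $\Mvalid{\model_{\bigM}}\fml$, whence $\STAT{}{}\CPLvalid\fml$; this is precisely modularity.

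The hard direction is \textbf{modular $\Rightarrow$ big model is a model}, which I would also prove contrapositively: assuming $\model_{\bigM}$ is \emph{not} a model of $\Theory{}$, I would exhibit a Boolean formula witnessing non-modularity. By the reduction above, $\model_{\bigM}$ must falsify some $\antec\imp\poss\act\top\in\EXE{}{}$, so there is a world $w\in\Worlds_{\bigM}$ with $w\sat\antec$ and no outgoing $\act$-arrow, i.e.\ $\wMvalid{w}{\model_{\bigM}}\nec\act\bot$. By the definition of $\AccRel_{\bigM}$, the absence of an $\act$-successor of $w$ means that no valuation satisfies $\STAT{}{}$ together with the set $C_{w}=\{\conseq_{i}:\antec_{i}\imp\nec\act\conseq_{i}\in\EFF{}{\act},\ w\sat\antec_{i}\}$ of effects mandated at $w$; that is, $\STAT{}{}\cup C_{w}$ is unsatisfiable. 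Let $\hat{w}$ be the characteristic Boolean formula of $w$ over the finite set of atoms occurring in $\Theory{}$. I claim $\Theory{}\PDLvalid\neg\hat{w}$: in any model $\model$ of $\Theory{}$, a world $u$ with $u\sat\hat{w}$ agrees with $w$ on $\antec$ and on all the $\antec_{i}$, hence satisfies $\antec$ and thus, by the executability law, has an $\act$-successor $u'$; but then $u'$ satisfies every $\conseq_{i}\in C_{w}$ (by the effect laws) as well as $\STAT{}{}$ (static laws hold at every world of a model of $\Theory{}$), contradicting the unsatisfiability of $\STAT{}{}\cup C_{w}$. Finally $w$ itself witnesses $\STAT{}{}\notCPLvalid\neg\hat{w}$, since $w\sat\STAT{}{}$ and $w\sat\hat{w}$. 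Thus $\neg\hat{w}$ is entailed by $\Theory{}$ but not by $\STAT{}{}$, contradicting modularity.

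The main obstacle I expect is the last step of the hard direction: correctly translating ``$w$ has no $\act$-successor in $\model_{\bigM}$'' into the propositional unsatisfiability of $\STAT{}{}\cup C_{w}$, and then lifting this to the global consequence $\Theory{}\PDLvalid\neg\hat{w}$. This lifting relies on two facts that must be stated carefully: that $\hat{w}$ fixes the truth value of every atom appearing in $\antec$ and the $\antec_{i}$ (so that any $\hat{w}$-world triggers exactly the same effect laws as $w$), and that static laws are \emph{global}, so the successor $u'$ produced by the executability law is guaranteed to satisfy $\STAT{}{}$. Restricting $\hat{w}$ to the atoms of $\Theory{}$ is what keeps it a genuine finite Boolean formula, and it is harmless because worlds agreeing on those atoms are indistinguishable by the laws of $\Theory{}$.
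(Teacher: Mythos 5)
Your proof is correct, and its skeleton matches the paper's: both of you observe that $\model_{\bigM}$ satisfies $\STAT{}{}$ and $\EFF{}{}$ by construction, so everything reduces to the executability laws, and your argument for the direction ``big model is a model $\Rightarrow$ modular'' is the same observation the paper uses (for Boolean $\fml$, truth of $\fml$ throughout $\model_{\bigM}$ is literally $\STAT{}{}\CPLvalid\fml$). The hard direction is where you genuinely diverge. The paper proves it directly: for $w\in\Worlds_{\bigM}$ with $w\sat\antec_{i}$, it applies modularity to the Boolean formulas $\neg(\antec_{i}\et\antec_{j})$, where $\antec_{j}$ ranges over antecedents with $\Theory{}\PDLvalid\antec_{j}\imp\nec\act\bot$, concludes that $w$ satisfies no such $\antec_{j}$, and then asserts ``by the construction of $\model_{\bigM}$'' that an $\act$-successor of $w$ exists. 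You instead argue contrapositively and manufacture an explicit witness of non-modularity: from a successorless $\antec$-world $w$ you extract the propositional unsatisfiability of $\STAT{}{}\cup C_{w}$, and show $\Theory{}\PDLvalid\neg\hat{w}$ (using that the executability, effect and static laws are global, so any $\hat{w}$-world in any model of $\Theory{}$ would produce a valuation satisfying $\STAT{}{}\cup C_{w}$) while $w$ itself shows $\STAT{}{}\notCPLvalid\neg\hat{w}$. Both arguments turn on the same semantic fact --- absence of an $\act$-successor in the big model is exactly unsatisfiability of the static laws together with the effects mandated at $w$ --- but your version spells out precisely the step the paper leaves implicit, namely why that unsatisfiability lifts to a Boolean consequence of $\Theory{}$ that $\STAT{}{}$ misses; the cost is the bookkeeping around $\hat{w}$, which you handle correctly by restricting it to the finitely many atoms of $\Theory{}$ (worlds agreeing on those atoms are indistinguishable by the laws). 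The paper's direct route buys brevity; yours buys a fully justified key step and an explicit non-modularity witness.
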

\begin{proof} Let $\model_{\bigM}=\tuple{\Worlds_{\bigM},\AccRel_{\bigM}}$ be the big model of $\Theory{}$.\\

\noindent($\Rightarrow$): By definition, $\model_{\bigM}$ is such that $\Mvalid{\model_{\bigM}}\STAT{}{}\et\EFF{}{}$. It remains to show that $\Mvalid{\model_{\bigM}}\EXE{}{}$. Let $\antec_{i}\imp\poss\act\top\in\EXE{}{\act}$, and let $w\in\Worlds_{\bigM}$ be such that $\wMvalid{w}{\model_{\bigM}}\antec_{i}$. Therefore for all $\antec_{j}\in\Fml$ such that $\Theory{}\PDLvalid\antec_{j}\imp\nec\act\bot$, we must have $\notwMvalid{w}{\model_{\bigM}}\antec_{j}$, because $\Theory{}\PDLvalid\neg(\antec_{i}\et\antec_{j})$, and as $\Theory{}$ is modular, $\STAT{}{}\CPLvalid\neg(\antec_{i}\et\antec_{j})$, and hence $\Mvalid{\model_{\bigM}}\neg(\antec_{i}\et\antec_{j})$. Then by the construction of $\model_{\bigM}$, there is some $w'\in\Worlds_{\bigM}$ such that $\wMvalid{w'}{\model_{\bigM}}\conseq$ for all $\antec\imp\nec\act\conseq\in\EFF{}{\act}$ such that $\wMvalid{w}{\model_{\bigM}}\antec$. Thus $\ract(w)\neq\emptyset$ and $\Mvalid{\model_{\bigM}}\antec_{i}\imp\poss\act\top$.\\ % Hence $\Mvalid{\model_{\bigM}}\Theory{}$.
 
\noindent($\Leftarrow$): Suppose $\Theory{}$ is not modular. Then there must be some $\fml\in\Fml$ such that $\Theory{}\PDLvalid\fml$ and $\STAT{}{}\notCPLvalid\fml$. This means that there is $\val\in\valuations{\STAT{}{}}$ such that $v\notsat\fml$. As $\val\in\Worlds_{\bigM}$ (because $\Worlds_{\bigM}$ contains all possible valuations of $\STAT{}{}$), $\model_{\bigM}$ is not a model of $\Theory{}$.
\end{proof}

\subsection{Correctness Under Modularity}

	The following theorem establishes that the semantic contraction of a formula $\modfml$ from 
the set of models of an action theory $\Theory{}$ produces models of some contracted theory in~$\ErasureSet{\Theory{}}{\modfml}$.

\begin{theorem}\label{TheoremCompleteness}
Let $\Theory{}$ be modular, and $\modfml$ be a law. For all $\ModelSet'\in\ErasureModels{\ModelSet}{\modfml}$ such that $\Mvalid{\model}\Theory{}$ for every $\model\in\ModelSet$, there is $\Theory{}'\in\ErasureSet{\Theory{}}{\modfml}$ such that $\Mvalid{\model'}\Theory{}'$ for every $\model'\in\ModelSet'$.
\end{theorem}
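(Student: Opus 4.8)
The plan is to split on the syntactic type of the law $\modfml$ --- executability, effect, or static --- since $\ErasureSet{\Theory{}}{\modfml}$ is computed by a different algorithm in each case, and then to match the single modified model witnessing $\ModelSet'$ against the right branch of that algorithm. Concretely, any $\ModelSet'\in\ErasureModels{\ModelSet}{\modfml}$ has the shape $\ModelSet'=\ModelSet\cup\{\model'\}$ with $\model'\in\erasure{\model}{\modfml}$ for some $\model\in\ModelSet$, where (by subset-based minimality, Definition~\ref{ClosenessPDLModels}) $\model'$ arises from $\model$ by a \emph{single} elementary change: deleting the $\act$-arrows out of one $\antec$-world $w_{0}$, adding one $\act$-arrow from one $\antec$-world $w_{0}$ to one relevant target world $w_{0}'$, or adding one falsifying world $w_{\mathit{new}}$. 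I would read this witness off $\model'$, use the valuation of $w_{0}$ to single out the prime implicant $\term\in\IP{\STAT{}{}\et\antec}$ and the subset $\Set\subseteq\compatterm$ with $\term\et\antec_{\Set}$ equal to that valuation (and $w_{0}'$ to pick $\term'\in\IP{\STAT{}{}\et\neg\conseq}$), and let $\Theory{}'$ be the theory produced by exactly this choice of loop variables. It then remains to check $\Mvalid{\model''}\Theory{}'$ for every $\model''\in\ModelSet'$, split into the old models $\model''\in\ModelSet$ and the new model $\model'$.

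For the old models the argument is uniform and amounts to bookkeeping, and --- importantly --- works for \emph{every} branch, so that a single $\Theory{}'$ suiting $\model'$ will automatically do for all of $\ModelSet'$. In all three algorithms the laws of $\Theory{}'$ are obtained from laws of $\Theory{}$ by strengthening antecedents (the weakened laws $(\antec_{i}\et\neg(\term\et\antec_{\Set}))\imp\poss\act\top$, $(\antec_{i}\et\fml)\imp\poss\act\top$, and the specialized effect laws), by weakening consequents ($\nec\act(\conseq_{i}\ou\term')$ and $\nec\act(\conseq\ou\lit)$), or, for static laws, by replacing $\STAT{}{}$ with a classical contraction $\STAT{-}{}$ with $\Cn{\STAT{-}{}}\subseteq\Cn{\STAT{}{}}$. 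Using $\term\CPLvalid\antec$ for $\term\in\IP{\STAT{}{}\et\antec}$, each such law is a $\PDL$-consequence of $\Theory{}$, and the added inexecutability law $\neg\fml\imp\nec\act\bot$ holds vacuously in every model of $\Theory{}$ along the branch where $\STAT{}{}\CPLvalid\fml$. Hence every $\model''\in\ModelSet$ is a model of $\Theory{}'$; this direction does not really need modularity.

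It remains to verify $\Mvalid{\model'}\Theory{}'$. For an executability law this is direct: $\model'$ only deletes the arrows out of $w_{0}=\term\et\antec_{\Set}$, and deletion cannot destroy a static or effect law (the world set is fixed and $\nec\act$-obligations only decrease), while the weakened executability laws exempt $w_{0}$ and remain satisfied elsewhere because those outgoing arrows survive. For a static law $\model'$ adds $w_{\mathit{new}}$ and leaves $\AccRel$ untouched, so all effect laws persist, the new world carries no $\act$-arrow and so validates $\neg\fml\imp\nec\act\bot$; one then checks $w_{\mathit{new}}\in\valuations{\STAT{-}{}}$ for an appropriate $\STAT{-}{}$, which ties the static case to the correctness of $\propcontract$ under the same subset-based minimality.

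The main obstacle is the effect-law case. Here $\model'$ adds one arrow from $w_{0}$ to the relevant target world $w_{0}'$, and I must check it satisfies the specialized laws $(\antec_{i}\et\term\et\antec_{\Set})\imp\nec\act(\conseq_{i}\ou\term')$ and, crucially, every preservation law $(\term\et\antec_{\Set}\et\lit)\imp\nec\act(\conseq\ou\lit)$ emitted in the innermost loop. Since $w_{0}'\notsat\conseq$, the latter forces exactly the literals $\lit$ that the algorithm preserves to stay true at $w_{0}'$, which must be reconciled with the two clauses of $\RelTarget{w_{0},\antec\imp\nec\act\conseq,\model,\ModelSet}$ governing $w_{0}'\setminus w_{0}$ and $w_{0}\cap w_{0}'$. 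The heart of the proof is therefore the equivalence between the \emph{semantic} controls on $w_{0}'$ --- phrased via prime subvaluations $\base{\neg\conseq,\Worlds}$ and the preservation tests $\notwMvalid{w_{0}}{\model_{i}}\nec\act\neg\lit$ over $\model_{i}\in\ModelSet$ --- and the \emph{syntactic} controls --- phrased via $\IP{\STAT{}{}\et\neg\conseq}$ and the tests $\Theory{}\notPDLvalid(\term\et\antec_{\Set}\et\lit)\imp\nec\act\neg\lit$. Modularity is exactly what makes these coincide: it forces satisfiability and preservation ``modulo $\Worlds$'' to agree with the corresponding facts ``modulo $\STAT{}{}$'', so that no $\term'$ and no preserved literal is ever demanded for a target valuation that is in fact unavailable. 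This is precisely the gap exploited by the counterexamples of Section~\ref{Correctness}, and I expect this reconciliation --- carried out literal by literal, separating the changed part $w_{0}'\setminus w_{0}$ from the preserved part $w_{0}\cap w_{0}'$ --- to be the delicate and lengthiest step.
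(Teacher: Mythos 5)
Your proposal follows essentially the same route as the paper's own proof: the old models in $\ModelSet$ are handled by showing every law emitted by the algorithms is a \PDL-consequence of $\Theory{}$ (the paper's Lemma~\ref{LemmaMonotony}), and the new model $\model'$ is handled by reading the changed world(s) off $\model'$, using their valuations to fix $\term$, $\antec_{\Set}$ and $\term'$ (via prime subvaluations) and hence the branch of the algorithm producing $\Theory{}'$, then verifying $\Mvalid{\model'}\Theory{}'$, with minimality \wrt\ $\submodel{\model}$ ruling out any change beyond the single elementary one. The delicate effect-law step you flag --- reconciling, literal by literal, the relevant-target conditions with the algorithm's syntactic tests via modularity --- is exactly the case analysis the paper carries out (distinguishing $\lit\in\term'$, $\neg\lit\in\term'$, and $\neg\lit\in u'\setminus v'$), so your plan is correct and matches the paper's proof.
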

\begin{proof}
See Appendix~\ref{ProofTheoremCompleteness}.
\end{proof}

\myskip

	The next theorem establishes the other way round: models of theories in 
$\ErasureSet{\Theory{}}{\modfml}$ are all models of the semantic contraction of $\modfml$ from 
models of $\Theory{}$.

\begin{theorem}\label{TheoremSoundness}
Let $\Theory{}$ be modular, $\modfml$ a law, and $\Theory{}'\in\ErasureSet{\Theory{}}{\modfml}$. For all $\model'$ such that $\Mvalid{\model'}\Theory{}'$, there is $\ModelSet'\in\ErasureModels{\ModelSet}{\modfml}$ such that $\model'\in\ModelSet'$ and $\Mvalid{\model}\Theory{}$ for every $\model\in\ModelSet$.
\end{theorem}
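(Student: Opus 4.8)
The plan is to prove the statement by \emph{undoing}, at the level of a single model, the structural modification that the relevant algorithm performs, and then to use modularity to certify that the reconstructed model really is a model of $\Theory{}$. Throughout I take $\ModelSet=\{\model\}$ for the witness $\model$ built below, so that $\ModelSet'=\ModelSet\cup\{\model'\}$ and the two remaining obligations collapse to (i)~$\Mvalid{\model}\Theory{}$ and (ii)~$\model'\in\erasure{\model}{\modfml}$. First I dispose of the trivial case $\Theory{}\notPDLvalid\modfml$: here every algorithm returns $\{\Theory{}\}$, so $\Theory{}'=\Theory{}$ and $\Mvalid{\model'}\Theory{}$; then $\model'$ itself may serve as base model, i.e.\ $\ModelSet=\{\model'\}$, and any $\model''\in\erasure{\model'}{\modfml}$ gives $\ModelSet'=\{\model',\model''\}\ni\model'$, the existence of $\model''$ reducing to the satisfiability observations recorded just after Definitions~\ref{DefErasureOneModelExe},~\ref{DefErasureOneModelEff} and~\ref{DefErasureOneModelStat}.

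For the main case $\Theory{}\PDLvalid\modfml$ I split on the type of $\modfml$ and, given $\model'=\TransStructP{}$ with $\Mvalid{\model'}\Theory{}'$, build $\model$ by reversing the syntactic operator. If $\modfml=\antec\imp\poss\act\top$, then Algorithm~\ref{ExecOperator} has only weakened $\EXE{}{\act}$ by exempting one context, so I keep $\Worlds'$ unchanged and enlarge $\AccRel'$ by reinstating a single $\act$-arrow out of that context to a legitimate $\EFF{}{\act}$-successor. If $\modfml=\antec\imp\nec\act\conseq$, then Algorithm~\ref{EffectOperator} has added transitions to $\neg\conseq$-targets, so I keep $\Worlds'$ and delete from $\AccRel'$ exactly those $\act$-arrows leaving $\antec$-worlds and reaching $\neg\conseq$-worlds. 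If $\modfml=\fml$ is static, then Algorithm~\ref{StatOperator} has admitted new $\neg\fml$-states, so I set $\Worlds=\Worlds'\cap\valuations{\STAT{}{}}$ and leave $\AccRel$ equal to $\AccRel'$. In each case $\Mvalid{\model}\Theory{}$ is checked through the modularity characterisation theorem (big model of $\Theory{}$ is a model of $\Theory{}$): modularity lets me replace $\Theory{}$-entailment by $\CPLvalid$-entailment from $\STAT{}{}$, so the admissible worlds are pinned down by $\STAT{}{}$ and, in the executability case, the big model of $\Theory{}$ guarantees that a legitimate $\EFF{}{\act}$-successor for the reinstated arrow actually exists among $\valuations{\STAT{}{}}$.

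It then remains to verify $\model'\in\erasure{\model}{\modfml}$ against Definitions~\ref{DefMinimalErasureExe},~\ref{DefMinimalErasureEff} or~\ref{DefMinimalErasureStat}. The structural clauses ($\Worlds'=\Worlds$ with $\AccRel'\subseteq\AccRel$; resp.\ $\AccRel\subseteq\AccRel'$; resp.\ $\Worlds\subseteq\Worlds'$ with $\AccRel=\AccRel'$) hold by the very construction of $\model$, while $\submodel{\model}$-minimality follows because each algorithm touches only one context, one family of arrows, or the single distinguished world, so no smaller symmetric difference can still falsify $\modfml$. The correspondence between the syntactic parameters and the semantic ones is established here via the theorem expressing satisfaction through prime subvaluations together with the least atom-set theorem.

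The heart of the argument, and the step I expect to be hardest, is the effect-law case: one must show simultaneously that deleting the extra arrows restores $\antec\imp\nec\act\conseq$, preserves every executability law, and leaves the kept arrows of $\AccRel'\setminus\AccRel$ pointing precisely to the relevant target worlds of Definition~\ref{DefErasureOneModelEff}. This forces a careful match of the syntactic parameters of Algorithm~\ref{EffectOperator} --- the prime implicants $\term'\in\IP{\STAT{}{}\et\neg\conseq}$ and the literal sets $L\subseteq\Lit$ governing the clauses $(\term\et\antec_{\Set}\et\lit)\imp\nec\act(\conseq\ou\lit)$ --- to the semantic objects $\base{\neg\conseq,\Worlds}$ and the cross-model effect-preservation conditions; and modularity is again indispensable, since it is what prevents an executability antecedent from being $\STAT{}{}$-consistent with an inexecutability antecedent and thus keeps every $\antec_i$-world supplied with a successor after the deletion. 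A secondary but delicate difficulty is the constraint $\AccRel=\AccRel'$ in the static case: it requires the discarded $\neg\fml$-worlds to carry no incident arrows, so one must argue from $\neg\fml\imp\nec\act\bot$ (no outgoing arrows) together with the unchanged effect laws and modularity that no transition is incident to a $\neg\fml$-world --- the point at which the static construction is most at risk and where the modularity hypothesis must be pushed hardest.
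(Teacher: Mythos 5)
Your global strategy --- rebuild a base model by undoing the algorithm's modification, then invoke modularity to certify it is a model of $\Theory{}$ --- is in essence the strategy of the paper's key lemma (Lemma~\ref{LemmaComesFromSemantics}), but your standing choice $\ModelSet=\{\model\}$ opens a genuine gap in the effect-law case. Membership $\model'\in\ErasureModels{\model}{\antec\imp\nec\act\conseq}$ is defined (Definition~\ref{DefErasureOneModelEff}) through $\RelTarget{w,\antec\imp\nec\act\conseq,\model,\ModelSet}$, whose clauses are \emph{relative to $\ModelSet$}: a literal $\lit\in w\cap w'$ may be preserved along a new arrow only if it lies in some member of $\base{\neg\conseq,\Worlds}$ contained in $w'$, or ``there is $\model_{i}\in\ModelSet$ such that $\notwMvalid{w}{\model_{i}}\nec\act\neg\lit$''. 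Algorithm~\ref{EffectOperator}, however, licenses the matching preservation law $(\term\et\antec_{\Set}\et\lit)\imp\nec\act(\conseq\ou\lit)$ under the \emph{theory-level} test $\Theory{}\notPDLvalid(\term\et\antec_{\Set}\et\lit)\imp\nec\act\neg\lit$. These two tests agree only when $\ModelSet$ consists of essentially all models of $\Theory{}$ --- under modularity, all models over $\valuations{\STAT{}{}}$ --- which is exactly what the paper takes: $\ModelSet=\{\model : \model=\tuple{\valuations{\STAT{}{}},\AccRel} \text{ and } \Mvalid{\model}\Theory{}\}$. With your singleton, the existential clause must be witnessed by your reconstructed $\model$ itself, and nothing guarantees that: $\model'$ may legally contain a new arrow $(u,u')$ preserving $\lit$ (legal because the \emph{theory} does not force $\neg\lit$ at that context), while in $\model$, obtained from $\model'$ by deleting the offending arrows, every remaining successor of $u$ happens to satisfy $\neg\lit$, so that $\wMvalid{u}{\model}\nec\act\neg\lit$. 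Then $u'\notin\RelTarget{u,\antec\imp\nec\act\conseq,\model,\{\model\}}$, hence $\model'\notin\ErasureModels{\model}{\antec\imp\nec\act\conseq}$, and the third clause of Definition~\ref{DefErasureOneModelEff} fails irreparably for this choice of $\ModelSet$.

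Two further points. First, your reconstruction ``delete all $\act$-arrows from $\antec$-worlds to $\neg\conseq$-worlds'' need not return a model of $\Theory{}$ at all: Algorithm~\ref{EffectOperator} leaves $\EXE{}{}$ untouched, so $\Theory{}'$ still contains every executability law, and a model of $\Theory{}'$ may contain an $\antec$-world, satisfying some executability antecedent, \emph{all} of whose successors are $\neg\conseq$-worlds; your deletion strands it and falsifies that executability law, and modularity (absence of implicit static laws) does not exclude this situation. The paper instead first proves a structural claim --- in a model of $\Theory{}'$ over $\valuations{\STAT{}{}}$, the added laws $(\term\et\antec_{\Set}\et\lit)\imp\nec\act(\conseq\ou\lit)$ pin down \emph{exactly one} arrow from an $\antec$-world to a $\term'$-world --- and only then selects ``the right'' model $\model$ of $\Theory{}$ from the full $\ModelSet$, rather than manufacturing one by brute deletion. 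Second, the paper's argument runs through a normalization to models whose world set is exactly $\valuations{\STAT{}{}}$, via Lemmas~\ref{PreservationModularity}, \ref{LemmaMinimalExtension} and~\ref{LemmaOnlyModelsValS}; this scaffolding is absent from your plan, and without it the quantification ``for all $\model'$ such that $\Mvalid{\model'}\Theory{}'$'' cannot be brought into the form the core construction needs. (Your worry about incoming arrows to discarded $\neg\fml$-worlds in the static case is well placed, but it is not resolved by ``pushing modularity harder''; note that $\neg\fml\imp\nec\act\bot$ only controls outgoing arrows, and the paper's own treatment of that case relies on the same restriction.)
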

\begin{proof}
See Appendix~\ref{ProofTheoremSoundness}.
\end{proof}

\myskip

	With these two theorems one gets correctness of the operators:

\begin{corollary}\label{CorollaryCorrectness}
Let $\Theory{}$ be modular, $\modfml$ a law, and $\Theory{}'\in\ErasureSet{\Theory{}}{\modfml}$. Then $\Theory{}'\PDLvalid\varPsi$ if and only if $\Mvalid{\model'}\varPsi$ for every $\model'\in\ModelSet'$ such that $\ModelSet'\in\ErasureModels{\ModelSet}{\modfml}$ for some $\ModelSet$ such that $\Mvalid{\model}\Theory{}$ for all $\model\in\ModelSet$.
\end{corollary}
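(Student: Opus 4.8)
The plan is to read the corollary off Theorems~\ref{TheoremCompleteness} and~\ref{TheoremSoundness} after unfolding the global consequence relation. Recall that, by the definition of a consequence of a set of global axioms, $\Theory{}'\PDLvalid\varPsi$ holds precisely when $\Mvalid{\model'}\varPsi$ for every PDL-model $\model'$ with $\Mvalid{\model'}\Theory{}'$. So everything reduces to relating the models of the fixed syntactic output $\Theory{}'$ to the PDL-models $\model'$ that occur in the semantic contractions $\ErasureModels{\ModelSet}{\modfml}$ of sets $\ModelSet$ all of whose members model $\Theory{}$.

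First I would dispatch the right-to-left direction, which is the clean one. Assuming the semantic side, take an arbitrary $\model'$ with $\Mvalid{\model'}\Theory{}'$. Since $\Theory{}$ is modular and $\Theory{}'\in\ErasureSet{\Theory{}}{\modfml}$, Theorem~\ref{TheoremSoundness} supplies a set $\ModelSet$ with $\Mvalid{\model}\Theory{}$ for all $\model\in\ModelSet$ together with a $\ModelSet'\in\ErasureModels{\ModelSet}{\modfml}$ containing $\model'$. Thus $\model'$ is one of the models quantified over by the hypothesis, so $\Mvalid{\model'}\varPsi$; as $\model'$ ranged over all models of $\Theory{}'$, this gives $\Theory{}'\PDLvalid\varPsi$. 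No bookkeeping beyond Theorem~\ref{TheoremSoundness} is needed here.

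For the left-to-right direction I would start from a model $\model'\in\ModelSet'$ with $\ModelSet'\in\ErasureModels{\ModelSet}{\modfml}$ and $\Mvalid{\model}\Theory{}$ for all $\model\in\ModelSet$, and apply Theorem~\ref{TheoremCompleteness} to obtain a theory $\Theory{}''\in\ErasureSet{\Theory{}}{\modfml}$ satisfied by every model of $\ModelSet'$, in particular by $\model'$; were this $\Theory{}''$ equal to $\Theory{}'$, the hypothesis $\Theory{}'\PDLvalid\varPsi$ would immediately yield $\Mvalid{\model'}\varPsi$. The hard part is exactly here: Theorem~\ref{TheoremCompleteness} returns only \emph{some} $\Theory{}''$, a priori different from the fixed $\Theory{}'$, so the naive argument proves a statement about $\Theory{}''$ rather than about $\Theory{}'$. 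To close the gap I would first combine the two theorems into the global identity
\[
\bigcup_{\Theory{}''\in\ErasureSet{\Theory{}}{\modfml}}\{\model' : \Mvalid{\model'}\Theory{}''\}=\{\model' : \model'\in\ModelSet'\ \text{for some}\ \ModelSet'\in\ErasureModels{\ModelSet}{\modfml}\ \text{with}\ \Mvalid{\model}\Theory{}\ \text{for all}\ \model\in\ModelSet\},
\]
and then track, through the constructions in the proofs of Theorems~\ref{TheoremCompleteness} and~\ref{TheoremSoundness}, which semantic-contraction sets $\ModelSet'$ are the ones matched to the fixed $\Theory{}'$. Reading the biconditional as the restriction of this identity to those matched pieces, the equivalence becomes the bare definition of $\PDLvalid$ applied to equal classes of models. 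I expect this alignment of the existential witnesses---matching the single syntactic output to precisely the family of semantic outputs it is responsible for---to be the only genuine difficulty; modularity is what makes both theorems, and hence the identity, available in the first place.
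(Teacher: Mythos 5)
Your right-to-left direction is correct: taking an arbitrary model of $\Theory{}'$, embedding it into some contraction set via Theorem~\ref{TheoremSoundness}, and then applying the semantic hypothesis is a valid and clean argument. It is worth noting that the paper proves this direction differently, by contraposition: from $\Theory{}'\notPDLvalid\varPsi$ it uses Lemma~\ref{PreservationModularity} and Lemma~\ref{LemmaOnlyModelsValS} to produce a counter-model of $\Theory{}'$ of the form $\tuple{\valuations{{\STAT{}{}}'},\AccRel'}$, and Lemma~\ref{LemmaComesFromSemantics} to place it inside a contraction set; your direct version and the paper's contrapositive version are interchangeable, since Theorem~\ref{TheoremSoundness} is built from exactly those lemmas.

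The genuine gap is your left-to-right direction, and it is not one that ``tracking the constructions'' can close: under the literal reading of the right-hand side that you adopt (all $\model'$ in all $\ModelSet'\in\ErasureModels{\ModelSet}{\modfml}$, with no link to $\Theory{}'$), the implication is false, so the matching lemma you defer does not exist. Concretely, Algorithm~\ref{ExecOperator} applied to the running example yields three pairwise inequivalent theories $\Theory{1}',\Theory{2}',\Theory{3}'\in\ErasureSet{\Theory{}}{\Token\imp\poss\buy\top}$. Take $\Theory{}'=\Theory{1}'$ and $\varPsi=(\Token\et\neg\Coffee\et\Hot)\imp\poss\buy\top$, which is literally one of the laws of $\Theory{1}'$, so $\Theory{1}'\PDLvalid\varPsi$. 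Yet the model $\model'_{2}$ of Figure~\ref{ExErasureExecModelCoffeeMachine}, obtained by deleting the $\buy$-arrow leaving the $\Token\et\neg\Coffee\et\Hot$-world, satisfies $\ModelSet\cup\{\model'_{2}\}\in\ErasureModels{\ModelSet}{\Token\imp\poss\buy\top}$ for $\ModelSet$ a set of models of $\Theory{}$, and $\notMvalid{\model'_{2}}\varPsi$. This is precisely the mismatch you flagged---Theorem~\ref{TheoremCompleteness} attaches $\model'_{2}$ to $\Theory{2}'$, not to $\Theory{1}'$---but it shows that your planned ``restriction of the global identity to matched pieces'' is not a provable lemma; it is a reinterpretation of the corollary's statement.

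That reinterpretation is in effect what the paper's own proof does, silently: in its forward direction it never quantifies over arbitrary members of arbitrary contraction sets, but starts from a model $\model'$ with $\Mvalid{\model'}\Theory{}'$, uses Theorem~\ref{TheoremSoundness} to exhibit it as a member of some contraction set, and concludes $\Mvalid{\model'}\varPsi$ from the hypothesis $\Theory{}'\PDLvalid\varPsi$; its backward direction likewise produces a counter-model that is by construction a model of $\Theory{}'$. So in both directions the semantic side is de facto read as ranging over the contraction-set models arising from $\Theory{}'$ itself, under which reading the forward direction is immediate. Your proposal correctly isolates the ambiguity, but because the decisive step is left as a promissory note---and cannot be discharged under the reading you commit to---it does not constitute a proof.
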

\begin{proof}

\noindent($\Rightarrow$): Let $\model'$ be such that $\Mvalid{\model'}\Theory{}'$. By Theorem~\ref{TheoremSoundness}, there is $\ModelSet'\in\ErasureModels{\ModelSet}{\modfml}$ such that $\model'\in\ModelSet'$ for some $\ModelSet$ such that $\Mvalid{\model}\Theory{}$ for all $\model\in\ModelSet$. From this and $\Theory{}'\PDLvalid\varPsi$, we have $\Mvalid{\model'}\varPsi$.\\

\noindent($\Leftarrow$): Suppose $\Theory{}'\notPDLvalid\varPsi$. (We show that there is some model $\model'\in\ModelSet'$ such that $\ModelSet'\in\ErasureModels{\ModelSet}{\modfml}$ for some $\ModelSet$ with $\Mvalid{\model}\Theory{}$ for all $\model\in\ModelSet$, and $\notMvalid{\model'}\varPsi$.)

Given that $\Theory{}$ is modular, by Lemma~\ref{PreservationModularity} $\Theory{}'$ is modular, too. Then, by Lemma~\ref{LemmaOnlyModelsValS}, there is $\model'=\tuple{\valuations{{\STAT{}{}}'},\AccRel'}$ such that $\notMvalid{\model'}\varPsi$. Clearly $\Mvalid{\model'}\Theory{}'$, and from Lemma~\ref{LemmaComesFromSemantics} the result follows.
\end{proof}

%%%%%%%%%%%%%%%%%%%%%%%%%%%%%%%%%%%%%%%%%%%%%%%%%%%
\section{Assessment of Postulates for Change}\label{Postulates}
%%%%%%%%%%%%%%%%%%%%%%%%%%%%%%%%%%%%%%%%%%%%%%%%%%%

	Do our action theory change operators satisfy the classical postulates for change? Before answering this question, one should ask: do our operators behave like revision or update operators? We here address this issue and then show which postulates for theory change are satisfied by our definitions.
	
\subsection{Contraction or Erasure?}

	The distinction between revision/contraction and update/erasure for classical theories is historically controversial in the literature. The same is true for the case of modal theories describing actions and their effects. We here rephrase Katsuno and Mendelzon's definitions~\cite{KatsunoMendelzon92} in our terms so that we can see to which one our method is closer.

\myskip

	In Katsuno and Mendelzon's view, contracting a law $\modfml$ from an action theory $\Theory{}$ intuitively means that the description of the possible behavior of the dynamic world $\Theory{}$ must be adjusted to the possibility of $\modfml$ being false. This amounts to selecting from the models of $\neg\modfml$ those that are closest to models of $\Theory{}$ and allow them as models of the result.
	
	In contrast, update methods select, for each model $\model$ of $\Theory{}$, the set of models of $\modfml$ that are closest to $\model$. Erasing $\modfml$ from $\Theory{}$ means adding models to $\Theory{}$; for each model $\model$, we add all those models closest to $\model$ in which $\modfml$ is false. Hence, from our constructions so far it seems that our operators are closer to update than to revision.

	Moreover, according to Katsuno and Mendelzon's view~\cite{KatsunoMendelzon92}, our change operators would also be classified as update because we make modifications in each model independently, \ie, without changing other models.\footnote{Even if when contracting an effect law from one particular model we need to check the other models of the theory, those are not modified.} Besides that, in our setting a different ordering on the resulting models is induced by each model of $\Theory{}$ (see Definitions~\ref{DefErasureModelsExe}, \ref{DefErasureModelsEff} and~\ref{DefErasureModelsStat}), which according to Katsuno and Mendelzon is a typical property of an update/erasure method.
	
	Nevertheless, things get quite different when it comes to the postulates for theory change.

\subsection{The Postulates}

	We here analyze the behavior of our action theory change operators \wrt\ Katsuno and Mendelzon's postulates and variants. Let $\Theory{}=\STAT{}{}\cup\EFF{}{}\cup\EXE{}{}$ denote an action theory and $\modfml$ denote a law.

\paragraph{Monotonicity Postulate:} $\Theory{}\PDLvalid\Theory{}'$, for all $\Theory{}'\in
\ErasureSet{\Theory{}}{\modfml}$.

\myskip

	This postulate is our version of Katsuno and Mendelzon's (C1) and (E1) postulates for contraction and erasure, respectively, and is satisfied by our change operators. The proof is in Lemma~\ref{LemmaMonotony}. Such a postulate is not satisfied by the operators proposed by Herzig~\etal~\cite{HerzigEtAl-ECAI06}: there when removing \eg\ an executability law $\antec\imp\poss\act\top$ one may make $\antec\imp\nec\act\bot$ valid in all models of the resulting theory.

\paragraph{Preservation Postulate:} If $\Theory{}\notPDLvalid\modfml$, then $\PDLvalid\Theory{}
\sii\Theory{}'$, for all $\Theory{}'\in\ErasureSet{\Theory{}}{\modfml}$.

\myskip

	This is Katsuno and Mendelzon's (C2) postulate. Our operators satisfy it as far as whenever $\Theory{}\notPDLvalid\modfml$, then the models of the resulting theory are exactly the models of $\Theory{}$, because these are the minimal models falsifying $\modfml$.

	The corresponding version of Katsuno and Mendelzon's (E2) postulate about erasure, \ie, if $\Theory{}\PDLvalid\neg\modfml$, then $\PDLvalid\Theory{}\sii\Theory{}'$, for all $\Theory{}'\in\ErasureSet{\Theory{}}{\modfml}$, is clearly also satisfied by our operators as a special case of the postulate above. Satisfaction of (C2) indicates that our operators are closer to contraction than to erasure.

\paragraph{Success Postulate:} If $\Theory{}\notPDLvalid\bot$ and $\notPDLvalid\modfml$, then $\Theory{}'\notPDLvalid\modfml$, for all $\Theory{}'\in\ErasureSet{\Theory{}}{\modfml}$.

\myskip

	This postulate is our version of Katsuno and Mendelzon's (C3) and (E3) postulates. If $\modfml$ is a propositional $\fml\in\Fml$, our operators satisfy it, as long as the classical propositional change operator satisfies it. For the general case, however, as stated the postulate is not always satisfied. This is shown by the following example: let $\Theory{}=\{\neg\prp,\poss\act\top,\prp\imp\nec\act\bot\}$. Note that $\Theory{}$ is modular and consistent. Now, contracting the (contingent) formula $\prp\imp\poss\act\top$ from $\Theory{}$ gives us $\Theory{}'=\Theory{}$. Clearly $\Theory{}'\PDLvalid\prp\imp\poss\act\top$. This happens because, despite not being a tautology, $\prp\imp\poss\act\top$ is a `trivial' formula \wrt\ $\Theory{}$: since $\neg\prp$ is valid in all $\Theory{}$-models, $\prp\imp\poss\act\top$ is trivially true in these models.
	
	Fortunately, for all those formulas that are non-trivial consequences of the theory, our operators guarantee success of contraction:

\begin{theorem}
Let $\Theory{}$ be consistent, and $\modfml$ be an executability or an effect law such that $\STAT{}{}\notPDLvalid\modfml$. If $\Theory{}$ is modular, then $\Theory{}'\notPDLvalid\modfml$ for every $\Theory{}'\in\ErasureSet{\Theory{}}{\modfml}$.
\end{theorem}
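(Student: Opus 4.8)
The plan is to prove the claim by exhibiting, for each output theory $\Theory{}'$, an explicit \PDL-model of $\Theory{}'$ in which $\modfml$ fails; by soundness of \PDL\ this yields $\Theory{}'\notPDLvalid\modfml$. First I would dispose of the trivial case: if $\Theory{}\notPDLvalid\modfml$, then Algorithm~\ref{ExecOperator} (resp.~\ref{EffectOperator}) returns $\ErasureSet{\Theory{}}{\modfml}=\{\Theory{}\}$, and $\Theory{}\notPDLvalid\modfml$ holds by hypothesis. So assume $\Theory{}\PDLvalid\modfml$ and fix an arbitrary $\Theory{}'$ produced in the IF-branch, from a prime implicant $\term\in\IP{\STAT{}{}\et\antec}$ and a subset $\Set\subseteq\compatterm$ (together with some $\term'\in\IP{\STAT{}{}\et\neg\conseq}$ in the effect-law case).

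Next I would extract what the non-triviality hypothesis buys. Since $\STAT{}{}\notPDLvalid\modfml$ and $\STAT{}{}$ consists of Boolean formulas, $\modfml$ cannot hold vacuously over the static laws: for an executability law this forces $\STAT{}{}\et\antec$ to be satisfiable, and for an effect law it forces both $\STAT{}{}\et\antec$ and $\STAT{}{}\et\neg\conseq$ to be satisfiable. (Consistency of $\Theory{}$ guarantees $\STAT{}{}$ is itself satisfiable, so $\valuations{\STAT{}{}}\neq\emptyset$.) These satisfiability facts are exactly what makes the loops of the algorithm execute, and in particular they guarantee that the valuation $w^{*}$ determined by $\term\et\antec_{\Set}$ is consistent with $\STAT{}{}$ --- so that $w^{*}\in\valuations{\STAT{}{}}$ --- and that $w^{*}\sat\antec$, since $\term\CPLvalid\antec$.

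The core step is the construction. By Lemma~\ref{PreservationModularity}, $\Theory{}'$ is again modular, so by the characterization of modularity (a modular theory's big model is a model of it) its big model $\model_{\bigM}$ is a model of $\Theory{}'$; since neither algorithm alters $\STAT{}{}$, the worlds of $\model_{\bigM}$ are still $\valuations{\STAT{}{}}$ and $w^{*}$ is among them. For the executability law $\antec\imp\poss\act\top$ I then pass from $\model_{\bigM}$ to the model $\model'$ obtained by deleting every outgoing $\act$-arrow from $w^{*}$. Checking $\Mvalid{\model'}\Theory{}'$ is routine: the static laws are untouched, deleting arrows can only make the (unchanged) effect laws more true, and the only executability law of $\act$ that could demand $\poss\act\top$ at $w^{*}$ has antecedent $\antec_{i}\et\neg(\term\et\antec_{\Set})$, false at $w^{*}$. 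Since $w^{*}\sat\antec$ while $w^{*}$ now has no $\act$-successor, we get $\notwMvalid{w^{*}}{\model'}\antec\imp\poss\act\top$, hence $\Theory{}'\notPDLvalid\antec\imp\poss\act\top$.

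The effect-law case is where the main obstacle lies, and here I would use $\model_{\bigM}$ directly. Because the accessibility relation of a big model is maximal, $\model_{\bigM}$ contains an $\act$-arrow from $w^{*}$ to precisely those worlds of $\valuations{\STAT{}{}}$ that satisfy every consequent still forced at $w^{*}$ by the effect laws of $\Theory{}'$; thus $\notwMvalid{w^{*}}{\model_{\bigM}}\antec\imp\nec\act\conseq$ will follow as soon as one such world is a $\neg\conseq$-world. The task is therefore to exhibit a target valuation $w'\in\valuations{\STAT{}{}}$ with $w'\sat\term'$ (so $w'\sat\neg\conseq$) that jointly satisfies: the weakened laws $(\antec_{i}\et\term\et\antec_{\Set})\imp\nec\act(\conseq_{i}\ou\term')$, met since $w'\sat\term'$; the preservation laws $(\term\et\antec_{\Set}\et\lit)\imp\nec\act(\conseq\ou\lit)$, met by making $w'$ agree with $w^{*}$ on the selected literals $\lit$; and the consequents of the effect laws of $\act$ lying outside $\EFF{-}{\act}$, which must be shown jointly consistent with $\term'$ modulo $\STAT{}{}$. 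This joint consistency, together with the fact that the resulting valuation genuinely occurs in $\valuations{\STAT{}{}}$, is exactly what the relevant-target-world machinery secures, and it is precisely where modularity is indispensable: without it the intended target could be an ``imaginary'' world absent from the big model, as in the counter-example $\Theory{}=\{\prp_{1}\imp\poss\act\top,(\neg\prp_{1}\ou\prp_{2})\imp\nec\act\bot,\nec\act\neg\prp_{2}\}$ of Section~\ref{Correctness}. Once $w'$ is in hand, $w^{*}$ has an $\act$-successor falsifying $\conseq$, so $\notwMvalid{w^{*}}{\model_{\bigM}}\antec\imp\nec\act\conseq$ and therefore $\Theory{}'\notPDLvalid\antec\imp\nec\act\conseq$, completing the proof.
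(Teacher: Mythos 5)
You take a genuinely different route from the paper. The paper argues indirectly: it assumes $\Theory{}'\PDLvalid\modfml$, invokes Corollary~\ref{CorollaryCorrectness} to transfer this to every semantically contracted model, concludes that the one-model contraction $\ErasureModels{\model}{\modfml}$ must be empty, and from the emptiness conditions derives $\STAT{}{}\PDLvalid\modfml$, a contradiction. You instead build, for each output theory, an explicit countermodel starting from the big model of $\Theory{}'$ (legitimized by Lemma~\ref{PreservationModularity} together with the theorem that a theory is modular iff its big model is a model of it). For \emph{executability} laws your construction is correct and complete: deleting every $\act$-arrow leaving $w^{*}$ preserves the static laws, all effect laws, the executability laws of other actions, and the weakened executability laws of $\act$, whose antecedents are false at $w^{*}$; this part is, if anything, more self-contained and informative than the paper's argument.

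The \emph{effect-law} case, however, has a genuine gap, and it sits exactly at the crux. You correctly reduce the problem to exhibiting $w'\in\valuations{\STAT{}{}}$ satisfying $\term'$, the preserved literals, and the consequents of the laws in $\EFF{}{\act}\setminus\EFF{-}{\act}$ that apply at $w^{*}$, but you then assert that this existence ``is exactly what the relevant-target-world machinery secures.'' It is not: $\RelTarget{w,\antec\imp\nec\act\conseq,\model,\ModelSet}$ is defined relative to models of the \emph{original} theory and constrains which arrows the \emph{semantic} contraction may add; it says nothing about the accessibility relation of the big model of the \emph{syntactic} output $\Theory{}'$, and bridging those two is precisely the content of Theorems~\ref{TheoremCompleteness} and~\ref{TheoremSoundness}, which your direct approach was meant to bypass. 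Worse, the needed joint-consistency claim does not follow from the theorem's hypotheses at all. Take $\STAT{}{}=\emptyset$, $\EXE{}{}=\emptyset$, $\EFF{}{\act}=\{\prp_{1}\imp\nec\act\prp_{2},\ (\prp_{1}\et\prp_{3})\imp\nec\act\prp_{2}\}$, and contract $\modfml=\prp_{1}\imp\nec\act\prp_{2}$: the theory is consistent and modular and $\STAT{}{}\notPDLvalid\modfml$, yet $\{(\prp_{1}\et\prp_{3})\imp\nec\act\prp_{2}\}$ alone does not entail $\modfml$, so $\EFF{-}{\act}=\{\prp_{1}\imp\nec\act\prp_{2}\}$ and the law $(\prp_{1}\et\prp_{3})\imp\nec\act\prp_{2}$ survives untouched in every output of Algorithm~\ref{EffectOperator}. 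In the output theory built from any context $w^{*}$ containing $\prp_{1}$ and $\prp_{3}$, that untouched law forces $\prp_{2}$ at every successor of $w^{*}$, while the weakened law $(\prp_{1}\et\neg(\term\et\antec_{\Set}))\imp\nec\act\prp_{2}$ forces $\prp_{2}$ at every successor of every other world; hence $\Theory{}'\PDLvalid\modfml$, and your target valuation $w'$ (which would have to satisfy both $\term'=\neg\prp_{2}$ and the untouched consequent $\prp_{2}$) simply does not exist. So the missing step is not a routine verification: as stated it can fail, and closing it would require an extra hypothesis (e.g.\ ruling out such redundant weaker laws outside $\EFF{-}{\act}$). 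Incidentally, the same soft spot is hidden in the paper's own appendix proof of Theorem~\ref{TheoremCompleteness}, which claims it ``is enough to show that $\model'$ is a model of the added laws,'' silently assuming the laws of $\Theory{}\setminus\EFF{-}{\act}$ survive the added arrow — your proposal at least has the merit of surfacing exactly where the difficulty lies.
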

\begin{proof}
Suppose there is $\Theory{}'\in\ErasureSet{\Theory{}}{\modfml}$ such that $\Theory{}'\PDLvalid\modfml$. As $\Theory{}$ is modular, Corollary~\ref{CorollaryCorrectness} gives us $\Mvalid{\model'}\modfml$ for every $\model'\in\ModelSet'$ such that $\ModelSet'\in\ErasureModels{\ModelSet}{\modfml}$, where $\ModelSet=\{\model : \Mvalid{\model}\Theory{} \text{ and } \model=\tuple{\valuations{\STAT{}{}},\AccRel}\}$.

If $\Mvalid{\model'}\modfml$ for every $\model'\in\ModelSet'$, then even for $\model''\in\ModelSet'\setminus\ModelSet$ we have $\Mvalid{\model''}\modfml$. But $\model''\in\ErasureModels{\model}{\modfml}$ for some $\model\in\ModelSet$, and by definition $\notMvalid{\model''}\modfml$. Hence $\ErasureModels{\model}{\modfml}=\emptyset$, and then the truth of $\modfml$ in $\model$ does not depend on $\ract$. Then, whether $\modfml$ has the form $\antec\imp\poss\act\top$ or $\antec\imp\nec\act\conseq$, for $\antec,\conseq\in\Fml$, this holds only if $\STAT{}{}\CPLvalid\neg\antec$ (see Definitions~\ref{DefErasureOneModelExe} and~\ref{DefErasureOneModelEff}), in which case we get $\STAT{}{}\PDLvalid\modfml$.
\end{proof}

\paragraph{Equivalences Postulate:} If $\PDLvalid\Theory{1}\sii\Theory{2}$ and $\PDLvalid\modfml_{1}\sii\modfml_{2}$, then $\PDLvalid\Theory{1}'\sii\Theory{2}'$, for $\Theory{1}'\in\ErasureSet{(\Theory{1})}{\modfml{2}}$ and $\Theory{2}'\in\ErasureSet{(\Theory{2})}{\modfml_{1}}$.

\myskip

	This postulate corresponds to Katsuno and Mendelzon's (C4) and (E4) postulates. Under modularity and the assumption that the propositional change operator satisfies (C4)/(E4), our operations satisfy this postulate:

\begin{theorem}
Let $\Theory{1}$ and $\Theory{2}$ be modular. If $\PDLvalid\Theory{1}\sii\Theory{2}$ and $\PDLvalid\modfml_{1}\sii\modfml_{2}$, then for each $\Theory{1}'\in\ErasureSet{(\Theory{1})}{\modfml_{2}}$ there is $\Theory{2}'\in\ErasureSet{(\Theory{2})}{\modfml_{1}}$ such that $\PDLvalid\Theory{1}'\sii\Theory{2}'$, and vice-versa.
\end{theorem}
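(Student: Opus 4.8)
The plan is to route the entire argument through the semantics and to lean on the correctness results of Section~\ref{Correctness}, since the semantic contraction operators act directly on Kripke structures and are therefore the natural place for logical equivalence to operate transparently.

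First I would reduce both theories to their \emph{canonical} models, i.e.\ those of the form $\tuple{\valuations{\STAT{}{}},\AccRel}$ whose worlds are exactly the valuations of the static laws. Because $\PDLvalid\Theory{1}\sii\Theory{2}$, the two theories share all Boolean consequences, and since both are modular, each $\STAT{i}{}$ captures precisely those consequences; hence $\STAT{1}{}$ and $\STAT{2}{}$ have the same valuations and $\valuations{\STAT{1}{}}=\valuations{\STAT{2}{}}$. As $\Theory{1}$ and $\Theory{2}$ have the same models, their restriction to structures with this common world-set coincides, so both determine one and the same set $\ModelSet^{*}$ of canonical models.

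The key lemma I would then establish is that the semantic contraction depends only on the \emph{model class} of the law and not on its syntax: if $\PDLvalid\modfml_{1}\sii\modfml_{2}$, then $\ErasureModels{\ModelSet^{*}}{\modfml_{1}}=\ErasureModels{\ModelSet^{*}}{\modfml_{2}}$. The falsification clause $\notMvalid{\model'}\modfml$ appearing in Definitions~\ref{DefErasureOneModelExe}, \ref{DefErasureOneModelEff} and~\ref{DefErasureOneModelStat} is automatically invariant under equivalence; what must be checked is that the \emph{structural} side-conditions (which arrows are removed, which target worlds are relevant, which worlds are added) also agree. I would argue first that equivalent laws are necessarily of the same type---a purely Boolean formula cannot be equivalent to a genuinely modal one, and $\poss\act\top$, an existential condition on successors, cannot match a $\nec\act\conseq$, a universal one, across the fusion---and then that within a type the components coincide: for static laws the worlds falsifying $\fml_{1}$ and $\fml_{2}$ literally agree; for executability laws, evaluating the equivalence in a structure with no $\act$-arrows yields $\antec_{1}\equiv\antec_{2}$, so the same arrows are eligible for removal; and for effect laws, probing with single-successor structures that realise arbitrary target valuations forces $\conseq_{1}\equiv\conseq_{2}$ and, by evaluation in suitable structures, $\antec_{1}\equiv\antec_{2}$, whence the $\antec$-worlds and the prime-subvaluation bases of $\neg\conseq$ entering the relevant-target-world definition are identical.

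With this invariance in hand the conclusion follows by correctness. Given $\Theory{1}'\in\ErasureSet{(\Theory{1})}{\modfml_{2}}$, Theorem~\ref{TheoremSoundness} identifies its canonical models with some $\ModelSet'\in\ErasureModels{\ModelSet^{*}}{\modfml_{2}}$; by the key lemma $\ModelSet'\in\ErasureModels{\ModelSet^{*}}{\modfml_{1}}$ as well, and Theorem~\ref{TheoremCompleteness} applied to $\Theory{2}$ and $\modfml_{1}$ produces a $\Theory{2}'\in\ErasureSet{(\Theory{2})}{\modfml_{1}}$ whose canonical models are exactly these. Since $\Theory{1}'$ and $\Theory{2}'$ are both modular (by Lemma~\ref{PreservationModularity}, as used in Corollary~\ref{CorollaryCorrectness}) and share their canonical models, Corollary~\ref{CorollaryCorrectness} yields $\PDLvalid\Theory{1}'\sii\Theory{2}'$; the ``vice-versa'' direction is the same argument with the indices exchanged. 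I expect the main obstacle to be the effect-law case of the key lemma: disentangling $\antec$ from $\conseq$ in the relevant-target-world construction when only the combined law $\antec\imp\nec\act\conseq$ is known up to equivalence, together with the degenerate cases (unsatisfiable antecedent, valid consequent) in which the contraction is vacuous and must be matched separately.
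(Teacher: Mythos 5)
Your proposal is correct and follows essentially the same route as the paper's own proof: both reduce the claim to the facts that equivalent theories and equivalent laws have the same models, that the semantic contraction therefore performs the same operations on those models, and then transfer back to syntax via Corollary~\ref{CorollaryCorrectness}. The only notable difference is one of rigor rather than of method: you isolate and actually justify (via the type-separation and component-extraction arguments, and by flagging the degenerate tautological cases) the syntax-invariance of the semantic contraction, which the paper simply asserts in one line, while the paper in turn makes explicit the assumption that the classical operator $\propcontract$ satisfies (C4)/(E4) for the static-law case, an assumption you inherit only implicitly through the correctness theorems.
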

\begin{proof}
The proof follows straight from our results: since $\PDLvalid\Theory{1}\sii\Theory{2}$ and $\PDLvalid\modfml_{1}\sii\modfml_{2}$, they have pairwise the same models. Hence, given $\model$ such that $\Mvalid{\model}\Theory{1}$ and $\Mvalid{\model}\Theory{2}$, the semantic contraction of $\modfml_{1}$ and that of $\modfml_{2}$ from $\model$ have the same operations on $\model$. As $\Theory{1}$ and $\Theory{2}$ are modular, Corollary~\ref{CorollaryCorrectness} guarantees we get the same syntactical results. Moreover, as the classical operator $\propcontract$ satisfies (C4)/(E4), if follows that $\PDLvalid\Theory{1}'\sii\Theory{2}'$.
\end{proof}

\paragraph{Recovery Postulate:} $\Theory{}'\cup\{\modfml\}\PDLvalid\Theory{}$, for all $\Theory{}'\in\ErasureSet{\Theory{}}{\modfml}$.

\myskip

	This is the action theory counterpart of Katsuno and Mendelzon's (C5) and (E5) postulates. Again we rely on modularity in order to satisfy it.

\begin{theorem}
Let $\Theory{}$ be modular. $\Theory{}'\cup\{\modfml\}\PDLvalid\Theory{}$, for all $\Theory{}'\in\ErasureSet{\Theory{}}{\modfml}$.
\end{theorem}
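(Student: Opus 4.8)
The plan is to establish $\Theory{}'\cup\{\modfml\}\PDLvalid\Theory{}$ by a case analysis on the syntactic type of $\modfml$, reading off the structure of the contracted theory from the relevant algorithm. In each of the three cases, if the guard of the outer \textbf{if} fails we are in the \textbf{else} branch, where $\ErasureSet{\Theory{}}{\modfml}=\{\Theory{}\}$ and the claim is immediate; so I only treat the branch that genuinely contracts $\modfml$. In that branch every algorithm leaves all laws of $\Theory{}$ in place except for one distinguished block --- $\EXE{}{\act}$ for executabilities, the relevant effect laws $\EFF{-}{\act}$ for effects, $\STAT{}{}$ for statics --- which it replaces by weakened copies guarded by a context extracted from $\modfml$. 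The untouched laws belong to $\Theory{}'$ and are entailed outright, so the entire burden is to show that, after conjoining $\modfml$ back, each weakened law re-derives the original law it displaced.

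For an executability law $\modfml=\antec\imp\poss\act\top$, each $\antec_{i}\imp\poss\act\top\in\EXE{}{\act}$ is replaced by $(\antec_{i}\et\neg(\term\et\antec_{\Set}))\imp\poss\act\top$. Fix a world in which $\antec_{i}$ holds. If $\neg(\term\et\antec_{\Set})$ also holds there, the weakened law yields $\poss\act\top$; otherwise $\term\et\antec_{\Set}$ holds, and since $\term\in\IP{\STAT{}{}\et\antec}$ gives $\CPLvalid\term\imp\antec$, we have $\antec$ true there, so the re-added $\modfml$ yields $\poss\act\top$. Either way $\antec_{i}\imp\poss\act\top$ is recovered; this case is purely tautological and needs no modularity.

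The effect case is the crux. Here $\modfml=\antec\imp\nec\act\conseq$ and each $\antec_{i}\imp\nec\act\conseq_{i}\in\EFF{-}{\act}$ is replaced by the pair $(\antec_{i}\et\neg(\term\et\antec_{\Set}))\imp\nec\act\conseq_{i}$ and $(\antec_{i}\et\term\et\antec_{\Set})\imp\nec\act(\conseq_{i}\ou\term')$. Fixing a world satisfying $\antec_{i}$: outside $\term\et\antec_{\Set}$ the first law directly gives $\nec\act\conseq_{i}$. Inside $\term\et\antec_{\Set}$ we again have $\antec$ (as $\CPLvalid\term\imp\antec$), so the re-added $\modfml$ supplies $\nec\act\conseq$, while the second weakened law supplies $\nec\act(\conseq_{i}\ou\term')$; combining them gives $\nec\act(\conseq\et(\conseq_{i}\ou\term'))$. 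The key step is that $\term'\in\IP{\STAT{}{}\et\neg\conseq}$ forces $\CPLvalid\term'\imp\neg\conseq$, hence $\CPLvalid(\conseq\et\term')\imp\bot$, so $\conseq\et(\conseq_{i}\ou\term')$ collapses to $\conseq\et\conseq_{i}$ and we obtain $\nec\act\conseq_{i}$. Thus each removed effect law is recovered. Modularity is invoked precisely to guarantee that $\EFF{-}{\act}$ is well defined: since it lets us deduce effect laws without the executability laws, $\Theory{}\PDLvalid\antec\imp\nec\act\conseq$ ensures $\STAT{}{}\cup\EFF{}{\act}\PDLvalid\antec\imp\nec\act\conseq$, so a minimal $\EFF{\antec,\conseq}{\act}$ exists.

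For a static law $\modfml=\fml$, Algorithm~\ref{StatOperator} replaces $\STAT{}{}$ by some $\STAT{-}{}\in\STAT{}{}\propcontract\fml$, weakens each $\antec_{i}\imp\poss\act\top$ to $(\antec_{i}\et\fml)\imp\poss\act\top$, and adds $\neg\fml\imp\nec\act\bot$; the effect laws are untouched. Re-adding $\fml$ makes it valid in every world, so $(\antec_{i}\et\fml)\imp\poss\act\top$ collapses to $\antec_{i}\imp\poss\act\top$, recovering the executabilities, while $\STAT{-}{}\cup\{\fml\}\CPLvalid\STAT{}{}$ recovers the static laws provided the classical operator $\propcontract$ satisfies the (C5) recovery postulate (the standing assumption on $\propcontract$). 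Collecting the three cases yields $\Theory{}'\cup\{\modfml\}\PDLvalid\Theory{}$. I expect the effect case --- specifically the reconstruction of $\nec\act\conseq_{i}$ from $\nec\act\conseq$ and $\nec\act(\conseq_{i}\ou\term')$ via the inconsistency of $\conseq\et\term'$ --- to be the only genuinely delicate point; the rest is bookkeeping.
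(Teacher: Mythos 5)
Your proof is correct, but it takes a genuinely different route from the paper's. You argue syntactically: reading off from Algorithms~\ref{ExecOperator}--\ref{StatOperator} exactly which laws each contraction displaces, and showing that conjoining $\modfml$ back re-derives each displaced law (the nice observation being that $\term\in\IP{\STAT{}{}\et\antec}$ gives $\CPLvalid\term\imp\antec$, and $\term'\in\IP{\STAT{}{}\et\neg\conseq}$ gives $\CPLvalid\term'\imp\neg\conseq$, so $\nec\act\conseq\et\nec\act(\conseq_{i}\ou\term')$ collapses to $\nec\act\conseq_{i}$ by normality of $\nec\act$). The paper instead argues semantically: it dispatches the case $\Theory{}\notPDLvalid\modfml$ via the preservation postulate, and in the main case invokes Corollary~\ref{CorollaryCorrectness} to classify every model of $\Theory{}'$ as either a model of $\Theory{}$ or an element of some $\erasure{\model}{\modfml}$; since the latter all falsify $\modfml$, the models of $\Theory{}'\cup\{\modfml\}$ are exactly models of $\Theory{}$. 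Each approach buys something: the paper's is uniform across the three law types and short once the correctness machinery is in place, but it makes the theorem depend on modularity through the full weight of Theorems~\ref{TheoremCompleteness} and~\ref{TheoremSoundness}. Yours is self-contained, exposes that modularity is needed only to guarantee $\EFF{-}{\act}$ exists (the executability and static cases need none), and makes explicit a hypothesis the paper leaves implicit, namely that the classical operator $\propcontract$ satisfies recovery (C5) --- which is consistent with the paper's standing assumption that $\propcontract$ "behaves like a classical contraction operator" in the sense of Katsuno--Mendelzon, and is in fact needed under either proof strategy for the static case. The price of your route is that it is tied to the exact syntactic shape of the algorithms' output, so any change to the operators would require redoing the case analysis, whereas the paper's argument survives as long as correctness does.
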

\begin{proof}
If $\Theory{}\notPDLvalid\modfml$, because our operators satisfy the preservation postulate, $\Theory{}'=\Theory{}$, and then the result follows by monotonicity.

Let $\Theory{}\PDLvalid\modfml$, and let $\ModelSet'$ denote the set of all models of $\Theory{}'$. As $\Theory{}$ is modular, by Corollary~\ref{CorollaryCorrectness} every $\model'\in\ModelSet'$ is such that either $\Mvalid{\model'}\Theory{}$ (and then $\Mvalid{\model'}\modfml$) or $\model'\in\erasure{\model}{\modfml}$ (and then $\model'\in\ErasureModels{\model}{\modfml}$) for some $\model$ such that $\Mvalid{\model}\Theory{}$.

Let $\ModelSet''$ denote the set of all models of $\Theory{}'\cup\{\modfml\}$. Clearly $\ModelSet''\subseteq\ModelSet'$, by monotonicity. Moreover, every $\model''\in\ModelSet''$ is such that $\Mvalid{\model''}\modfml$, hence $\model''\notin\ErasureModels{\model}{\modfml}$ for every $\model$ such that $\Mvalid{\model}\Theory{}$, and then $\model''\notin\erasure{\model}{\modfml}$, for any $\model$ model of $\Theory{}$. Thus $\model''$ is a model of $\Theory{}$ and then $\Theory{}'\cup\{\modfml\}\PDLvalid\Theory{}$.
\end{proof}

\myskip

	Let $\bigvee\ErasureSet{\Theory{}}{\modfml}$ denote the disjunction of all $\Theory{}'$ in $\ErasureSet{\Theory{}}{\modfml}$.

\paragraph{Disjunctive rule:} $\ErasureSet{(\Theory{1}\ou\Theory{2})}{\modfml}$ is equivalent to $\bigvee\ErasureSet{(\Theory{1})}{\modfml}\ou\bigvee\ErasureSet{(\Theory{2})}{\modfml}$.

\myskip

	This is our version of (E8) erasure postulate by Katsuno and Mendelzon. Clearly our syntactical operators do not manage to contract a law from a disjunction of theories $\Theory{1}\ou\Theory{2}$. Nevertheless, by proving that it holds in the semantics, from the correctness of our operators, we get an equivalent operation. Again the fact that the theories under concern are modular gives us the result.

\begin{theorem}
Let $\Theory{1}$ and $\Theory{2}$ be modular, and $\modfml$ be a law. Then
\[
\PDLvalid\bigvee\ErasureSet{(\Theory{1}\ou\Theory{2})}{\modfml}\sii(\bigvee\ErasureSet{(\Theory{1})}{\modfml}\ou\bigvee\ErasureSet{(\Theory{2})}{\modfml})
\]
\end{theorem}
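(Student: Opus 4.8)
The plan is to prove the identity semantically and then transfer it to the syntactic operators through the correctness results of Section~\ref{Correctness}, exactly as the preceding remark indicates. Since the algorithms of Section~\ref{SyntaxErasure} only accept genuine action theories as input, I would read the left-hand side $\bigvee\ErasureSet{(\Theory{1}\ou\Theory{2})}{\modfml}$ through its semantic counterpart: a \PDL-model validates $\Theory{1}\ou\Theory{2}$ iff it validates $\Theory{1}$ or $\Theory{2}$, so the model class of the disjunction is $\ModelSet_{1}\cup\ModelSet_{2}$, where $\ModelSet_{i}$ is the set of models of $\Theory{i}$ of the form $\tuple{\valuations{\STAT{}{i}},\AccRel}$. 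First I would invoke Corollary~\ref{CorollaryCorrectness} (together with Theorems~\ref{TheoremCompleteness} and~\ref{TheoremSoundness}) to rewrite each side as a statement purely about the semantic operator: for modular $\Theory{i}$ the models of $\bigvee\ErasureSet{(\Theory{i})}{\modfml}$ are precisely those occurring in the sets produced by $\ErasureModels{\ModelSet_{i}}{\modfml}$, and likewise the intended reading of the left-hand side is the collection of models occurring in $\ErasureModels{\ModelSet_{1}\cup\ModelSet_{2}}{\modfml}$. The goal thus reduces to the set-level identity
\[
\bigcup\ErasureModels{\ModelSet_{1}\cup\ModelSet_{2}}{\modfml}=\bigcup\ErasureModels{\ModelSet_{1}}{\modfml}\ \cup\ \bigcup\ErasureModels{\ModelSet_{2}}{\modfml},
\]
where $\bigcup$ collects all models appearing in any of the produced model sets.

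The engine of the argument is that the semantic contraction is defined model by model: by Definitions~\ref{DefErasureModelsExe}, \ref{DefErasureModelsEff} and~\ref{DefErasureModelsStat}, every element of $\ErasureModels{\ModelSet}{\modfml}$ has the shape $\ModelSet\cup\{\model'\}$ with $\model'\in\erasure{\model}{\modfml}$ for a single source model $\model\in\ModelSet$. Hence $\bigcup\ErasureModels{\ModelSet}{\modfml}=\ModelSet\cup\bigcup_{\model\in\ModelSet}\erasure{\model}{\modfml}$. Splitting the source $\model$ according to whether it lies in $\ModelSet_{1}$ or in $\ModelSet_{2}$ then matches the two sides above at once, \emph{provided} one knows that the per-model contraction $\erasure{\model}{\modfml}$ of a source $\model\in\ModelSet_{1}$ does not change when the ambient set grows from $\ModelSet_{1}$ to $\ModelSet_{1}\cup\ModelSet_{2}$ (and symmetrically). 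This invariance is where modularity enters: it ensures that each $\ModelSet_{i}$ consists exactly of the accessibility variants over the full world set $\valuations{\STAT{}{i}}$ compatible with the effect and executability laws, so that the conditions consulted during contraction are faithful images of theory-level entailments rather than artefacts of an arbitrary choice of models.

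The main obstacle is precisely this ambient-set dependence of $\erasure{\model}{\modfml}$. For static and executability laws the dependence is absent: Definitions~\ref{DefErasureOneModelExe} and~\ref{DefErasureOneModelStat} never mention $\ModelSet$, so $\erasure{\model}{\modfml}$ is fixed by $\model$ and $\modfml$ alone and distributivity is immediate. For effect laws, however, the set of relevant target worlds does consult the ambient models, in the clauses governing which literals may change and which must be preserved, so I would treat this case separately. Fixing a source $\model\in\ModelSet_{1}$ and a departing world $w$ of $\model$ (hence $w\in\valuations{\STAT{}{1}}$), I would split on membership of $w$ in $\valuations{\STAT{}{2}}$. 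If $w\notin\valuations{\STAT{}{2}}$ then $w$ is a world of no model in $\ModelSet_{2}$, so the clauses quantifying over the ambient set effectively range only over $\ModelSet_{1}$ and $\erasure{\model}{\modfml}$ is untouched. The delicate subcase is $w\in\valuations{\STAT{}{1}}\cap\valuations{\STAT{}{2}}$, a state common to both theories; here I expect the argument to show, reading the relevant clauses as entailments about $\act$ at $w$ under modularity, that the two opposing effects of enlarging the ambient set (fewer uniformly preserved effects, but more witnesses permitting a literal to persist) cancel at the level of the resulting target-world set. Pinning down this cancellation is the step I anticipate will require the most care; once it is in place, both set-level inclusions follow by the source-splitting above, and the theorem follows by correctness.
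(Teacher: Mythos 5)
Your high-level route coincides with the paper's own proof: both directions are obtained by passing through Corollary~\ref{CorollaryCorrectness}, observing that the models of $\Theory{1}\ou\Theory{2}$ are precisely the models of $\Theory{1}$ together with those of $\Theory{2}$, and distributing the model-by-model semantic contraction over this union. Your treatment of static and executability laws is also sound: Definitions~\ref{DefErasureOneModelExe} and~\ref{DefErasureOneModelStat} indeed make no reference to the ambient set, so for those two kinds of law distributivity is immediate.

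The proposal nevertheless has a genuine gap, and it is exactly the step you defer: the ``cancellation'' claim for effect laws. You give no argument for it, and there is no reason to expect it to hold. In $\RelTarget{w,\antec\imp\nec\act\conseq,\model,\ModelSet}$ the universal clause (for every $\model_{i}\in\ModelSet$, $\wMvalid{w}{\model_{i}}\nec\act\conseq'$) licenses literals in $w'\setminus w$, whereas the existential clause (some $\model_{i}\in\ModelSet$ with $\notwMvalid{w}{\model_{i}}\nec\act\neg\lit$) licenses literals in $w\cap w'$; these clauses govern disjoint sets of literals of a fixed candidate target $w'$, so when the ambient set grows from $\ModelSet_{1}$ to $\ModelSet_{1}\cup\ModelSet_{2}$ a given $w'$ can be disqualified by the first clause or newly qualified by the second, and the set of relevant target worlds can simultaneously lose and gain elements. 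Nothing forces these changes to compensate: modularity constrains only the Boolean consequences of each theory and does not make models of $\Theory{2}$ agree with models of $\Theory{1}$ about the behaviour of $\act$ at a shared world in $\valuations{\STAT{}{1}}\cap\valuations{\STAT{}{2}}$. Since $\erasure{\model}{\modfml}$ is then obtained by minimizing under $\submodel{\model}$ over these incomparably perturbed sets, the invariance you need --- that $\erasure{\model}{\modfml}$ computed in ambient $\ModelSet_{1}$ equals $\erasure{\model}{\modfml}$ computed in ambient $\ModelSet_{1}\cup\ModelSet_{2}$ --- remains unestablished, and with it your set-level identity. It is only fair to add that the paper's proof never descends to this level of detail: it simply asserts that a model resulting from contracting $\modfml$ ``from models of $\Theory{1}$'' thereby also results from contracting $\modfml$ ``in models of $\Theory{1}\ou\Theory{2}$'', which tacitly presupposes the very invariance you could not prove. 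So you have isolated a real subtlety that the paper glosses over, but as a proof your proposal is incomplete: its pivotal step is a conjecture, not a lemma.
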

\begin{proof}

\noindent($\Leftarrow$): Let $\model'$ be such that $\Mvalid{\model'}\bigvee\ErasureSet{(\Theory{1})}{\modfml}\ou\bigvee\ErasureSet{(\Theory{2})}{\modfml}$. Then $\Mvalid{\model'}\bigvee\ErasureSet{(\Theory{1})}{\modfml}$ or $\Mvalid{\model'}\bigvee\ErasureSet{(\Theory{2})}{\modfml}$. Suppose $\Mvalid{\model'}\bigvee\ErasureSet{(\Theory{1})}{\modfml}$ (the other case is analogous). Then there is $(\Theory{1})'\in\ErasureSet{(\Theory{1})}{\modfml}$ such that $\Mvalid{\model'}(\Theory{1})'$. Then by Corollary~\ref{CorollaryCorrectness}, there is $\ModelSet'\in\ErasureModels{\ModelSet}{\modfml}$ such that $\model'\in\ModelSet'$, for $\ModelSet$ a set of models of $\Theory{1}$. Then $\model'$ is a model resulting from contracting $\modfml$ from models of $\Theory{1}$ , and then $\model'$ also results from contracting $\modfml$ in models of $\Theory{1}\ou\Theory{2}$, \viz\ those models of $\Theory{1}$. Then by Corollary~\ref{CorollaryCorrectness}, there is $(\Theory{1}\ou\Theory{2})'\in\ErasureSet{(\Theory{1}\ou\Theory{2})}{\modfml}$ such that $\Mvalid{\model'}(\Theory{1}\ou\Theory{2})'$, and then $\Mvalid{\model'}\bigvee\ErasureSet{(\Theory{1}\ou\Theory{2})}{\modfml}$.\\

\noindent($\Rightarrow$): Let $\model'$ be such that $\Mvalid{\model'}\bigvee\ErasureSet{(\Theory{1}\ou\Theory{2})}{\modfml}$. Then there is $(\Theory{1}\ou\Theory{2})'\in\ErasureSet{(\Theory{1}\ou\Theory{2})}{\modfml}$ such that $\Mvalid{\model'}(\Theory{1}\ou\Theory{2})'$. By Corollary~\ref{CorollaryCorrectness}, there is $\ModelSet'\in\ErasureModels{\ModelSet}{\modfml}$ such that $\model'\in\ModelSet'$, for $\ModelSet$ a set of models of $\Theory{1}\ou\Theory{2}$. Then $\model'$ is a model resulting from contracting $\modfml$ from models of $\Theory{1}\ou\Theory{2}$. Hence $\model'$ results from contracting $\modfml$ from models of $\Theory{1}$ or from models of $\Theory{2}$. Suppose the former is the case (the second is analogous). Then by Corollary~\ref{CorollaryCorrectness} there is $(\Theory{1})'\in\ErasureSet{(\Theory{1})}{\modfml}$ such that $\Mvalid{\model'}(\Theory{1})'$, and then $\Mvalid{\model'}\bigvee\ErasureSet{(\Theory{1})}{\modfml}$.
\end{proof}

\myskip

	We have thus shown that our constructions satisfy (E8) postulate. Nevertheless there is no evidence whether it is really expected here. This supports our position that our operators' behavior is closer to contraction than to erasure. To finish up we state a new postulate:

\paragraph{Preservation of modularity:} If $\Theory{}$ is modular, then every $\Theory{}'\in\ErasureSet{\Theory{}}{\modfml}$ is modular.

\myskip

	Changing a modular theory should not make it nonmodular. This is not a standard postulate, but we think that as a good property modularity should be preserved across changing an action theory. If so, this means that whether a theory is modular or not can be checked once for all and one does not need to care about it during the future evolution of the action theory, \ie, when other changes will be made on it. Our operators satisfy this postulate and the proof is given in Appendix~\ref{ProofTheoremSoundness}.

%%%%%%%%%%%%%%%%%%%%%%%%%%%%%%%%%%%%%%%%%%%%%%%%%%%
\section{A Semantics for Action Theory Revision}\label{SemanticsRevision}
%%%%%%%%%%%%%%%%%%%%%%%%%%%%%%%%%%%%%%%%%%%%%%%%%%%

	So far we have analyzed the case of contraction: when evolving a theory one realizes that it is too strong and hence it has to be weakened. Let's now take a look at the other way round, \ie, the theory is too liberal and the agent discovers new {\em laws} about the world that should be added to her beliefs, which amounts to strengthening them.
	
	Suppose the action theory of our scenario example were initially stated as follows:
\[
\Theory{}=\left\{
  \begin{array}{c}
	 \Coffee\imp\Hot, \Token\imp\poss\buy\top, \\
	 \neg\Coffee\imp\nec\buy\Coffee, \neg\Token\imp\nec\buy\bot, \\
	 \Coffee\imp\nec\buy\Coffee, \Hot\imp\nec\buy\Hot
  \end{array}
  \right\}
\]
Then the big-model of $\Theory{}$ is as shown in Figure~\ref{ExBigModelRevision}.

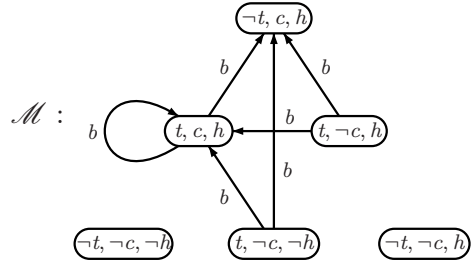
\begin{figure}[h]
\begin{center}
\parbox{1.5cm}{
$\model$ :
}\parbox{4cm}{
\scriptsize{
\begin{picture}(35,35)(0,0) % (0,0) Ž embaixo ˆ esquerda, (50,50) Ž acima ˆ direita
    \gasset{Nw=9,Nh=4,linewidth=0.3}
    \thinlines
    \node(A1)(10,15){$\AToken,\ACoffee,\AHot$}
    \node[Nw=10](A2)(20,30){$\neg\AToken,\ACoffee,\AHot$}
    \node[Nw=10](A3)(30,15){$\AToken,\neg\ACoffee,\AHot$}
    \node[Nw=13](A4)(0,0){$\neg\AToken,\neg\ACoffee,\neg\AHot$}
    \node[Nw=12](A5)(40,0){$\neg\AToken,\neg\ACoffee,\AHot$}
    \node[Nw=12](A6)(20,0){$\AToken,\neg\ACoffee,\neg\AHot$}
    \drawedge[ELside=l](A1,A2){$\Abuy$}
    \drawedge[ELside=r](A3,A2){$\Abuy$}
    \drawedge[ELside=r,ELpos=33](A6,A2){$\Abuy$}
    \drawloop[ELside=l,loopangle=180](A1){$\Abuy$}
    \drawedge[ELside=l](A6,A1){$\Abuy$}
    \drawedge[ELside=r,ELpos=40](A3,A1){$\Abuy$}
%    \drawedge[curvedepth=-3,linewidth=0.4,ELside=r](A1,A3){$\act_{1}$}
%    \drawedge[curvedepth=3,linewidth=0.4,ELside=l](A1,A3){$\act_{2}$}
%    \drawedge[linewidth=0.4,ELside=l](A2,A3){$\act_{1}$}
%    \drawloop[linewidth=0.4,loopangle=90](A2){$\act_{1}$}
%    \drawloop[linewidth=0.4,loopangle=270](A3){$\act_{2}$}
\end{picture}
} % End \scriptsize
} % End \parbox
\end{center}
\caption{Model of the new initial action domain description.}
\label{ExBigModelRevision}
\end{figure}

	Looking at model $\model$ in Figure~\ref{ExBigModelRevision} we can see that, for example, the agent does not know that she loses her token every time she buys coffee at the machine. This is a new law that she should incorporate to her knowledge base at some stage of her action theory evolution.

\myskip

	Contrary to contraction, where we want the negation of some law to become {\em satisfiable}, in revision we want to make a new law {\em valid}. This means that one has to eliminate all cases satisfying its negation. This depicts the duality between revision and contraction: whereas in the latter one invalidates a formula by making its negation satisfiable, in the former one makes a formula valid by forcing its negation to be unsatisfiable prior to adding the new law to the theory.

\myskip

	The idea behind our semantics is as follows: we initially have a set of models $\ModelSet$ in which a given formula $\modfml$ is (potentially) not valid, \ie, $\modfml$ is (possibly) not true in every model in $\ModelSet$. In the result we want to have only models of $\modfml$. Adding $\modfml$-models to $\ModelSet$ is of no help. Moreover, adding models makes us lose laws: the corresponding resulting theory would be more liberal.
	
	One solution amounts to deleting from $\ModelSet$ those models that are not $\modfml$-models. Of course removing only some of them does not solve the problem, we must delete every such a model. By doing that, all resulting models will be models of $\modfml$. (This corresponds to {\em theory expansion}, when the resulting theory is satisfiable.) However, if $\ModelSet$ contains no model of $\modfml$, we will end up with $\emptyset$. Consequence: the resulting theory is inconsistent. (This is the main revision problem.) In this case the solution is to {\em substitute} each model $\model$ in $\ModelSet$ by its {\em nearest modification} $\RevisionModels{\model}{\modfml}$ that makes $\modfml$ true. This lets us to keep as close as possible to the original models we had. But, what if for one model in $\ModelSet$ there are several minimal (incomparable) modifications of it validating $\modfml$? In that case we will consider all of them. The result will also be a {\em list of models} $\RevisionModels{\ModelSet}{\modfml}$, all~being~models~of~$\modfml$.

\myskip

	Before defining revision of sets of models, we present what modifications of (individual) models are.

\subsection{Revising a Model by a Static Law}\label{SemRevisionStatic}

	Suppose that our coffee deliverer agent discovers that the only hot beverage that is served on the machine is coffee. In this case, she might want to revise her beliefs with the new static law $\neg\Coffee\imp\neg\Hot$: she cannot hold a hot beverage that is not a coffee.

	Considering the model depicted in Figure~\ref{ExBigModelRevision}, one sees that the formula $\neg\Coffee\et\Hot$ is satisfiable. As we do not want this, the first step is to {\em remove} all worlds in which $\neg\Coffee\et\Hot$ is true. The second step is to guarantee that all the remaining worlds satisfy the new law. Such an issue has been largely addressed in the literature on propositional belief base revision and update~\cite{Gardenfors88,Winslett88,KatsunoMendelzon92,HerzigRifi-AIJ99}. Here we can achieve that with a semantics similar to that of classical revision operators: basically one can change the set of possible valuations, by removing or adding worlds.

	In our example, removing the possible worlds $\{\AToken,\neg\ACoffee,\AHot\}$ and $\{\neg\AToken,\neg\ACoffee,\AHot\}$ would do the job (there is no need to add new valuations since the new incoming law is satisfied in at least one world of the resulting model).

	The delicate point in removing worlds is that this may have as consequence the loss of some executability laws: in the example, if there were some arrow pointing from some world $w$ to say $\{\neg\AToken,\neg\ACoffee,\AHot\}$, then removing the latter from the model would make the action under concern no longer executable in $w$, if it was the only arrow labeled by that action leaving it. From a semantic point of view, this is intuitive: if the state of the world to which we could move is no longer possible, then we do not have a transition to that state anymore. Hence, if that transition was the only one we had, it is natural~to~lose~it.
	
	Similarly, one could ask what to do with the accessibility relation if new worlds are added, \ie, when expansion is not possible. Following the discussion in Section~\ref{ModelContractionStat}, we here prefer not to systematically add new arrows to the accessibility relation, and postpone correction of executability laws, if needed. This approach is debatable, but with the information we have at hand, this is the safest way of changing static~laws.

\myskip
	
	The semantics for revision of one model by a static law is as follows:

\begin{definition}
Let $\model=\TransStruct{}$. $\model'=\TransStructP{}\in\RevisionModels{\model}{\fml}$ if and 
only if:

\hspace{0.4cm}\parbox{7cm}{
\begin{itemize}
\item $\Worlds'=(\Worlds\setminus\valuations{\neg\fml})\cup\valuations{\fml}$
\item $\AccRel'\subseteq\AccRel$
\end{itemize}
} % end \parbox
\end{definition}

	Clearly $\Mvalid{\model'}\fml$ for each $\model'\in\RevisionModels{\model}{\fml}$. The minimal models resulting from revising a model $\model $ by $\fml$ are those closest to $\model$ \wrt\ $\submodel{\model}$:

\begin{definition}
Let $\model$ be a model and $\fml$ a static law. $\revise{\model}{\fml}=\bigcup\min\{\RevisionModels{\model}{\fml},\submodel{\model}\}$.
\end{definition}

	In the example of model $\model$ in Figure~\ref{ExBigModelRevision}, $\revise{\model}{\neg\Coffee\imp\neg\Hot}$ is the singleton $\{\model'\}$, where $\model'$ is as shown in Figure~\ref{RevisionModelStat}.

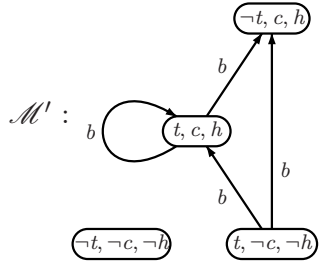
\begin{figure}[h]
\begin{center}
\parbox{1.5cm}{
$\model'$ :
}\parbox{4cm}{
\scriptsize{
\begin{picture}(35,35)(0,0) % (0,0) Ž embaixo ˆ esquerda, (50,50) Ž acima ˆ direita
    \gasset{Nw=9,Nh=4,linewidth=0.3}
    \thinlines
    \node(A1)(10,15){$\AToken,\ACoffee,\AHot$}
    \node[Nw=10](A2)(20,30){$\neg\AToken,\ACoffee,\AHot$}
    %\node[Nw=10](A3)(30,15){$\AToken,\neg\ACoffee,\AHot$}
    \node[Nw=13](A4)(0,0){$\neg\AToken,\neg\ACoffee,\neg\AHot$}
    %\node[Nw=12](A5)(40,0){$\neg\AToken,\neg\ACoffee,\AHot$}
    \node[Nw=12](A6)(20,0){$\AToken,\neg\ACoffee,\neg\AHot$}
    \drawedge[ELside=l](A1,A2){$\Abuy$}
    %\drawedge[ELside=r](A3,A2){$\Abuy$}
    \drawedge[ELside=r,ELpos=33](A6,A2){$\Abuy$}
    \drawloop[ELside=l,loopangle=180](A1){$\Abuy$}
    \drawedge[ELside=l](A6,A1){$\Abuy$}
    %\drawedge[ELside=r,ELpos=40](A3,A1){$\Abuy$}
%    \drawedge[curvedepth=-3,linewidth=0.4,ELside=r](A1,A3){$\act_{1}$}
%    \drawedge[curvedepth=3,linewidth=0.4,ELside=l](A1,A3){$\act_{2}$}
%    \drawedge[linewidth=0.4,ELside=l](A2,A3){$\act_{1}$}
%    \drawloop[linewidth=0.4,loopangle=90](A2){$\act_{1}$}
%    \drawloop[linewidth=0.4,loopangle=270](A3){$\act_{2}$}
\end{picture}
} % End \scriptsize
} % End \parbox
\end{center}
\caption{Model resulting from revising the model $\model$ in Figure~\ref{ExBigModelRevision} with $\neg\Coffee\imp\neg\Hot$.}
\label{RevisionModelStat}
\end{figure}

\subsection{Revising a Model by an Effect Law}

	Let's suppose now that our agent eventually discovers that after buying coffee she does not keep her token. This means that her theory should now be revised by the new effect law $\Token\imp\nec\buy\neg\Token$. Looking at model $\model$ in Figure~\ref{ExBigModelRevision}, this amounts to guaranteeing that the formula $\Token\et\poss\buy\Token$ is satisfiable in none of its worlds. To do that, we have to look at all the worlds satisfying this formula (if any) and
\begin{itemize}
\item either make $\Token$ false in each of these worlds, 
\item or make $\poss\buy\Token$ false in all of them.
\end{itemize}

	If we chose the first option, we will essentially flip the truth value of literal \Token\ in the respective worlds, which changes the set of valuations of the model. If we chose the latter, we will basically remove \buy-arrows leading to \Token-worlds. In that case, a change in the accessibility relation will be made.

	In our example, we have that the possible worlds $\{\Token,\Coffee,\Hot\}$, $\{\Token,\neg\Coffee,\Hot\}$ and $\{\Token,\neg\Coffee,\neg\Hot\}$ satisfy $\Token\et\poss\buy\Token$ and all they have to change.
	
	Flipping $\Token$ in all these worlds to $\neg\Token$ would do the job, but would also have as consequence the introduction of a new static law: $\neg\Token$ would now be valid, \ie, the agent never has a token.
	
	Here we think that changing action laws should not have as side effect a change in the static laws. Given their special status, these should change only if explicitly required (see above). In this case, each world satisfying $\Token\et\poss\buy\Token$ has to be changed so that $\poss\buy\Token$ is no longer true in it. In our example, we should remove the arrows $(\{\Token,\Coffee,\Hot\},\{\Token,\Coffee,\Hot\})$, $(\{\Token,\neg\Coffee,\Hot\},\{\Token,\Coffee,\Hot\})$ and $(\{\Token,\neg\Coffee,\neg\Hot\},\{\Token,\Coffee,\Hot\})$.

\myskip

	The semantics of one model revision for the case of a new effect law is:
	
\begin{definition}
Let $\model=\TransStruct{}$. $\model'=\TransStructP{}\in\RevisionModels{\model}{\antec\imp\nec
\act\conseq}$ if and only if:

\hspace{0.4cm}\parbox{8cm}{
\begin{itemize}
\item $\Worlds'=\Worlds$
\item $\AccRel'\subseteq\AccRel$
\item If $(w,w')\in\AccRel\setminus\AccRel'$, then $\wMvalid{w}{\model}\antec$
\item $\Mvalid{\model'}\antec\imp\nec\act\conseq$
\end{itemize}
} % end \parbox
\end{definition}

	The minimal models resulting from the revision of a model $\model$ by a new effect law are those that are closest to $\model$ \wrt\ $\submodel{\model}$:

\begin{definition}
Let $\model$ be a model and $\antec\imp\nec\act\conseq$ an effect law. $\revise{\model}{\antec\imp\nec\act\conseq}=\bigcup\min\{\RevisionModels{\model}{\antec\imp\nec\act\conseq},\submodel{\model}\}$.
\end{definition}

	Taking again $\model$ as in Figure~\ref{ExBigModelRevision}, $\revise{\model}{\Token\imp\nec\buy\neg\Token}$ will be the singleton $\{\model'\}$ (Figure~\ref{RevisionModelEff}).

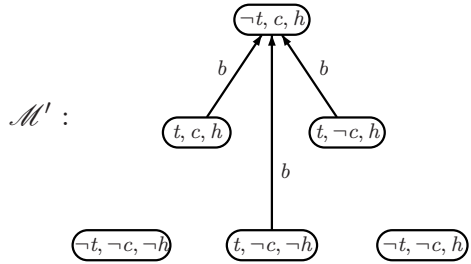
\begin{figure}[h]
\begin{center}
\parbox{1.5cm}{
$\model'$ :
}\parbox{4cm}{
\scriptsize{
\begin{picture}(35,35)(0,0) % (0,0) Ž embaixo ˆ esquerda, (50,50) Ž acima ˆ direita
    \gasset{Nw=9,Nh=4,linewidth=0.3}
    \thinlines
    \node(A1)(10,15){$\AToken,\ACoffee,\AHot$}
    \node[Nw=10](A2)(20,30){$\neg\AToken,\ACoffee,\AHot$}
    \node[Nw=10](A3)(30,15){$\AToken,\neg\ACoffee,\AHot$}
    \node[Nw=13](A4)(0,0){$\neg\AToken,\neg\ACoffee,\neg\AHot$}
    \node[Nw=12](A5)(40,0){$\neg\AToken,\neg\ACoffee,\AHot$}
    \node[Nw=12](A6)(20,0){$\AToken,\neg\ACoffee,\neg\AHot$}
    \drawedge[ELside=l](A1,A2){$\Abuy$}
    \drawedge[ELside=r](A3,A2){$\Abuy$}
    \drawedge[ELside=r,ELpos=33](A6,A2){$\Abuy$}
    %\drawloop[ELside=l,loopangle=180](A1){$\Abuy$}
    %\drawedge[ELside=l](A6,A1){$\Abuy$}
    %\drawedge[ELside=r,ELpos=40](A3,A1){$\Abuy$}
%    \drawedge[curvedepth=-3,linewidth=0.4,ELside=r](A1,A3){$\act_{1}$}
%    \drawedge[curvedepth=3,linewidth=0.4,ELside=l](A1,A3){$\act_{2}$}
%    \drawedge[linewidth=0.4,ELside=l](A2,A3){$\act_{1}$}
%    \drawloop[linewidth=0.4,loopangle=90](A2){$\act_{1}$}
%    \drawloop[linewidth=0.4,loopangle=270](A3){$\act_{2}$}
\end{picture}
} % End \scriptsize
} % End \parbox
\end{center}
\caption{Model resulting from revising the model $\model$ in Figure~\ref{ExBigModelRevision} with the new effect law $\Token\imp\nec\buy\neg\Token$.}
\label{RevisionModelEff}
\end{figure}

\subsection{Revising a Model by an Executability Law}\label{SemRevisionExe}

	Let us now suppose that in some stage it has been decided to grant free coffee to everybody. Faced with this information, the agent will now revise her laws to reflect the fact that \buy\ can also be executed in $\neg\Token$-contexts: $\neg\Token\imp\poss\buy\top$ is a new executability law (and hence we will have $\poss\buy\top$ in all new models of the agent's beliefs).

	Considering again the model in Figure~\ref{ExBigModelRevision}, we observe that $\neg(\neg\Token\imp\poss\buy\top)$ is satisfiable in $\model$. Hence we must throw $\neg\Token\et\nec\buy\bot$ away to ensure the new formula becomes true.

	To remove $\neg\Token\et\nec\buy\bot$ we have to look at all worlds satisfying it and modify $\model$ so that they no longer satisfy that formula. Given worlds $\{\neg\Token,\neg\Coffee,\neg\Hot\}$ and $\{\neg\Token,\neg\Coffee,\Hot\}$, we have two options: change the interpretation of $\Token$ or add new arrows leaving these worlds. A question that arises is `what choice is more drastic: change a world or an arrow'? Again, here we think that changing the world's content (the valuation) is more drastic, as the existence of such a world was foreseen by some static law and is hence assumed to be as it is, unless we have enough information supporting the contrary, in which case we explicitly change the static laws (see above). Thus we shall add a new \buy-arrow from each of $\{\neg\Token,\neg\Coffee,\neg\Hot\}$ and $\{\neg\Token,\neg\Coffee,\Hot\}$.

	Having agreed on that, the issue now is: which worlds should the new arrows point to? Recalling the reasoning developed in Section~\ref{ModelContractionEff}, in order to comply with minimal change, the new arrows shall point to worlds that are relevant targets of each of the $\neg\Token$-worlds in question. In our example, $\{\neg\Token,\Coffee,\Hot\}$ is the only relevant target world here: the two other $\neg\Token$-worlds violate the effect \Coffee\ of \buy, while the three \Token-worlds would make us violate the frame axiom $\neg\Token\imp\nec\buy\neg\Token$.
	
	The semantics for one model revision by a new executability law is as follows:

\begin{definition}
Let $\model=\TransStruct{}$. $\model'=\TransStructP{}\in\RevisionModels{\model}{\antec\imp\poss\act\top}$ if and only if:

\hspace{0.4cm}\parbox{10cm}{
\begin{itemize}
\item $\Worlds'=\Worlds$
\item $\AccRel\subseteq\AccRel'$
\item If $(w,w')\in\AccRel'\setminus\AccRel$, then $w'\in\RelTarget{w,\antec\imp\nec\act\bot,\model,\ModelSet}$
\item $\Mvalid{\model'}\antec\imp\poss\act\top$
\end{itemize}
} % end \parbox
\end{definition}

	The minimal models resulting from revising a model $\model $ by a new executability law are those closest to $\model$ \wrt\ $\submodel{\model}$:

\begin{definition}
Let $\model$ be a model and $\antec\imp\poss\act\top$ be an executability law. $\revise{\model}{\antec\imp\poss\act\top}=\bigcup\min\{\RevisionModels{\model}{\antec\imp\poss\act\top},\submodel{\model}\}$.
\end{definition}

	In our running example, $\revise{\model}{\neg\Token\imp\poss\buy\top}$ is the singleton $\{\model'\}$, where $\model'$ is as shown in Figure~\ref{RevisionModelExe}.

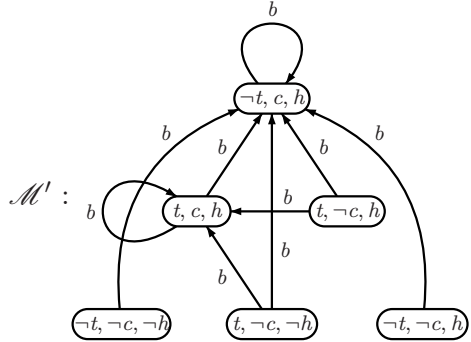
\begin{figure}[h]
\begin{center}
\parbox{1.5cm}{
$\model'$ :
}\parbox{4cm}{
\scriptsize{
\begin{picture}(35,35)(0,0) % (0,0) Ž embaixo ˆ esquerda, (50,50) Ž acima ˆ direita
    \gasset{Nw=9,Nh=4,linewidth=0.3}
    \thinlines
    \node(A1)(10,15){$\AToken,\ACoffee,\AHot$}
    \node[Nw=10](A2)(20,30){$\neg\AToken,\ACoffee,\AHot$}
    \node[Nw=10](A3)(30,15){$\AToken,\neg\ACoffee,\AHot$}
    \node[Nw=13](A4)(0,0){$\neg\AToken,\neg\ACoffee,\neg\AHot$}
    \node[Nw=12](A5)(40,0){$\neg\AToken,\neg\ACoffee,\AHot$}
    \node[Nw=12](A6)(20,0){$\AToken,\neg\ACoffee,\neg\AHot$}
    \drawedge[ELside=l](A1,A2){$\Abuy$}
    \drawedge[ELside=r](A3,A2){$\Abuy$}
    \drawedge[ELside=r,ELpos=33](A6,A2){$\Abuy$}
    \drawloop[ELside=l,loopangle=180](A1){$\Abuy$}
    \drawedge[ELside=l](A6,A1){$\Abuy$}
    \drawedge[ELside=r,ELpos=40](A3,A1){$\Abuy$}
    \drawedge[ELside=l,curvedepth=8,ELpos=66](A4,A2){$\Abuy$}
    \drawedge[ELside=r,curvedepth=-8,ELpos=66](A5,A2){$\Abuy$}
    \drawloop[ELside=l,loopangle=90](A2){$\Abuy$}
%    \drawedge[curvedepth=-3,linewidth=0.4,ELside=r](A1,A3){$\act_{1}$}
%    \drawedge[curvedepth=3,linewidth=0.4,ELside=l](A1,A3){$\act_{2}$}
%    \drawedge[linewidth=0.4,ELside=l](A2,A3){$\act_{1}$}
%    \drawloop[linewidth=0.4,loopangle=90](A2){$\act_{1}$}
%    \drawloop[linewidth=0.4,loopangle=270](A3){$\act_{2}$}
\end{picture}
} % End \scriptsize
} % End \parbox
\end{center}
\caption{The result of revising model $\model$ in Figure~\ref{ExBigModelRevision} by the new executability law $\neg\Token\imp\poss\buy\top$.}
\label{RevisionModelExe}
\end{figure}

\subsection{Revising Sets of Models}

	Up until now we have seen what the revision of single models means. This is needed when expansion by the new law is not possible due to inconsistency. We here give a unified definition of revision of a set of models $\ModelSet$ by a new law $\modfml$:

\begin{definition}\label{ModelSetRevision}
Let $\ModelSet$ be a set of models and $\modfml$ a law. Then
\[
\RevisionModels{\ModelSet}{\modfml}=\left\{\begin{array}{l}
	\ModelSet\setminus\{\model : \notMvalid{\model}\modfml\}, \text{ if there is }\model\in\ModelSet\ \suchthat\ \Mvalid{\model}\modfml \\[0.2cm]
	\bigcup_{\model\in\ModelSet}\revise{\model}{\modfml}, \text{ otherwise}
								\end{array}
							\right.
\]
\end{definition}
Observe that Definition~\ref{ModelSetRevision} comprises both {\em expansion} and {\em revision}: in the first one, simple addition of the new law gives a satisfiable theory; in the latter a deeper change is needed to get rid of inconsistency.

%%%%%%%%%%%%%%%%%%%%%%%%%%%%%%%%%%%%%%%%%%%%%%%%%%%
\section{Related Work}\label{RelatedWork}
%%%%%%%%%%%%%%%%%%%%%%%%%%%%%%%%%%%%%%%%%%%%%%%%%%%

	To the best of our knowledge, the first work on updating an action domain description is that by Li and Pereira~\cite{LiPereira96} in a narrative-based action description language~\cite{GelfondLifschitz93}. Contrary to us, however, they mainly investigate the problem of updating the narrative with new observed {\em facts} and (possibly) with occurrences of actions that explain those facts. This amounts to updating a given state/configuration of the world (in our terms, what is true in a possible world) and focusing on the models of the narrative in which some actions took place (in our terms, the models of the action theory with a particular sequence of action executions). Clearly the models of the action laws remain the same.

\myskip

	Baral and Lobo~\cite{BaralLobo-IJCAI97} introduce extensions of action languages that allow for some causal laws to be stated as defeasible. Their work is similar to ours in that they also allow for weakening of laws: in their setting, effect propositions can be replaced by what they call defeasible (weakened versions of) effect propositions. Our approach is different from theirs in the way executability laws are dealt with. Here executability laws are explicit and we are also able to contract them. This feature is important when the qualification problem~\cite{McCarthy77} is considered: we may always discover contexts that preclude the execution of a given action (\cf\ the Introduction).

\myskip

	Liberatore~\cite{Liberatore-JELIA2000} proposes a framework for reasoning about actions in which it is possible to express a given semantics of belief update, like Winslett's~\cite{Winslett88} and Katsuno and Mendelzon's~\cite{KatsunoMendelzon92}. This means it is the formalism, essentially an action description language, that is used to describe updates (the change of propositions from one state of the world to another) by expressing them as laws in the action theory.
	
	The main difference between Liberatore's work and Li and Pereira's is that, despite not being concerned, at least a priori, with changing action laws, Liberatore's framework allows for abductively introducing in the action theory new effect propositions (effect laws, in our terms) that consistently explain the occurrence of an event.
	
\myskip

	The work by Eiter~\etal~\cite{EiterEtAl-IJCAI05,EiterEtAl-ECAI06} is similar to ours in that they also propose a framework that is oriented to updating action laws. They mainly investigate the case where \eg\ a new effect law is added to the description (and then has to be true in all models of the modified theory). This problem is the dual of contraction and is then closer to our definition of revision (\cf\ Section~\ref{SemanticsRevision}).

	In Eiter \etal's framework, action theories are described in a variant of a narrative-based action description language. Like in the present work, the semantics is also in terms of transition systems: directed graphs having arrows (action occurrences) linking nodes (configurations of the world). Contrary to us, however, the minimality condition on the outcome of the update is in terms of inclusion of sets of laws, which means the approach is more syntax oriented.
	
	In their setting, during an update an action theory $\Theory{}$ is seen as composed of two pieces, $\Theory{u}$ and $\Theory{m}$, where $\Theory{u}$ stands for the part of $\Theory{}$ that is not supposed to change and $\Theory{m}$ contains the laws that may be modified. In our terms, when contracting a static law we would have $\Theory{m}=\STAT{}{}\cup\EXE{}{\act}$, when contracting an executability $\Theory{m}=\EXE{}{\act}$, and when contracting effects laws $\Theory{m}=\EFF{-}{\act}$. The difference here is that in our approach it is always clear what laws should not change in a given type of contraction, and $\Theory{u}$ and $\Theory{m}$ do not need to be explicitly specified prior to the update.

	Their approach and ours can both be described as {\em constraint-based} update, in that the theory change is carried out relative to some restrictions (a set of laws that we want to hold in the result). In our framework, for example, all changes in the action laws are relative to the set of static laws $\STAT{}{}$ (and that is why we concentrate on models of $\Theory{}$ having $\valuations{\STAT{}{}}$ as worlds). When changing a law, we want to keep the same set of states. The difference \wrt\ Eiter~\etal's approach is that there it is also possible to update a theory relatively to \eg\ executability laws: when expanding $\Theory{}$ with a new effect law, one may want to constrain the change so that the action under concern is guaranteed to be executable in the result.\footnote{We could simulate that in our approach with two successive modifications of $\Theory{}$: first adding the effect law and then an executability law (\cf\ Section~\ref{SemanticsRevision}).} As shown in the referred work, this may require the withdrawal of some static law. Hence, in Eiter \etal's framework, static laws do not have the same status as in ours.

\myskip

	Herzig~\etal~\cite{HerzigEtAl-ECAI06} define a method for action theory contraction that, despite the similarity with the current work and the common underlying motivations, is more limited than the present constructions.
	
	First, with the referred approach we do not get minimal change. For example, in the referred work the operator for contracting executability laws is such that in the resulting theory the modified set of executabilities is given by
\[
\EXE{-}{\act}=\{(\antec_{i}\et\neg\antec)\imp\poss\act\top : \antec_{i}\imp\poss\act\top\in\EXE{}{\act}\}
\]
which, according to its semantics, gives theories among whose models are those resulting from removing arrows from {\em all} $\antec$-worlds. A similar comment can be made \wrt\ contraction of effect laws.

	Second, Herzig~\etal's contraction method does not satisfy most of the postulates for theory change that we have addressed in Section~\ref{Postulates}. Besides not satisfying the monotonicity postulate, it does not satisfy the preservation one. To witness, suppose we have a language with only one atom $\prp$, and the model $\model$ depicted in Figure~\ref{NonPreservationECAI06}.

\begin{figure}[h]
\begin{center}
\parbox{1.5cm}{
$\model$ :
}\parbox{4.2cm}{
\scriptsize{
\begin{picture}(6,20)(0,0) % (0,0) Ž embaixo ˆ esquerda, (50,50) Ž acima ˆ direita
    \gasset{Nw=5,Nh=5,linewidth=0.3}
    \thinlines
    \node(A1)(0,6){$\prp$}
    \node(A2)(20,6){$\neg\prp$}
    \drawedge[curvedepth=3](A1,A2){$\act$}
    \drawedge[curvedepth=3](A2,A1){$\act$}
    \drawloop(A2){$\act$}
\end{picture}
} % End \scriptsize
} % End \parbox
\parbox{1.7cm}{
$\model'$ :
}\parbox{4.2cm}{
\scriptsize{
\begin{picture}(6,20)(0,0) % (0,0) Ž embaixo ˆ esquerda, (50,50) Ž acima ˆ direita
    \gasset{Nw=5,Nh=5,linewidth=0.3}
    \thinlines
    \node(A1)(0,6){$\prp$}
    \node(A2)(20,6){$\neg\prp$}
    \drawedge[curvedepth=3](A1,A2){$\act$}
    \drawedge[curvedepth=3](A2,A1){$\act$}
    \drawloop(A2){$\act$}
    \drawloop(A1){$\act$}
\end{picture}
} % End \scriptsize
} % End \parbox
\end{center}
\caption{Counter-example to preservation in the method of contraction by Herzig \etal~\cite{HerzigEtAl-ECAI06}.}
\label{NonPreservationECAI06}
\end{figure}
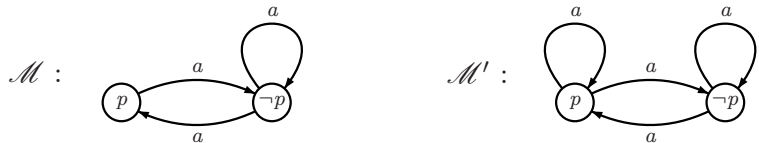

	Then $\Mvalid{\model}\prp\imp\nec\act\neg\prp$ and $\notMvalid{\model}\nec\act\neg
\prp$. Now the contraction operator defined there is such that when removing $\nec\act\neg\prp$ from $\model$ yields the model $\model'$ in Figure~\ref{NonPreservationECAI06} such that $\ract'=\Worlds\times\Worlds$. Then $\notMvalid{\model'}\prp\imp\nec\act\neg\prp$, \ie, the effect law $\prp\imp\nec\act\neg\prp$ is not preserved.

%%%%%%%%%%%%%%%%%%%%%%%%%%%%%%%%%%%%%%%%%%%%%%%%%%%
\section{Comments}\label{Comments}
%%%%%%%%%%%%%%%%%%%%%%%%%%%%%%%%%%%%%%%%%%%%%%%%%%%

	In this section we make some comments about possible modifications or improvements in our constructions so far.

\subsection{Other Distance Notions}

	Here we have used a model distance based on symmetric differences between sets. This distance is quite close to Winslett's~\cite{Winslett88} notion of closeness between interpretations in the Possible Models Approach (PMA). Instead of it, however, we could have considered other distance notions as well, like \eg\ Dalal's~\cite{Dalal88} distance, Hamming distance~\cite{Hamming50}, or weighted distance. Due to space limitations, we do not develop a through comparison among all these distances here. We nevertheless do show that with a cardinality-based distance, for example, we may not always get the intended result.
% (the interested reader can find that in~\cite{})

\myskip

	Let $\card{X}$ denote the number of elements in set $X$. Then suppose our closeness between \PDL-models was defined as follows:

\begin{definition}[Cardinality-based closeness between \PDL-Models]
Let $\model=\TransStruct{}$ be a model. Then $\model'=\TransStructP{}$ is {\em at least as close to} $\model$ as $\model''=\TransStructS{}$, noted $\model'\leq_{\model}\model''$, if~and~only~if

\hspace{0.4cm}\parbox{14cm}{
\begin{itemize}
\item either $\card{\Worlds\symdiff\Worlds'}\leq\card{\Worlds\symdiff\Worlds''}$
\item or $\card{\Worlds\symdiff\Worlds'}=\card{\Worlds\symdiff\Worlds''}$ and $\card{\AccRel\symdiff
\AccRel'}\leq\card{\AccRel\symdiff\AccRel''}$
\end{itemize}
} % end \parbox
\end{definition}

	Such a notion of distance is closely related to Dalal's~\cite{Dalal88} closeness.

\myskip

	Since when contracting a static law $\fml$ from a model $\model$ we usually add one new possible world, it is easy to see that with this cardinality-based distance we get the same result in $\erasure{\model}{\fml}$ as with the distance from Definition~\ref{ClosenessPDLModels}. 
	
	When it comes to contraction of action laws, and then changing the accessibility relations, however, this cardinality-based distance does not seem to fit with the intuitions. To witness, consider the model $\model$ in Figure~\ref{CardinalityContraction1}, which satisfies the executability law $\prp_{1}\imp\poss\act\top$.

\begin{figure}[h]
\begin{center}
\parbox{1.5cm}{
$\model$ :
}\parbox{4.2cm}{
\scriptsize{
\begin{picture}(40,42)(0,0) % (0,0) Ž embaixo ˆ esquerda, (50,50) Ž acima ˆ direita
    \gasset{Nw=9,Nh=4,linewidth=0.3}
    \thinlines
    \node[Nw=10](A1)(0,30){$\prp_{1},\prp_{2}$}
    \node[Nw=11](A2)(30,30){$\prp_{1},\neg\prp_{2}$}
    \node[Nw=12](A3)(15,5){$\neg\prp_{1},\neg\prp_{2}$}
    \drawedge[curvedepth=3,ELside=l](A1,A2){$\act$}
    \drawedge[curvedepth=3,ELside=l](A2,A1){$\act$}
    \drawedge[ELside=l](A2,A3){$\act$}
\end{picture}
} % End \scriptsize
} % End \parbox
\end{center}
\caption{A model $\model$ satisfying $\prp_{1}\imp\poss\act\top$.}
\label{CardinalityContraction1}
\end{figure}

	Then, $\ErasureModels{\model}{\prp_{1}\imp\poss\act\top}=\{\model',\model''\}$, where $\model'$ and $\model''$ are as depicted in Figure~\ref{CardinalityContraction2}.

\begin{figure}[h]
\begin{center}
\parbox{1.5cm}{
$\model'$ :
}\parbox{4.2cm}{
\scriptsize{
\begin{picture}(40,42)(0,0) % (0,0) Ž embaixo ˆ esquerda, (50,50) Ž acima ˆ direita
    \gasset{Nw=9,Nh=4,linewidth=0.3}
    \thinlines
    \node[Nw=10](A1)(0,30){$\prp_{1},\prp_{2}$}
    \node[Nw=11](A2)(30,30){$\prp_{1},\neg\prp_{2}$}
    \node[Nw=12](A3)(15,5){$\neg\prp_{1},\neg\prp_{2}$}
    %\drawedge[curvedepth=3,ELside=l](A1,A2){$\act$}
    \drawedge[curvedepth=3,ELside=l](A2,A1){$\act$}
    \drawedge[ELside=l](A2,A3){$\act$}
\end{picture}
} % End \scriptsize
} % End \parbox
\parbox{1.5cm}{
$\model''$ :
}\parbox{4.2cm}{
\scriptsize{
\begin{picture}(40,42)(0,0) % (0,0) Ž embaixo ˆ esquerda, (50,50) Ž acima ˆ direita
    \gasset{Nw=9,Nh=4,linewidth=0.3}
    \thinlines
    \node[Nw=10](A1)(0,30){$\prp_{1},\prp_{2}$}
    \node[Nw=11](A2)(30,30){$\prp_{1},\neg\prp_{2}$}
    \node[Nw=12](A3)(15,5){$\neg\prp_{1},\neg\prp_{2}$}
    \drawedge[curvedepth=3,ELside=l](A1,A2){$\act$}
    %\drawedge[curvedepth=3,ELside=l](A2,A1){$\act$}
    %\drawedge[ELside=l](A2,A3){$\act$}
\end{picture}
} % End \scriptsize
} % End \parbox
\end{center}
\caption{Models resulting from contracting $\prp_{1}\imp\poss\act\top$ in the model $\model$ of Figure~\ref{CardinalityContraction1}.}
\label{CardinalityContraction2}
\end{figure}

	Note that $\model''$ is an intended contracted model. However, with the cardinality-based distance above we will get $\ErasureModels{\{\model\}}{\prp_{1}\imp\poss\act\top}=\{\{\model,\model'\}\}$. We do not have $\{\model,\model''\}$ in the result since $\model'\leq_{\model}\model''$: in $\model'$ only {\em one} arrow has been removed, while in $\model''$ {\em two}.

%\myskip
%
%	For more details on these and other notions of distance, the reader is referred to~\cite{}.

\subsection{Inducing Executability}

	Regarding the semantics for contracting static laws, we could try to go further and at least make a guess about what executability laws we should preserve. Before doing that, we need a definition.

\begin{definition}[Closeness between Valuations]
Let $\val$ be a propositional valuation. The valuation $\val'$ is {\em as close to} $\val$ as $\val''$, 
noted $\val'\submodel{\val}\val''$, if and only if $\val\symdiff\val'\subseteq\val\symdiff\val''$.
\end{definition}
So the distance between valuations $\val_{1}$ and $\val_{2}$ is the set of literals on which they differ: $\val_{1}\symdiff\val_{2}=\{\lit : \val_{1}\sat\lit \text{ and } \val_{2}\notsat\lit\}\cup\{\lit :\val_{2}\sat\lit \text{ and } \val_{1}\notsat\lit\}$.	

\myskip

	Our argument now is as follows: when adding a new world, we can look at its contents and see what happens in worlds that are similar to it (by similar here we mean the possible worlds that are closest to it). A priori and intuitively we can expect that if we put a new arrow leaving the new world, it will neither point to a world that is the target of no other world, nor point to a world that is not closest to it. It is reasonable to expect that in the new world a given action may have a behavior that is quite similar to that which it has in the worlds that are closest to the new one. Hence we select the worlds whose distance to the new one is minimal, look at where the arrows leaving them point to, and then point the new arrow there. With a similar argument, we can decide which arrows targeting the new world add to the model. The definition below formalizes this.

%*** R' deve ser subconjunto da reuniao dos outros R, e a mesma coisa para R-1 ***
\begin{definition}\label{ReDefinitionErasureStat}
Let $\model=\TransStruct{}$. $\model'=\TransStructP{}\in\ErasureModels{\model}{\fml}$ if and 
only if

\hspace{0.4cm}\parbox{11cm}{
\begin{itemize}
\item $\Worlds\subseteq\Worlds'$
\item $\AccRel\subseteq\AccRel'$
\item If $w'\in\Worlds'\setminus\Worlds$, then $\AccRel'(w')\subseteq\AccRel(w)$ and $
\AccRel'^{-1}
(w')\subseteq\AccRel^{-1}(w)$, where $w\in\bigcup\min\{\Worlds,\submodel{w'}\}$
\item There is $w\in\Worlds'$\ \suchthat\ $\notwMvalid{w}{\model'}\fml$
\end{itemize}
} % end \parbox
\end{definition}

	With this new definition, what we do is suppose that some of the known laws for the other worlds can still be true in the new state, by analogy to the other possible states. In a similar way, when facing a new situation, we may wonder how we got there. Again, by analogy with known states, we could expect that we get to the new state coming from a state that usually produces something similar to what we have now in front of us. In this case we have a kind of abduction-like reasoning that may of course be wrong but that is not illegal.

	Although intuitive, at least in its motivation, adopting Definition~\ref{ReDefinitionErasureStat} could have some undesirable side effects. For example, if in the semantics we decide to add new arrows pointing from and to the new added world, then our corresponding operator may not satisfy the monotonicity postulate. To see,  let
\[
\Theory{}=\left\{\begin{array}{c}
	\prp_{1}\otimes\prp_{2}, \prp_{1}\imp\nec\act\prp_{2}, \\
	\prp_{1}\imp\poss\act\top, \prp_{2}\imp\nec\act\bot
	\end{array}
\right\}
\]

\myskip

The only model of $\Theory{}$ is $\model=\TransStruct{}$ such that $\Worlds=\{\{\prp_{1},\neg\prp_{2}\},\{\neg\prp_{1},\prp_{2}\}\}$ and $\AccRel=\{(\{\prp_{1},\neg\prp_{2}\},\{\neg\prp_{1},\prp_{2}\})\}$ (Figure~\ref{NonMonotonyErasure}).

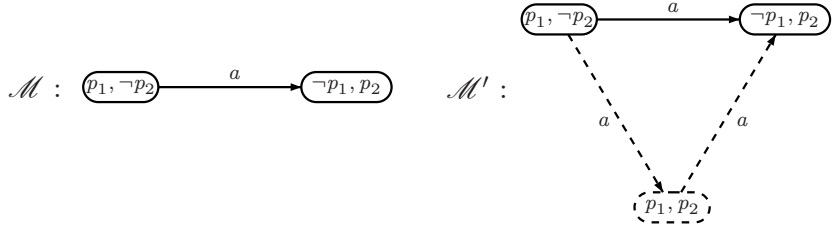
\begin{figure}[h]
\begin{center}
\parbox{1.5cm}{
$\model$ :
}\parbox{4.2cm}{
\scriptsize{
\begin{picture}(40,42)(0,0) % (0,0) Ž embaixo ˆ esquerda, (50,50) Ž acima ˆ direita
    \gasset{Nw=9,Nh=4,linewidth=0.3}
    \thinlines
    \node[Nw=10](A1)(0,21){$\prp_{1},\neg\prp_{2}$}
    \node[Nw=12](A2)(30,21){$\neg\prp_{1},\prp_{2}$}
    \drawedge(A1,A2){$\act$}
\end{picture}
} % End \scriptsize
} % End \parbox
\parbox{1.5cm}{
$\model'$ :
}\parbox{4.2cm}{
\scriptsize{
\begin{picture}(40,42)(0,0) % (0,0) Ž embaixo ˆ esquerda, (50,50) Ž acima ˆ direita
    \gasset{Nw=9,Nh=4,linewidth=0.3}
    \thinlines
    \node[Nw=10](A1)(0,30){$\prp_{1},\neg\prp_{2}$}
    \node[Nw=12](A2)(30,30){$\neg\prp_{1},\prp_{2}$}
    \node[Nw=10,dash={1}0](A3)(15,5){$\prp_{1},\prp_{2}$}
    \drawedge(A1,A2){$\act$}
    \drawedge[ELside=r,dash={1}0](A1,A3){$\act$}
    \drawedge[ELside=r,dash={1}0](A3,A2){$\act$}
\end{picture}
} % End \scriptsize
} % End \parbox
\end{center}
\caption{Counter-example to monotonicity when adding arrows to/from new worlds in the semantics of static law contraction. $\model$ denotes the original model of $\Theory{}$, while $\model'$ shows the new added world and the candidate arrows to add to $\ract$.}
\label{NonMonotonyErasure}
\end{figure}

	If we contract $\prp_{1}\imp\neg\prp_{2}$ from $\Theory{}$, in the semantic result we have only the model $\model'$ in Figure~\ref{NonMonotonyErasure} such that $\Mvalid{\model'}\prp_{1}\imp\poss\act\poss\act\prp_{2}$. Then, we would have $\Theory{}'\PDLvalid\prp_{1}\imp\poss\act\poss\act\prp_{2}$, and then $\Theory{}\notPDLvalid\Theory{}'$.

\myskip

	The very issue with such a semantic characterization however would be how to capture it at the syntactic level: what syntax operator for change should we have in order to capture this closeness between possible worlds? More importantly, since we may be wrong about a guess regarding the executability or an effect of a given action, how can it be rolled back in the new theory? These are open questions that we leave for further investigation.

%%%%%%%%%%%%%%%%%%%%%%%%%%%%%%%%%%%%%%%%%%%%%%%%%%%
\section{Concluding Remarks}\label{Conclusion}
%%%%%%%%%%%%%%%%%%%%%%%%%%%%%%%%%%%%%%%%%%%%%%%%%%%

	In this work we have given a semantics for action theory change in terms of distances between models that captures the notion of minimal change. We have given algorithms to contract a formula from a theory that terminate and are correct \wrt\ the semantics~(Corollary~\ref{CorollaryCorrectness}). We have shown the importance that modularity has in this result and in others. %With the results in Sections~\ref{Correctness} and~\ref{Postulates}, and with a notion of minimal change, we get a better solution than that proposed in~\cite{HerzigEtAl-ECAI06}.

\myskip

	Under modularity, our operators satisfy all the postulates for contraction. This supports the thesis that our modularity notion is fruitful.

	By forcing formulas to be explicitly stated in their respective modules (and thus possibly making them inferable in independently different ways), modularity intuitively could be seen to diminish elaboration tolerance~\cite{McCarthy98}. For instance, when contracting a Boolean formula $\fml$ in a non-modular theory, it seems reasonable to expect not to change the set of static laws $\STAT{}{}$, while the theory being modular surely forces changing such a module.
	
	It is not difficult, however, to conceive non-modular theories in which contraction of a formula $\fml$ may demand a change in $\STAT{}{}$ as well. As an example, suppose $\STAT{}{}=\{\fml_{1}\imp\fml_{2}\}$ in an action theory from whose dynamic part we (implicitly) infer $\neg\fml_{2}$. In this case, contracting $\neg\fml_{1}$ while keeping $\neg\fml_{2}$ would necessarily ask for a change in $\STAT{}{}$.
	
	We point out nevertheless that in both cases (modular and non-modular) the extra work in changing other modules stays in the mechanical level, \ie, in the algorithms that carry out the modification, and does not augment in a significant way the amount of work the knowledge engineer is expected to do. Moreover, considering the evolution of the theory, \ie, future modifications one should perform in it, modularity has to be checked/ensured only once, since it is preserved by our operators (\cf\ Lemma~\ref{PreservationModularity}).

\myskip

	While terminating, our algorithms come with a considerable computational cost: the entailment test in $\Kn$ with global axioms is known to be \pspace-complete. Although this may be acceptable (theory change can be carried out offline), the computation of $\IP{.}$ might result in exponential growth.

\myskip

	We have also extended Varzinczak's studies~\cite{Varzinczak-KR08} by defining a semantics for action theory revision based on minimal modifications of models. For the corresponding revision algorithms, the reader is referred to the work by Varzinczak~\cite{Varzinczak-NMR08}. One of our ongoing researches is on assessing our revision operators' behavior \wrt\ the AGM postulates for revision~\cite{AlchourronEtAl85}.
	
\myskip

	Another issue that drives our future research on the subject is how to contract not only laws but {\em any} \PDL-formula. As defined, the order of application of our operators matter in the final result: if we contract $\fml$ and then $\antec\imp\nec\act\conseq$ from a theory $\Theory{}$, the    result may not be the same as contracting $\antec\imp\nec\act\conseq$ first and then removing $\fml$. This problem would not appear in a more general framework in which any formula could be contracted: removing $\fml\et(\antec\imp\nec\act\conseq)$ should give the same result as $(\antec\imp\nec\act\conseq)\et\fml$.

	Definitions~\ref{DefErasureOneModelExe}, \ref{DefErasureOneModelEff} and~\ref{DefErasureOneModelStat} appear to be important for better understanding the problem of contracting general formulas: basically the set of modifications to perform in a given model in order to force it to falsify a general formula will comprise removal/addition of arrows/worlds. The definition of a general revision/contraction method will then benefit from our constructions.

\myskip

	Given the connection between multimodal logics and Description Logics~\cite{BaaderEtAl-HandBook-DL2003}, we believe that the definitions here given may also contribute to ontology evolution and debugging in DLs.

%%%%%%%%%%%%%%%%%%%%%%%%%%%%%%%%%%%%%%%%%%%%%%%%%%%
\section*{Acknowledgements}
%%%%%%%%%%%%%%%%%%%%%%%%%%%%%%%%%%%%%%%%%%%%%%%%%%%

	The author is grateful to Arina Britz and Ken Halland for proofreading an earlier version of this article. Their comments helped very much in improving the text.

%%%%%%%%%%%%%%%%%%%%%%%%%%%%%%%%%%%%%%%%%%%%%%%%%%%
%\addcontentsline{toc}{section}{References}
%\bibliography{References}

\begin{thebibliography}{10}

\bibitem{AlchourronEtAl85}
C.~Alchourr{\'{o}}n, P.~G{\"a}rdenfors, and D.~Makinson.
\newblock On the logic of theory change: Partial meet contraction and revision
  functions.
\newblock {\em J. of Symbolic Logic}, 50:510--530, 1985.

\bibitem{Amir-AAAI2000}
E.~Amir.
\newblock ({D}e)composition of situation calculus theories.
\newblock In {\em Proc. 17th Natl. Conf. on Artificial Intelligence
  (AAAI'2000)}, pages 456--463, Austin, 2000. AAAI Press/MIT Press.

\bibitem{BaaderEtAl-HandBook-DL2003}
F.~Baader, D.~Calvanese, D.~McGuinness, D.~Nardi, and P.~Patel-Schneider,
  editors.
\newblock {\em Description Logic Handbook}.
\newblock Cambridge University Press, 2003.

\bibitem{BaralLobo-IJCAI97}
C.~Baral and J.~Lobo.
\newblock Defeasible specifications in action theories.
\newblock In M.E. Pollack, editor, {\em Proc. 15th Intl. Joint Conf. on
  Artificial Intelligence (IJCAI'97)}, pages 1441--1446, Nagoya, 1997. Morgan
  Kaufmann Publishers.

\bibitem{ECAI06}
G.~Brewka, S.~Coradeschi, A.~Perini, and P.~Traverso, editors.
\newblock {\em Proc. 17th Eur. Conf. on Artificial Intelligence (ECAI'06)},
  Riva del Garda, 2006. IOS Press.

\bibitem{CastilhoEtAl99}
M.~Castilho, O.~Gasquet, and A.~Herzig.
\newblock Formalizing action and change in modal logic {I}: the frame problem.
\newblock {\em J. of Logic and Computation}, 9(5):701--735, 1999.

\bibitem{Cholvy99}
L.~Cholvy.
\newblock Checking regulation consistency by using {SOL}-resolution.
\newblock In {\em Proc. 7th Intl. Conf. on AI and Law}, pages 73--79, Oslo,
  1999.

\bibitem{CuencaGrauEtAl-KR2006}
B.~{Cuenca Grau}, B.~Parsia, E.~Sirin, and A.~Kalyanpur.
\newblock Modularity and web ontologies.
\newblock In P.~Doherty, J.~Mylopoulos, and C.~Welty, editors, {\em Proc. 10th
  Intl. Conf. on Knowledge Representation and Reasoning (KR'2006)}, pages
  198--208, Lake District, 2006. Morgan Kaufmann Publishers.

\bibitem{Dalal88}
M.~Dalal.
\newblock Investigations into a theory of knowledge base revision: preliminary
  report.
\newblock In Smith and Mitchell \cite{AAAI88}, pages 475--479.

\bibitem{DeGiacomoLenzerini95}
G.~{De Giacomo} and M.~Lenzerini.
\newblock {PDL}-based framework for reasoning about actions.
\newblock In M.~Gori and G.~Soda, editors, {\em Proc. 4th Congresss of the
  Italian Association for Artificial Intelligence ({IA*AI'95})}, number 992 in
  LNAI, pages 103--114. Springer-Verlag, 1995.

\bibitem{DemolombeEtAl-JANCL03}
R.~Demolombe, A.~Herzig, and I.~Varzinczak.
\newblock Regression in modal logic.
\newblock {\em J. of Applied Non-Classical Logics (JANCL)}, 13(2):165--185,
  2003.

\bibitem{EiterEtAl-IJCAI05}
T.~Eiter, E.~Erdem, M.~Fink, and J.~Senko.
\newblock Updating action domain descriptions.
\newblock In Kaelbling and Saffiotti \cite{IJCAI05}, pages 418--423.

\bibitem{EiterEtAl-ECAI06}
T.~Eiter, E.~Erdem, M.~Fink, and J.~Senko.
\newblock Resolving conflicts in action descriptions.
\newblock In Brewka et~al. \cite{ECAI06}, pages 367--371.

\bibitem{Fuhrmann89}
A.~Fuhrmann.
\newblock On the modal logic of theory change.
\newblock In {\em The Logic of Theory Change}, pages 259--281, 1989.

\bibitem{Gardenfors88}
P.~G{\"a}rdenfors.
\newblock {\em Knowledge in Flux: Modeling the Dynamics of Epistemic States}.
\newblock MIT Press, Cambridge, MA, 1988.

\bibitem{GelfondLifschitz93}
M.~Gelfond and V.~Lifschitz.
\newblock Representing action and change by logic programs.
\newblock {\em Journal of Logic Programming}, 17(2/3\&4):301--321, 1993.

\bibitem{GiunchigliaEtAl97}
E.~Giunchiglia, G.~Kartha, and V.~Lifschitz.
\newblock Representing action: indeterminacy and ramifications.
\newblock {\em Artificial Intelligence}, 95(2):409--438, 1997.

\bibitem{Hamming50}
R.W. Hamming.
\newblock Error detecting and error correcting codes.
\newblock {\em Bell System Technical Journal}, 26(2):147--160, 1950.

\bibitem{Hansson99}
S.~Hansson.
\newblock {\em A Textbook of Belief Dynamics: Theory Change and Database
  Updating}.
\newblock Kluwer Academic Publishers, 1999.

\bibitem{HarelEtAl00}
D.~Harel, J.~Tiuryn, and D.~Kozen.
\newblock {\em Dynamic Logic}.
\newblock MIT Press, Cambridge, MA, 2000.

\bibitem{HerzigEtAl-ECAI06}
A.~Herzig, L.~Perrussel, and I.~Varzinczak.
\newblock Elaborating domain descriptions.
\newblock In Brewka et~al. \cite{ECAI06}, pages 397--401.

\bibitem{HerzigRifi-AIJ99}
A.~Herzig and O.~Rifi.
\newblock Propositional belief base update and minimal change.
\newblock {\em Artificial Intelligence}, 115(1):107--138, 1999.

\bibitem{HerzigVarzinczak-ECAI04}
A.~Herzig and I.~Varzinczak.
\newblock Domain descriptions should be modular.
\newblock In R.~L{\'o}pez~de M{\'a}ntaras and L.~Saitta, editors, {\em Proc.
  16th Eur. Conf. on Artificial Intelligence (ECAI'04)}, pages 348--352,
  Valencia, 2004. IOS Press.

\bibitem{HerzigVarzinczak-IJCAI05}
A.~Herzig and I.~Varzinczak.
\newblock Cohesion, coupling and the meta-theory of actions.
\newblock In Kaelbling and Saffiotti \cite{IJCAI05}, pages 442--447.

\bibitem{HerzigVarzinczak-Aiml04Proc05}
A.~Herzig and I.~Varzinczak.
\newblock On the modularity of theories.
\newblock In R.~Schmidt, I.~Pratt-Hartmann, M.~Reynolds, and H.~Wansing,
  editors, {\em Advances in Modal Logic}, volume~5, pages 93--109. King's
  College Publications, 2005.
\newblock Selected papers of AiML 2004 (also available at
  \url{http://www.aiml.net/volumes/volume5}).

\bibitem{HerzigVarzinczak-JELIA06}
A.~Herzig and I.~Varzinczak.
\newblock A modularity approach for a fragment of $\mathcal{ALC}$.
\newblock In M.~Fisher, W.~{van der Hoek}, B.~Konev, and A.~Lisitsa, editors,
  {\em Proc. 10th Eur. Conf. on Logics in Artificial Intelligence
  (JELIA'2006)}, number 4160 in LNAI, pages 216--228. Springer-Verlag, 2006.

\bibitem{HerzigVarzinczak-AIJ07}
A.~Herzig and I.~Varzinczak.
\newblock Metatheory of actions: beyond consistency.
\newblock {\em Artificial Intelligence}, 171:951--984, 2007.

\bibitem{JinThielscher05}
Y.~Jin and M.~Thielscher.
\newblock Iterated belief revision, revised.
\newblock In Kaelbling and Saffiotti \cite{IJCAI05}, pages 478--483.

\bibitem{IJCAI05}
L.~Kaelbling and A.~Saffiotti, editors.
\newblock {\em Proc. 19th Intl. Joint Conf. on Artificial Intelligence
  (IJCAI'05)}, Edinburgh, 2005. Morgan Kaufmann Publishers.

\bibitem{KarthaLifschitz94}
N.~Kartha and V.~Lifschitz.
\newblock Actions with indirect effects (preliminary report).
\newblock In J.~Doyle, E.~Sandewall, and P.~Torasso, editors, {\em Proc. 4th
  Intl. Conf. on Knowledge Representation and Reasoning (KR'94)}, pages
  341--350, Bonn, 1994. Morgan Kaufmann Publishers.

\bibitem{KatsunoMendelzon92}
H.~Katsuno and A.~Mendelzon.
\newblock On the difference between updating a knowledge base and revising it.
\newblock In P.~G{\"a}rdenfors, editor, {\em Belief revision}, pages 183--203.
  Cambridge University Press, 1992.

\bibitem{KrachtWolter91}
M.~Kracht and F.~Wolter.
\newblock Properties of independently axiomatizable bimodal logics.
\newblock {\em J. of Symbolic Logic}, 56(4):1469--1485, 1991.

\bibitem{LiPereira96}
R.~Li and L.M. Pereira.
\newblock What is believed is what is explained.
\newblock In H.~Shrobe and T.~Senator, editors, {\em Proc. 13th Natl. Conf. on
  Artificial Intelligence (AAAI'96)}, pages 550--555, Portland, 1996. AAAI
  Press/MIT Press.

\bibitem{Liberatore-JELIA2000}
P.~Liberatore.
\newblock A framework for belief update.
\newblock In {\em Proc. 7th Eur. Conf. on Logics in Artificial Intelligence
  (JELIA'2000)}, pages 361--375, 2000.

\bibitem{Makinson07}
D.~Makinson.
\newblock Friendliness and sympathy in logic.
\newblock In J.-Y. Beziau, editor, {\em Logica Universalis}. Springer-Verlag,
  2nd edition, 2007.

\bibitem{Marquis2000}
P.~Marquis.
\newblock Consequence finding algorithms.
\newblock In D.~Gabbay and Ph. Smets, editors, {\em Algorithms for Defeasible
  and Uncertain Reasoning, in S. Moral, J. Kohlas (Eds), Handbook of Defeasible
  Reasoning and Uncertainty Management Systems}, volume~5, chapter~2, pages
  41--145. Kluwer Academic Publishers, 2000.

\bibitem{McCarthy77}
J.~McCarthy.
\newblock Epistemological problems of artificial intelligence.
\newblock In N.~Sridharan, editor, {\em Proc. 5th Intl. Joint Conf. on
  Artificial Intelligence (IJCAI'77)}, pages 1038--1044, Cambridge, MA, 1977.
  Morgan Kaufmann Publishers.

\bibitem{McCarthy98}
J.~McCarthy.
\newblock Elaboration tolerance.
\newblock In {\em Proc. Common Sense'98}, London, 1998.
\newblock Available at
  \url{http://www-formal.stanford.edu/jmc/elaboration.html}.

\bibitem{McCarthyHayes69}
J.~McCarthy and P.~Hayes.
\newblock Some philosophical problems from the standpoint of artificial
  intelligence.
\newblock In B.~Meltzer and D.~Mitchie, editors, {\em Machine Intelligence},
  volume~4, pages 463--502. Edinburgh University Press, 1969.

\bibitem{Nebel89}
B.~Nebel.
\newblock A knowledge level analysis of belief revision.
\newblock In R.~Brachman, H.~Levesque, and R.~Reiter, editors, {\em Proc. Intl.
  Conf. on Knowledge Representation and Reasoning (KR'89)}, pages 301--311,
  Toronto, 1989. Morgan Kaufmann Publishers.

\bibitem{Parikh99}
R.~Parikh.
\newblock Beliefs, belief revision, and splitting languages.
\newblock In L.~Moss, editor, {\em Logic, Language and Computation}, volume~2
  of {\em CSLI Lecture Notes}, pages 266--278. CSLI Publications, 1999.

\bibitem{Popkorn94}
S.~Popkorn.
\newblock {\em First Steps in Modal Logic}.
\newblock Cambridge University Press, 1994.

\bibitem{Quine52}
W.~V.~O. Quine.
\newblock The problem of simplifying truth functions.
\newblock {\em American Mathematical Monthly}, 59:521--531, 1952.

\bibitem{Quine62}
W.~V.~O. Quine.
\newblock Paradox.
\newblock {\em Scientific American}, pages 84--96, 1962.

\bibitem{Reiter91}
R.~Reiter.
\newblock The frame problem in the situation calculus: A simple solution
  (sometimes) and a completeness result for goal regression.
\newblock In V.~Lifschitz, editor, {\em Artificial Intelligence and
  Mathematical Theory of Computation: Papers in Honor of John McCarthy}, pages
  359--380. Academic Press, San Diego, 1991.

\bibitem{ShapiroEtAl-IJCAI05}
S.~Shapiro, Y.~Lesp{\'e}rance, and H.~Levesque.
\newblock Goal change.
\newblock In Kaelbling and Saffiotti \cite{IJCAI05}, pages 582--588.

\bibitem{ShapiroEtAl00}
S.~Shapiro, M.~Pagnucco, Y.~Lesp{\'e}rance, and H.~Levesque.
\newblock Iterated belief change in the situation calculus.
\newblock In T.~Cohn, F.~Giunchiglia, and B.~Selman, editors, {\em Proc. 7th
  Intl. Conf. on Knowledge Representation and Reasoning (KR'2000)}, pages
  527--538, Breckenridge, 2000. Morgan Kaufmann Publishers.

\bibitem{AAAI88}
R.~Smith and T.~Mitchell, editors.
\newblock {\em Proc. 7th Natl. Conf. on Artificial Intelligence (AAAI'88)}, St.
  Paul, 1988. Morgan Kaufmann Publishers.

\bibitem{Thielscher95}
M.~Thielscher.
\newblock Computing ramifications by postprocessing.
\newblock In C.~Mellish, editor, {\em Proc. 14th Intl. Joint Conf. on
  Artificial Intelligence (IJCAI'95)}, pages 1994--2000, Montreal, 1995. Morgan
  Kaufmann Publishers.

\bibitem{Thielscher97}
M.~Thielscher.
\newblock Ramification and causality.
\newblock {\em Artificial Intelligence}, 89(1--2):317--364, 1997.

\bibitem{Varzinczak06}
I.~Varzinczak.
\newblock {\em What is a good domain description? {E}valuating and revising
  action theories in dynamic logic}.
\newblock PhD thesis, Universit{\'{e}} Paul Sabatier, Toulouse, 2006.

\bibitem{Varzinczak-KR08}
I.~Varzinczak.
\newblock Action theory contraction and minimal change.
\newblock In G.~Brewka and J.~Lang, editors, {\em Proc. 11th Intl. Conf. on
  Knowledge Representation and Reasoning (KR'2008)}, pages 651--661, Sydney,
  2008. AAAI Press/MIT Press.

\bibitem{Varzinczak-NMR08}
I.~Varzinczak.
\newblock Action theory revision in dynamic logic.
\newblock In {\em Workshop on Nonmonotonic Reasoning (NMR'08)}, Sydney, 2008.

\bibitem{Veltman05}
F.~Veltman.
\newblock Making counterfactual assumptions.
\newblock {\em Journal of Semantics}, 22(2):159--180, 2005.

\bibitem{Winslett88}
M.-A. Winslett.
\newblock Reasoning about action using a possible models approach.
\newblock In Smith and Mitchell \cite{AAAI88}, pages 89--93.

\bibitem{ZhangEtAl02}
D.~Zhang, S.~Chopra, and N.~Foo.
\newblock Consistency of action descriptions.
\newblock In M.~Ishizuka and A.~Sattar, editors, {\em Proc. 7th Pacific Rim
  Intl. Conf. on Artificial Intelligence: Trends in Artificial Intelligence},
  number 2417 in LNCS, pages 70--79. Springer-Verlag, 2002.

\bibitem{ZhangFoo01}
D.~Zhang and N.~Foo.
\newblock {EPDL}: A logic for causal reasoning.
\newblock In B.~Nebel, editor, {\em Proc. 17th Intl. Joint Conf. on Artificial
  Intelligence (IJCAI'01)}, pages 131--138, Seattle, 2001. Morgan Kaufmann
  Publishers.

\end{thebibliography}
%\bibliographystyle{plain}

%%%%%%%%%%%%%%%%%%%%%%%%%%%%%%%%%%%%%%%%%%%%%%%%%%%

\appendix

%%%%%%%%%%%%%%%%%%%%%%%%%%%%%%%%%%%%%%%%%%%%%%%%%%%
\section{Proof of Theorem~\ref{TheoremCompleteness}}\label{ProofTheoremCompleteness}
%%%%%%%%%%%%%%%%%%%%%%%%%%%%%%%%%%%%%%%%%%%%%%%%%%%

{\em Let $\Theory{}$ be modular, and $\modfml$ be a law. For all $\ModelSet'\in\ErasureModels{\ModelSet}{\modfml}$ such that $\Mvalid{\model}\Theory{}$ for every $\model\in\ModelSet$, there is $\Theory{}'\in\ErasureSet{\Theory{}}{\modfml}$ such that $\Mvalid{\model'}\Theory{}'$ for every $\model'\in\ModelSet'$.}

\myskip

\begin{lemma}\label{LemmaMonotony}
$\Theory{}\PDLvalid\Theory{}'$.
\end{lemma}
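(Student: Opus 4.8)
The plan is to prove monotonicity by a case analysis on the three syntactic operators (Algorithms~\ref{ExecOperator}, \ref{EffectOperator}, and~\ref{StatOperator}), since $\ErasureSet{\Theory{}}{\modfml}$ is defined differently according to whether $\modfml$ is an executability, effect, or static law. In each case, when the guarding test fails (i.e.\ $\Theory{}\notPDLvalid\modfml$, or $\STAT{}{}\notCPLvalid\fml$ for the static case), the operator returns $\{\Theory{}\}$ and the claim is immediate. The work therefore lies in the branch where the law is actually entailed, and the strategy throughout is uniform: show that every formula placed into $\Theory{}'$ is a $\PDL$-consequence of $\Theory{}$, from which $\Theory{}\PDLvalid\Theory{}'$ follows.

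The unifying observation I would exploit is that each operator only ever (i) keeps formulas already in $\Theory{}$; (ii) strengthens the antecedent of a kept law; (iii) weakens the consequent of a kept law; or (iv), for the static case, replaces $\STAT{}{}$ by a classical contraction $\STAT{-}{}\in\STAT{}{}\propcontract\fml$. Cases (i)--(iii) produce logically weaker laws: strengthening an antecedent is sound because the new antecedent classically implies the old one, and weakening a consequent is sound because $\nec\act\alpha\imp\nec\act(\alpha\ou\beta)$ is $\K$-valid; hence the resulting law is entailed by the corresponding law of $\Theory{}$. For case (iv), since contraction weakens we have $\STAT{}{}\CPLvalid\STAT{-}{}$, and as $\STAT{}{}\subseteq\Theory{}$ we obtain $\Theory{}\PDLvalid\STAT{-}{}$.

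Two remaining kinds of added laws need a little more care. For the ``minimal change'' effect laws $(\term\et\antec_{\Set}\et\lit)\imp\nec\act(\conseq\ou\lit)$ added by Algorithm~\ref{EffectOperator}, I would use that in this branch $\Theory{}\PDLvalid\antec\imp\nec\act\conseq$, together with the fact that $\term\in\IP{\STAT{}{}\et\antec}$ gives $\term\CPLvalid\antec$ and hence $(\term\et\antec_{\Set}\et\lit)\imp\antec$; strengthening the antecedent and weakening $\conseq$ to $\conseq\ou\lit$ then derives the law from $\antec\imp\nec\act\conseq$. For the inexecutability law $\neg\fml\imp\nec\act\bot$ added by Algorithm~\ref{StatOperator}, note that $\STAT{}{}\CPLvalid\fml$ holds in this branch, so $\Theory{}\PDLvalid\fml$ and $\neg\fml$ is unsatisfiable in every model of $\Theory{}$, making $\neg\fml\imp\nec\act\bot$ vacuously valid there.

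I expect the only genuinely delicate point to be the bookkeeping for Algorithm~\ref{EffectOperator}: one must confirm that $\EFF{-}{\act}\subseteq\EFF{}{\act}\subseteq\Theory{}$ (so the weakened-antecedent and weakened-consequent laws really do derive from members of $\Theory{}$) and that each law emitted in the inner loops matches one of the four patterns above. Everything else reduces to the single schema ``weaker laws are entailed,'' so once the case split is in place the argument should be routine.
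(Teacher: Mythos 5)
Your proposal is correct and matches the paper's own proof in both structure and substance: the paper likewise splits on the three operator types and verifies, in any model of $\Theory{}$, that every added law holds because it is an antecedent-strengthening and/or consequent-weakening of a law entailed by $\Theory{}$ (using $\term\CPLvalid\antec$ for the $(\term\et\antec_{\Set}\et\lit)\imp\nec\act(\conseq\ou\lit)$ laws, $\CPLvalid\STAT{}{}\imp\STAT{-}{}$ for the classical contraction, and vacuous truth of $\neg\fml\imp\nec\act\bot$ since $\STAT{}{}\CPLvalid\fml$ in that branch). The only cosmetic difference is that the paper phrases the verification model-theoretically world by world, whereas you package the same facts as the schema ``weaker laws are entailed.''
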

\begin{proof}
Let $\Theory{}$ be an action theory, and $\Theory{}'\in\ErasureSet{\Theory{}}{\modfml}$, for $\modfml$ a law. We analyze each case.

\myskip

Let $\modfml$ be of the form $\antec\imp\poss\act\top$, for some $\antec\in\Fml$. Then $\Theory{}'$ is such that
\[
\Theory{}'=(\Theory{}\setminus\EXE{}{\act})\cup\{(\antec_{i}\et\neg(\term\et\antec_{\Set}))\imp\poss\act\top : \antec_{i}\imp\poss\act\top\in\EXE{}{\act}\}
\]
where $\term\in\IP{\STAT{}{}\et\antec}$ and  $\antec_{\Set}=\bigwedge_{\stackscript{\prp_{i}\in\compatterm}{\prp_{i}\in\Set}}\prp_{i}\et\bigwedge_{\stackscript{\prp_{i}\in\compatterm}{\prp_{i}\notin\Set}}\neg\prp_{i}$, for some $\Set\subseteq\compatterm$.

	Let $\model=\TransStruct{}$ be such that $\Mvalid{\model}{\Theory{}}$. It is enough to show that $\model$ is a model of the new laws. For every $(\antec_{i}\et\neg(\term\et\antec_{\Set}))\imp\poss\act\top$, for every $w\in\Worlds$, if $\wMvalid{w}{\model}\antec_{i}\et\neg(\term\et\antec_{\Set})$, then $\wMvalid{w}{\model}\antec_{i}$. Because $\Theory{}\PDLvalid\antec_{i}\imp\poss\act\top$, $\Mvalid{\model}\antec_{i}\imp\poss\act\top$, and then $\ract(w)\neq\emptyset$.
	
	Hence $\Mvalid{\model}\Theory{}'$.

\myskip

	Let now $\modfml$ have the form $\antec\imp\nec\act\conseq$, for $\antec,\conseq\in\Fml$. Then $\Theory{}'$ is such that
\[
\Theory{}'=\begin{array}{l}
		(\Theory{}\setminus\EFF{-}{\act})\ \cup \\[0.2cm]
		\{(\antec_{i}\et\neg(\term\et\antec_{\Set}))\imp\nec\act\conseq_{i} : 
			\antec_{i}\imp\nec\act\conseq_{i} \in\EFF{-}{\act}\}\ \cup \\[0.2cm]
		\{(\antec_{i}\et\term\et\antec_{\Set})
						\imp\nec\act(\conseq_{i}\ou\term') : 
						\antec_{i}\imp\nec\act\conseq_{i} \in\EFF{-}{\act}\}\ \cup \\[0.2cm]
		\left\{\begin{array}{cl}
					& \lit\in L, \text{ for some } L\subseteq\Lit\ \suchthat \\
	(\term\et\antec_{\Set}\et\lit)\imp\nec\act(\conseq\ou\lit) : & \STAT{}{}\not\vdash(\term'\et\bigwedge_{\lit\in L}\lit)\imp\bot, \text{ and } \lit\in\term'\\
					& \text{ or } \Theory{}\notPDLtheorem(\term\et\antec_{\Set}\et\lit)\imp\nec\act\neg\lit
			\end{array}
		\right\}
		\end{array}
\]
where $\EFF{-}{\act}=\bigcup_{1\leq i\leq n}(\EFF{\antec,\conseq}{\act})_{i}$, $\term\in\IP{\STAT{}{}\et\antec}$, $\antec_{\Set}=\bigwedge_{\stackscript{\prp_{i}\in\compatterm}{\prp_{i}\in\Set}}\prp_{i}\et\bigwedge_{\stackscript{\prp_{i}\in\compatterm}{\prp_{i}\notin\Set}}\neg\prp_{i}$, for some $\Set\subseteq\compatterm$, and $\term'\in\IP{\STAT{}{}\et\neg\conseq}$.

	Let $\model=\TransStruct{}$ be such that $\Mvalid{\model}{\Theory{}}$. It is enough to show that $\model$ is a model of the added laws. Given $(\antec_{i}\et\neg(\term\et\antec_{\Set}))\imp\nec\act\conseq_{i}$, for every $w\in\Worlds$, if $\wMvalid{w}{\model}\antec_{i}\et\neg(\term\et\antec_{\Set})$, then $\wMvalid{w}{\model}\antec_{i}$. Because $\Theory{}\PDLvalid\antec_{i}\imp\nec\act\conseq_{i}$, $\Mvalid{\model}\antec_{i}\imp\nec\act\conseq_{i}$, and then $\wMvalid{w'}{\model}\conseq_{i}$ for every $w'\in\Worlds$ such that $(w,w')\in\ract$.

	For $(\antec_{i}\et\term\et\antec_{\Set})	\imp\nec\act(\conseq_{i}\ou\term')$, for every $w\in\Worlds$, if $\wMvalid{w}{\model}\antec_{i}\et\term\et\antec_{\Set}$, then again $\wMvalid{w'}{\model}\conseq_{i}$ for every $w'\in\Worlds$ such that $(w,w')\in\ract$.

	Now, given $(\term\et\antec_{\Set}\et\lit)\imp\nec\act(\conseq\ou\lit)$, for every $w\in\Worlds$, if $\wMvalid{w}{\model}\term\et\antec_{\Set}\et\lit$, then $\wMvalid{w}{\model}\term$, and then $\wMvalid{w}{\model}\antec$. Since $\Theory{}\PDLvalid\antec\imp\nec\act\conseq$, we have $\Mvalid{\model}\antec\imp\nec\act\conseq$, and then $\wMvalid{w'}{\model}\conseq$ for every $w'\in\Worlds$ such that $(w,w')\in\ract$.
	
	Hence $\Mvalid{\model}\Theory{}'$.
	
\myskip

	Let $\modfml$ be a propositional $\fml$. Then $\Theory{}'$ is such that
\[
\Theory{}'=\begin{array}{l}
		((\Theory{}\setminus\STAT{}{}) \cup \STAT{-}{})\setminus\EXE{}{\act}\  \cup \\
		\{(\antec_{i}\et\fml)\imp\poss\act\top : 
		\antec_{i}\imp\poss\act\top \in\EXE{}{\act}\}\ \cup \\
		   \{\neg\fml\imp\nec\act\bot\}
		\end{array}
\]
for some $\STAT{-}{}\in\STAT{}{}\propcontract\fml$.

Let $\model=\TransStruct{}$ be such that $\Mvalid{\model}{\Theory{}}$. It suffices to show that $\model$ satisfies the added laws.

	Since we assume $\propcontract$ behaves like a classical contraction operator, like \eg\ Katsuno and Mendelzon's~\cite{KatsunoMendelzon92}, we have $\CPLvalid\STAT{}{}\imp\STAT{-}{}$, and then, because $\Mvalid{\model}\STAT{}{}$, we have $\Mvalid{\model}\STAT{-}{}$.

	Now given $(\antec_{i}\et\fml)\imp\poss\act\top$, for every $w\in\Worlds$, if $\wMvalid{w}{\model}\antec_{i}\et\fml$, then $\wMvalid{w}{\model}\antec_{i}$, and because $\Mvalid{\model}\antec_{i}\imp\poss\act\top$, we have $\ract(w)\neq\emptyset$.
	
	Finally, for $\neg\fml\imp\nec\act\bot$, because $\Mvalid{\model}\fml$, $\model$ trivially satisfies $\neg\fml\imp\nec\act\bot$.
	
	Hence, $\Mvalid{\model}\Theory{}'$.
\end{proof}

\bigskip

\noindent{\bf Proof of Theorem~\ref{TheoremCompleteness}}

\bigskip

	Let $\ModelSet=\{\model : \Mvalid{\model}\Theory{}\}$, and $\ModelSet'\in\ErasureModels{\ModelSet}{\modfml}$. We show that there is $\Theory{}'\in\ErasureSet{\Theory{}}{\modfml}$ such that $\Mvalid{\model'}\Theory{}'$ for every $\model'\in\ModelSet'$.
	
\myskip

	By definition, each $\model'\in\ModelSet'$ is such that either $\Mvalid{\model'}\Theory{}$ or $\notMvalid{\model'}\modfml$. Because $\ErasureSet{\Theory{}}{\modfml}\neq\emptyset$, there must be $\Theory{}'\in\ErasureSet{\Theory{}}{\modfml}$. If $\Mvalid{\model'}\Theory{}$, by Lemma~\ref{LemmaMonotony} $\Mvalid{\model'}\Theory{}'$ and we are done. Let's then suppose that $\notMvalid{\model'}\modfml$.	We analyze each case.

\bigskip

	Let $\modfml$ have the form $\antec\imp\poss\act\top$ for some $\antec\in\Fml$. Then $\model'=\TransStructP{}$, where $\Worlds'=\Worlds$, $\AccRel'=\AccRel\setminus\AccRel_{\act}^{\antec}$, with $\AccRel_{\act}^{\antec}=\{(w,w') : \wMvalid{w}{\model}\antec \text{ and }(w,w')\in\ract\}$, for some $\model\in\ModelSet$.
	
	Let $u\in\Worlds'$ be such that $\notwMvalid{u}{\model'}\antec\imp\poss\act\top$, \ie, $\wMvalid{u}{\model'}\antec$ and $\AccRel'_{\act}(u)=\emptyset$.
	
	Because $u\sat\antec$, there must be $v\in\base{\antec,\Worlds'}$ such that $v\subseteq u$.  Let $\term=\bigwedge_{\lit\in v}\lit$. Clearly $\term$ is a prime implicant of $\STAT{}{}\et\antec$. Let also $\antec_{\Set}=\bigwedge_{\lit\in u\setminus v}\lit$, and consider
\[
\Theory{}'=(\Theory{}\setminus\EXE{}{\act}) \cup
	\{(\antec_{i}\et\neg(\term\et\antec_{\Set}))\imp\poss\act\top : 
			\antec_{i}\imp\poss\act\top\in\EXE{}{\act}\}
\]
(Clearly, $\Theory{}'$ is a theory produced by Algorithm~\ref{ExecOperator}.)

\myskip

	It is enough to show that $\model'$ is a model of the new added laws. Given $(\antec_{i}\et\neg(\term\et\antec_{\Set}))\imp\poss\act\top\in\Theory{}'$, for every $w\in\Worlds'$, if $\wMvalid{w}{\model'}\antec_{i}\et\neg(\term\et\antec_{\Set})$, then $\wMvalid{w}{\model'}\antec_{i}$, from what it follows $\wMvalid{w}{\model}\antec_{i}$. Because $\Mvalid{\model}\antec_{i}\imp\poss\act\top$, there is $w'\in\Worlds$ such that $w'\in\ract(w)$. We need to show that $(w,w')\in\AccRel'_{\act}$. If $\notwMvalid{w}{\model}\antec$, then $\AccRel_{\act}^{\antec}=\emptyset$, and $(w,w')\in\AccRel'_{\act}$. If $\wMvalid{w}{\model}\antec$, either $w=u$, and then from $\wMvalid{u}{\model'}\term\et\antec_{\Set}$ we conclude $\wMvalid{u}{\model'}(\antec_{i}\et\neg(\term\et\antec_{\Set}))\imp\poss\act\top$, or $w\neq u$ and then we must have $(w,w')\in\AccRel'_{\act}$, otherwise there is $\AccRelS_{\act}^{\antec}\subset\AccRel_{\act}^{\antec}$ such that $\AccRel\symdiff(\AccRel\setminus\AccRelS_{\act}^{\antec})\subset\AccRel\symdiff(\AccRel\setminus\AccRel_{\act}^{\antec})$, and then $\model''=\tuple{\Worlds',\AccRel\setminus\AccRelS_{\act}^{\antec}}$ is such that $\notMvalid{\model''}\antec\imp\poss\act\top$ and $\model''\submodel{\model}\model'$, a contradiction because $\model'$ is minimal \wrt\ $\submodel{\model}$. Thus $(w,w')\in\AccRel'_{\act}$, and then $\wMvalid{w}{\model'}\poss\act\top$. Hence $\Mvalid{\model'}\Theory{}'$.
	
\myskip

	Now let $\modfml$ be of the form $\antec\imp\nec\act\conseq$, for $\antec,\conseq$ both Boolean. Then $\model'=\TransStructP{}$, where $\Worlds'=\Worlds$, $\AccRel'=\AccRel\cup\AccRel_{\act}^{\antec,\neg\conseq}$, with 
\[
\AccRel_{\act}^{\antec,\neg\conseq}=\{(w,w') : w'\in\RelTarget{w,\antec\imp\nec\act\conseq,\model,\ModelSet}\}
\]
for some $\model=\TransStruct{}\in\ModelSet$.

	Let $u\in\Worlds'$ be such that $\notwMvalid{u}{\model'}\antec\imp\nec\act\conseq$. Then there is $u'\in\Worlds'$ such that $(u,u')\in\AccRel'_{\act}$ and $\notwMvalid{u'}{\model'}\conseq$. Because $u\sat\antec$, there is $v\in\base{\antec,\Worlds'}$ such that $v\subseteq u$, and as $u'\sat\neg\conseq$, there must be $v'\in\base{\neg\conseq,\Worlds'}$ such that $v'\subseteq u'$. Let $\term=\bigwedge_{\lit\in v}\lit$, $\antec_{\Set}=\bigwedge_{\lit\in u\setminus v}\lit$, and $\term'=\bigwedge_{\lit\in v'}\lit$. Clearly $\term$ (resp.\ $\term'$) is a prime implicant of $\STAT{}{}\et\antec$ (resp.\ $\STAT{}{}\et\neg\conseq$).

\myskip

	Now let $\EFF{-}{\act}=\bigcup_{1\leq i\leq n}(\EFF{\antec,\conseq}{\act})_{i}$ and let the theory
\[
\Theory{}'=\begin{array}{l}
		(\Theory{}\setminus\EFF{-}{\act})\ \cup \\[0.2cm]
		\{(\antec_{i}\et\neg(\term\et\antec_{\Set}))\imp\nec\act\conseq_{i} : 
			\antec_{i}\imp\nec\act\conseq_{i} \in\EFF{-}{\act}\}\ \cup \\[0.2cm]
		\{(\antec_{i}\et\term\et\antec_{\Set})
						\imp\nec\act(\conseq_{i}\ou\term') : 
						\antec_{i}\imp\nec\act\conseq_{i} \in\EFF{-}{\act}\}\ \cup \\[0.2cm]
		\left\{\begin{array}{cl}
					& \lit\in L, \text{ for some } L\subseteq\Lit\ \suchthat \\
	(\term\et\antec_{\Set}\et\lit)\imp\nec\act(\conseq\ou\lit) : & \STAT{}{}\not\vdash(\term'\et\bigwedge_{\lit\in L}\lit)\imp\bot, \text{ and } \lit\in\term'\\
					& \text{or } \Theory{}\notPDLtheorem(\term\et\antec_{\Set}\et\lit)\imp\nec\act\neg\lit
			\end{array}
		\right\}
		\end{array}
\]
(Clearly, $\Theory{}'$ is a theory produced by Algorithm~\ref{EffectOperator}.)

\myskip

	In order to show that $\model'$ is a model of $\Theory{}'$, it is enough to show that it is a model of the added laws. Given $(\antec_{i}\et\neg(\term\et\antec_{\Set}))\imp\nec\act\conseq_{i}\in\Theory{}'$, for every $w\in\Worlds'$, if $\wMvalid{w}{\model'}\antec_{i}\et\neg(\term\et\antec_{\Set})$, then  $\wMvalid{w}{\model'}\antec_{i}$, and then  $\wMvalid{w}{\model}\antec_{i}$. Because $\Mvalid{\model}\antec_{i}\imp\nec\act\conseq_{i}$, $\wMvalid{w'}{\model}\conseq_{i}$ for all $w'\in\Worlds$ such that $(w,w')\in\ract$. We need to show that $\AccRel'_{\act}(w)=\ract(w)$. If $\notwMvalid{w}{\model}\antec$, then $\AccRel_{\act}^{\antec,\neg\conseq}=\emptyset$, and then $\AccRel'_{\act}(w)=\ract(w)$. If $\wMvalid{w}{\model}\antec$, then either $w=u$, and from $\wMvalid{u}{\model'}\term\et\antec_{\Set}$ we conclude $\wMvalid{u}{\model'}(\antec_{i}\et\neg(\term\et\antec_{\Set}))\imp\nec\act\conseq_{i}$, or $w\neq u$, and then we must have $\AccRel_{\act}^{\antec,\neg\conseq}=\emptyset$, otherwise there would be $\AccRelS_{\act}^{\antec,\neg\conseq}\subset\AccRel_{\act}^{\antec,\neg\conseq}$ such that $\AccRel\symdiff(\AccRel\cup\AccRelS_{\act}^{\antec,\neg\conseq})\subset\AccRel\symdiff(\AccRel\cup\AccRel_{\act}^{\antec,\neg\conseq})$, and then $\model''=\tuple{\Worlds',\AccRel\cup\AccRelS_{\act}^{\antec,\neg\conseq}}$ would be such that $\notMvalid{\model''}\antec\imp\nec\act\conseq$ and $\model''\submodel{\model}\model'$, a contradiction since $\model'$ is minimal \wrt\ $\submodel{\model}$. Hence  $\AccRel'_{\act}(w)=\ract(w)$, and $\wMvalid{w'}{\model'}\conseq_{i}$ for all $w'$ such that $(w,w')\in\AccRel'_{\act}$.

\myskip

	Now, given $(\antec_{i}\et\term\et\antec_{\Set})\imp\nec\act(\conseq_{i}\ou\term')$, for every $w\in\Worlds'$, if $\wMvalid{w}{\model'}\antec_{i}\et\term\et\antec_{\Set}$, then $\wMvalid{w}{\model'}\antec_{i}$, and then $\wMvalid{w}{\model}\antec_{i}$. Because, $\Mvalid{\model}\antec_{i}\imp\nec\act\conseq_{i}$, we have $\wMvalid{w'}{\model}\conseq_{i}$ for all $w'\in\Worlds$ such that $(w,w')\in\ract$, and then $\wMvalid{w'}{\model'}\conseq_{i}$ for every $w'\in\Worlds'$ such that $(w,w')\in\AccRel'_{\act}\setminus\AccRel_{\act}^{\antec,\neg\conseq}$. Now, given $(w,w')\in\AccRel_{\act}^{\antec,\neg\conseq}$, $\wMvalid{w'}{\model'}\term'$, and the result follows.
	
\myskip

	Now, for each $(\term\et\antec_{\Set}\et\lit)\imp\nec\act(\conseq\ou\lit)$, for every $w\in\Worlds'$, if $\wMvalid{w}{\model'}\term\et\antec_{\Set}\et\lit$, then $\wMvalid{w}{\model'}\antec$, and then $\wMvalid{w}{\model}\antec$. Because $\Mvalid{\model}\antec\imp\nec\act\conseq$, we have $\wMvalid{w'}{\model}\conseq$ for every $w'\in\Worlds$ such that $(w,w')\in\ract$, and then $\wMvalid{w'}{\model'}\conseq$ for all $w'\in\Worlds'$ such that $(w,w')\in\AccRel'_{\act}\setminus\AccRel_{\act}^{\antec,\neg\conseq}$. It remains to show that $\wMvalid{w'}{\model'}\lit$ for every $w'\in\Worlds'$ such that $(w,w')\in\AccRel_{\act}^{\antec,\neg\conseq}$. Since $\model'$ is minimal, it is enough to show that $\wMvalid{u'}{\model'}\lit$ for every $\lit\in\Lit$ such that $\wMvalid{u}{\model'}\term\et\antec_{\Set}\et\lit$. If $\lit\in\term'$, the result follows. Otherwise, suppose $\notwMvalid{u'}{\model'}\lit$. Then
\begin{itemize}
\item either $\neg\lit\in\term'$, then $\term'$ and $\lit$ are unsatisfiable, and in this case Algorithm~\ref{EffectOperator} has not put the law $(\term\et\antec_{\Set}\et\lit)\imp\nec\act(\conseq\ou\lit)$ in $\Theory{}'$, a contradiction;
\item or $\neg\lit\in u'\setminus v'$. In this case, there is a valuation $u''=(u'\setminus\{\neg\lit\})\cup\{\lit\}$ such that $u''\not\sat\conseq$. We must have $u''\in\Worlds'$, otherwise there will be $L'=\{\lit_{i} : \lit_{i}\in u''\}$ such that $\Theory{}\PDLvalid(\term'\et\bigwedge_{\lit_{i}\in L'}\lit_{i})\imp\bot$, and, because $\Theory{}$ is modular, $\STAT{}{}\CPLvalid(\term'\et\bigwedge_{\lit_{i}\in L'}\lit_{i})\imp\bot$, and then Algorithm~\ref{EffectOperator} has not put the law $(\term\et\antec_{\Set}\et\lit)\imp\nec\act(\conseq\ou\lit)$ in $\Theory{}'$, a contradiction. Then $u''\in\Worlds'$, and moreover $u''\notin\AccRel_{\act}^{\antec,\neg\conseq}(u)$, otherwise $\model'$ is not minimal. As $u''\setminus u\subset u'\setminus u$, the only reason why $u''\notin\AccRel_{\act}^{\antec,\neg\conseq}(u)$ is that there is $\lit'\in u\cap u''$ such that 
$\Mvalid{\model_{i}}\bigwedge_{\lit_{j}\in u}\lit_{j}\imp\nec\act\neg\lit'$ for every $\model_{i}\in\ModelSet$ if and only if $\lit'\notin\val'$ for any $\val'\in\base{\neg\conseq,W'}$ such that $\val'\subseteq u''$. Clearly $\lit'=\lit$, and because $\lit\notin\term'$, we have $\Mvalid{\model_{i}}\bigwedge_{\lit_{j}\in u}\lit_{j}\imp\nec\act\neg\lit$ for every $\model_{i}\in\ModelSet$. Then $\Theory{}\PDLvalid(\term\et\antec_{\Set}\et\lit)\imp\nec\act\neg\lit$, and then Algorithm~\ref{EffectOperator} has not put the law $(\term\et\antec_{\Set}\et\lit)\imp\nec\act(\conseq\ou\lit)$ in $\Theory{}'$, a contradiction.
\end{itemize}
Hence we have $\wMvalid{w'}{\model'}\conseq\ou\lit$ for every $w'\in\Worlds'$ such that $(w,w')\in\AccRel'_{\act}$.

	Putting the above results together, we get $\Mvalid{\model'}\Theory{}'$.

\myskip

	Let now $\modfml$ be some propositional $\fml$. Then $\model'=\TransStructP{}$, where $\Worlds\subseteq\Worlds'$, $\AccRel'=\AccRel$, is minimal \wrt\ $\submodel{\model}$, \ie, $\Worlds'$ is a minimum superset of $\Worlds$ such that there is $u\in\Worlds'$ with $u\notsat\fml$. Because we have assumed the syntactical classical contraction operator is sound and complete \wrt\ its semantics and is moreover minimal, then there must be $\STAT{-}{}\in\STAT{}{}\propcontract\fml$ such that $\Worlds'=\valuations{\STAT{-}{}}$. Hence $\Mvalid{\model'}\STAT{-}{}$.
	
	Because $\AccRel'=\AccRel$, every effect law of $\Theory{}$ remains true in $\model'$.
	
	Now, let
\[
\Theory{}'=\begin{array}{l}
		((\Theory{}\setminus\STAT{}{}) \cup \STAT{-}{})\setminus\EXE{}{\act}\  \cup \\
		\{(\antec_{i}\et\fml)\imp\poss\act\top : 
		\antec_{i}\imp\poss\act\top \in\EXE{}{\act}\}\ \cup \\
		   \{\neg\fml\imp\nec\act\bot\}
		\end{array}
\]
(Clearly, $\Theory{}'$ is a theory produced by Algorithm~\ref{StatOperator}.)

	For every $(\antec_{i}\et\fml)\imp\poss\act\top\in\Theory{}'$ and every $w\in\Worlds'$, if $\wMvalid{w}{\model'}\antec_{i}\et\fml$, then $\ract(w)\neq\emptyset$, because $\wMvalid{w}{\model}\antec_{i}\imp\poss\act\top$. Given $\neg\fml\imp\nec\act\bot$, for every $w\in\Worlds'$, if $\wMvalid{w}{\model'}\neg\fml$, then $w=u$, and $\ract(w)=\emptyset$.
	
	Putting all these results together, we have $\Mvalid{\model'}\Theory{}'$.{\hfill\qed}

%%%%%%%%%%%%%%%%%%%%%%%%%%%%%%%%%%%%%%%%%%%%%%%%%%%
\section{Proof of Theorem~\ref{TheoremSoundness}}\label{ProofTheoremSoundness}
%%%%%%%%%%%%%%%%%%%%%%%%%%%%%%%%%%%%%%%%%%%%%%%%%%%

{\em Let $\Theory{}$ be modular, $\modfml$ a law, and $\Theory{}'\in\ErasureSet{\Theory{}}{\modfml}$. For all $\model'$ such that $\Mvalid{\model'}\Theory{}'$, there is $\ModelSet'\in\ErasureModels{\ModelSet}{\modfml}$ such that $\model'\in\ModelSet'$ and $\Mvalid{\model}\Theory{}$ for every $\model\in\ModelSet$.}

\myskip

\begin{lemma}\label{PreservationModularity}
Let $\modfml$ be a law. If $\Theory{}$ is modular, then every $\Theory{}'\in\ErasureSet{\Theory{}}{\modfml}$ is modular.
\end{lemma}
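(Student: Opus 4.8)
The plan is to prove the lemma through the characterization established earlier in this section, namely that a theory is modular if and only if its big model is one of its models, and to split on the syntactic form of $\modfml$ according to which of Algorithms~\ref{ExecOperator},~\ref{EffectOperator} or~\ref{StatOperator} produced $\Theory{}'$.

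First the two easy cases, where $\modfml$ is an executability or an effect law. Here the algorithm leaves $\STAT{}{}$ untouched, so the static part of $\Theory{}'$ equals $\STAT{}{}$. By Lemma~\ref{LemmaMonotony} we have $\Theory{}\PDLvalid\Theory{}'$, i.e.\ every model of $\Theory{}$ is a model of $\Theory{}'$; hence for any Boolean $\fml$, $\Theory{}'\PDLvalid\fml$ implies $\Theory{}\PDLvalid\fml$. Modularity of $\Theory{}$ then yields $\STAT{}{}\CPLvalid\fml$, and since the static laws of $\Theory{}'$ are exactly $\STAT{}{}$, this is precisely what modularity of $\Theory{}'$ demands. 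No further work is needed in these cases.

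The substantial case is contraction of a static law $\fml$ by Algorithm~\ref{StatOperator}, where the static part becomes some $\STAT{-}{}\in\STAT{}{}\propcontract\fml$, the effect laws acquire the extra inexecutability $\neg\fml\imp\nec\act\bot$, and the executabilities are specialized to $(\antec_i\et\fml)\imp\poss\act\top$. The plain monotonicity argument fails here, since it would only deliver consequences of the strictly stronger $\STAT{}{}$. Instead I would show directly that the big model $\model_{\bigM}=\tuple{\valuations{\STAT{-}{}},\AccRel_{\bigM}}$ of $\Theory{}'$ is a model of $\Theory{}'$ and then invoke the characterization theorem. By construction $\model_{\bigM}$ already validates $\STAT{-}{}$ and all effect laws of $\Theory{}'$, including $\neg\fml\imp\nec\act\bot$, so it only remains to check each executability $(\antec_i\et\fml)\imp\poss\act\top$. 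Take $w\in\valuations{\STAT{-}{}}$ with $\wMvalid{w}{\model_{\bigM}}\antec_i\et\fml$; I must produce an $\act$-successor. As $\propcontract$ is a classical contraction operator it satisfies inclusion and recovery, giving $\valuations{\STAT{}{}}\subseteq\valuations{\STAT{-}{}}$ and, since $w\models\fml$, also $w\in\valuations{\STAT{}{}}$. Now exploit modularity of $\Theory{}$: its big model, with world set $\valuations{\STAT{}{}}$, is a model of $\Theory{}$, and since $\antec_i\imp\poss\act\top$ lies in the original $\EXE{}{\act}$, there is $w'\in\valuations{\STAT{}{}}$ reachable from $w$, i.e.\ $w'$ satisfies every consequent demanded at $w$ by $\EFF{}{\act}$. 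Because $w\models\fml$ the new law $\neg\fml\imp\nec\act\bot$ imposes nothing at $w$, and because $\valuations{\STAT{}{}}\subseteq\valuations{\STAT{-}{}}$ this same $w'$ is a world of the big model of $\Theory{}'$ meeting all its $\act$-effect demands at $w$; hence $(w,w')\in\AccRel_{\bigM}$ and $\wMvalid{w}{\model_{\bigM}}\poss\act\top$. Thus $\Mvalid{\model_{\bigM}}\Theory{}'$, and $\Theory{}'$ is modular.

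The main obstacle is exactly this static-law case: weakening $\STAT{}{}$ to $\STAT{-}{}$ enlarges the world set, and one must guarantee that every newly admissible $(\antec_i\et\fml)$-world still has an outgoing $\act$-arrow in the big model of $\Theory{}'$. The delicate points are (i) to relocate such a world back inside $\valuations{\STAT{}{}}$ using recovery, so that modularity of $\Theory{}$ can supply a successor, and (ii) to verify that the freshly added inexecutability $\neg\fml\imp\nec\act\bot$ is dormant on precisely these worlds. Everything else reduces to bookkeeping once the big-model characterization is in hand.
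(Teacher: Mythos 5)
Your proof is correct. For the executability and effect-law cases your argument is the same as the paper's: both reduce modularity of $\Theory{}'$ to modularity of $\Theory{}$ via Lemma~\ref{LemmaMonotony} together with the observation that Algorithms~\ref{ExecOperator} and~\ref{EffectOperator} leave the static part untouched (the paper phrases this contrapositively, you directly). For the static-law case you take a genuinely different route. The paper argues by contradiction: it picks a Boolean $\fml'$ with $\Theory{}'\PDLvalid\fml'$ but $\STAT{-}{}\notCPLvalid\fml'$, takes a witness valuation $\val$, and in the nontrivial sub-case $\val\notin\valuations{\STAT{}{}}$ (where $\val\notsat\fml$ by exactly the recovery-type property you invoke, which the paper hides in the phrase ``otherwise $\propcontract$ has not worked as expected'') it extends an arbitrary model of $\Theory{}'$ by adding $\val$ as an isolated world with no arrows, so that the weakened executabilities hold vacuously at $\val$ (since $\val\notsat\fml$) and $\neg\fml\imp\nec\act\bot$ holds because $\val$ has no successors; this refutes $\Theory{}'\PDLvalid\fml'$. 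You instead verify the big-model characterization directly: inclusion gives $\valuations{\STAT{}{}}\subseteq\valuations{\STAT{-}{}}$, recovery relocates every $(\antec_i\et\fml)$-world back into $\valuations{\STAT{}{}}$, modularity of $\Theory{}$ (via the characterization theorem) supplies an $\act$-successor there, and that successor remains admissible in the big model of $\Theory{}'$ because $\neg\fml\imp\nec\act\bot$ is dormant at $\fml$-worlds. Both arguments rest on the same two assumptions about the classical operator $\propcontract$; yours makes them explicit by name. What your route buys is brevity and reuse of the already-proved equivalence between modularity and the big model being a model; what the paper's route buys is independence from that theorem and an isolated-world extension construction that it reuses in Lemma~\ref{LemmaOnlyModelsValS} and the other appendix lemmas.
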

\begin{proof}
Let $\modfml$ be nonclassical, and suppose there is $\Theory{}'\in\ErasureSet{\Theory{}}{\modfml}$ such that $\Theory{}'$ is not modular. Then there is some $\fml'\in\Fml$ such that $\Theory{}'\PDLvalid\fml'$ and ${\STAT{}{}}'\notCPLvalid\fml'$, where ${\STAT{}{}}'$ is the set of static laws in $\Theory{}'$. By Lemma~\ref{LemmaMonotony}, $\Theory{}\PDLvalid\Theory{}'$, and then we have $\Theory{}\PDLvalid\fml'$. Because $\modfml$ is nonclassical, ${\STAT{}{}}'=\STAT{}{}$. Thus $\STAT{}{}\notCPLvalid\fml'$, and hence $\Theory{}$ is not modular.\\

Let now $\modfml$ be some $\fml\in\Fml$. Then
\[
\Theory{}'=\begin{array}{l}
		((\Theory{}\setminus\STAT{}{}) \cup \STAT{-}{})\setminus\EXE{}{\act}\  \cup \\
		\{(\antec_{i}\et\fml)\imp\poss\act\top : 
		\antec_{i}\imp\poss\act\top \in\EXE{}{\act}\}\ \cup \\
		   \{\neg\fml\imp\nec\act\bot\}
		\end{array}
\]
for some $\STAT{-}{}\in\STAT{}{}\propcontract\fml$.

Suppose $\Theory{}$ is modular, and let $\fml'\in\Fml$ be such that $\Theory{}'\PDLvalid\fml'$ and $\STAT{-}{}\notCPLvalid\fml'$.

As $\STAT{-}{}\notCPLvalid\fml'$, there is $\val\in\valuations{\STAT{-}{}}$ such that $\val\notsat\fml'$. If $\val\in\valuations{\STAT{}{}}$, then $\STAT{}{}\notCPLvalid\fml'$, and as $\Theory{}$ is modular, $\Theory{}\notPDLvalid\fml'$. By Lemma~\ref{LemmaMonotony}, $\Theory{}\PDLvalid\Theory{}'$, and we have $\Theory{}'\notPDLvalid\fml'$, a contradiction. Hence $\val\notin\valuations{\STAT{}{}}$. Moreover, we must have $\val\notsat\fml$, otherwise $\propcontract$ has not worked as expected.

Let $\model=\TransStruct{}$ be such that $\Mvalid{\model}\Theory{}'$. (We extend $\model$ to another model of $\Theory{}'$.) Let $\model'=\TransStructP{}$ be such that $\Worlds'=\Worlds\cup\{\val\}$ and $\AccRel'=\AccRel$. To show that $\model'$ is a model of $\Theory{}'$, it suffices to show that $\val$ satisfies every law in $\Theory{}'$. As $\val\in\valuations{\STAT{-}{}}$, $\wMvalid{v}{\model'}\STAT{-}{}$. Given $\neg\fml\imp\nec\act\bot\in\Theory{}'$, as $\val\notsat\fml$ and $\ract'(\val)=\emptyset$, $\wMvalid{\val}{\model'}\neg\fml\imp\nec\act\bot$. Now, for every $\antec_{i}\imp\nec\act\conseq_{i}\in\Theory{}'$, if $\wMvalid{\val}{\model'}\antec_{i}$, then we trivially have $\wMvalid{\val'}{\model'}\conseq_{i}$ for every $\val'$ such that $(\val,\val')\in\ract'$. Finally, given $(\antec_{i}\et\fml)\imp\poss\act\top\in\Theory{}'$, as $\val\notsat\fml$, the formula trivially holds in $\val$. Hence $\Mvalid{\model'}\Theory{}'$, and because there is $\val\in\Worlds'$ such that $\notwMvalid{\val}{\model'}\fml'$, we have $\Theory{}'\notPDLvalid\fml'$, a contradiction. Hence for all $\fml'\in\Fml$ such that $\Theory{}'\PDLvalid\fml'$, $\STAT{-}{}\CPLvalid\fml'$, and then $\Theory{}'$ is modular.
\end{proof}

\myskip

\begin{lemma}\label{LemmaMinimalExtension}
If $\model_{\bigM}=\tuple{\Worlds_{\bigM},\AccRel_{\bigM}}$ is a model of $\Theory{}$, then for every $\model=\TransStruct{}$ such that $\Mvalid{\model}\Theory{}$ there is a minimal (\wrt\ set inclusion) extension $\AccRel'\subseteq\AccRel_{\bigM}\setminus\AccRel$ such that $\model'=\tuple{\valuations{\STAT{}{}},\AccRel\cup\AccRel'}$ is a model of $\Theory{}$.
\end{lemma}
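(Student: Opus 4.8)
The plan is to exploit the fact that the big model provides a ``ceiling'' for the accessibility relation that automatically respects every effect law, so that the only laws genuinely at risk when enlarging $\model$ are the executability laws. First I would record two structural facts. Since $\model_{\bigM}$ is a model of $\Theory{}$ by hypothesis, in particular $\Mvalid{\model_{\bigM}}\EXE{}{}$. And since $\Mvalid{\model}\EFF{}{}$, every arrow $(w,w')\in\ract$ satisfies, for each effect law $\antec\imp\nec\act\conseq\in\EFF{}{\act}$, the implication ``if $w\sat\antec$ then $w'\sat\conseq$''; this is exactly the defining condition of $\AccRel_{\bigM}$, so $\AccRel\subseteq\AccRel_{\bigM}$ (using also that $\Mvalid{\model}\STAT{}{}$ places every world of $\model$ in $\valuations{\STAT{}{}}=\Worlds_{\bigM}$). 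Consequently, for any $\AccRel'\subseteq\AccRel_{\bigM}\setminus\AccRel$ we have $\AccRel\cup\AccRel'\subseteq\AccRel_{\bigM}$.

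Next I would observe which laws of $\Theory{}$ hold in $\model'=\tuple{\valuations{\STAT{}{}},\AccRel\cup\AccRel'}$ for free, regardless of the choice of $\AccRel'$. The static laws hold because $\Worlds'=\valuations{\STAT{}{}}$, so every world of $\model'$ satisfies every $\fml\in\STAT{}{}$. The effect laws hold because every arrow of $\model'$ lies in $\AccRel_{\bigM}$, and by the very construction of the big model an $\act$-arrow $(w,w')\in\AccRel_{\bigM}$ can never falsify an effect law (if $w\sat\antec$ then $w'\sat\conseq$ for every $\antec\imp\nec\act\conseq\in\EFF{}{\act}$). Hence $\Mvalid{\model'}\STAT{}{}\cup\EFF{}{}$ for any admissible $\AccRel'$, and the whole problem reduces to ensuring the executability laws while adding as few arrows as possible.

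I would then build $\AccRel'$ explicitly, adding exactly one witnessing arrow per deficient world. For each pair $(w,\act)$ with $w\in\valuations{\STAT{}{}}$ such that $w$ satisfies the antecedent of some executability law $\antec_i\imp\poss\act\top\in\EXE{}{\act}$ but $\ract(w)=\emptyset$ in $\model$, the hypothesis $\Mvalid{\model_{\bigM}}\antec_i\imp\poss\act\top$ guarantees an $\act$-successor of $w$ in $\AccRel_{\bigM}$; I pick one such arrow $(w,w_{\act})$ and let $\AccRel'$ be the collection of all arrows chosen this way. By construction $\AccRel'\subseteq\AccRel_{\bigM}\setminus\AccRel$, every world satisfying an executability antecedent now has an outgoing arrow, so $\Mvalid{\model'}\EXE{}{}$, and combined with the previous paragraph $\Mvalid{\model'}\Theory{}$.

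Finally, minimality is immediate from the one-arrow-per-pair design: if $(w,w_{\act})\in\AccRel'$ is removed, then in $\tuple{\valuations{\STAT{}{}},\AccRel\cup(\AccRel'\setminus\{(w,w_{\act})\})}$ the world $w$ has no $\act$-successor at all (its only added $\act$-arrow was $(w,w_{\act})$ and $\ract(w)=\emptyset$), so the executability law that forced the arrow fails and the result is not a model of $\Theory{}$. Thus no proper subset of $\AccRel'$ works, and $\AccRel'$ is minimal with respect to inclusion. The main thing to get right is the reduction step --- verifying that every arrow drawn from $\AccRel_{\bigM}$ is automatically harmless for effect laws and that the original $\AccRel$ already sits inside $\AccRel_{\bigM}$ --- since once the problem is confined to executabilities, the witness-picking construction and its minimality are routine; a secondary point worth stating carefully is that the newly present worlds in $\valuations{\STAT{}{}}\setminus\Worlds$ are handled uniformly by the same construction, their executability being underwritten by $\model_{\bigM}$.
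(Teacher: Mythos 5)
Your proof is correct and follows essentially the same route as the paper's: the only laws that can fail in $\tuple{\valuations{\STAT{}{}},\AccRel}$ are executability laws at the new worlds in $\valuations{\STAT{}{}}\setminus\Worlds$, and these are repaired by borrowing one arrow per deficient world from $\AccRel_{\bigM}$, whose arrows by construction can never falsify an effect law, with minimality following because each added arrow is the sole $\act$-successor of its source. If anything, your write-up is slightly more careful than the paper's, which explicitly treats only a single violating world $\val$ and leaves both the iteration over all deficient worlds and the check that borrowed big-model arrows preserve effect laws implicit.
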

\begin{proof}
Let $\model_{\bigM}=\tuple{\Worlds_{\bigM},\AccRel_{\bigM}}$ be a model of $\Theory{}$, and let $\model=\TransStruct{}$ be such that $\Mvalid{\model}\Theory{}$. Consider $\model'=\tuple{\valuations{\STAT{}{}},\AccRel}$. If $\Mvalid{\model'}\Theory{}$, we have $\AccRel'=\emptyset\subseteq\AccRel_{\bigM}\setminus\AccRel$ that is minimal. Suppose then $\notMvalid{\model'}\Theory{}$. We extend $\model'$ to a model of $\Theory{}$ that is a minimal extension of $\model$. As $\notMvalid{\model'}\Theory{}$, there is $\val\in\valuations{\STAT{}{}}\setminus\Worlds$ such that $\notwMvalid{\val}{\model'}\Theory{}$. Then there is $\modfml\in\Theory{}$ such that $\notwMvalid{\val}{\model'}\modfml$. If $\modfml$ is some $\fml\in\Fml$, as $\val\in\Worlds_{\bigM}$, $\model_{\bigM}$ is not a model of $\Theory{}$. If $\modfml$ is of the form $\antec\imp\nec\act\conseq$, for $\antec,\conseq\in\Fml$, there is $\val'\in\valuations{\STAT{}{}}$ such that $(\val,\val')\in\ract$ and $\val'\notsat\conseq$, a contradiction since $\ract(\val)=\emptyset$. Let now $\modfml$ have the form $\antec\imp\poss\act\top$ for some $\antec\in\Fml$. Then $\wMvalid{\val}{\model'}\antec$. As $\val\in\Worlds_{\bigM}$, if $\notwMvalid{\val}{\model_{\bigM}}\antec\imp\poss\act\top$, then $\notMvalid{\model_{\bigM}}\Theory{}$. Hence, ${\AccRel_{\bigM}}_{\act}(\val)\neq\emptyset$. Thus taking any $(\val,\val')\in{\AccRel_{\bigM}}_{\act}$ gives us a minimal $\AccRel'=\{(\val,\val')\}$ such that $\model''=\tuple{\valuations{\STAT{}{}},\AccRel\cup\AccRel'}$ is a model of $\Theory{}$.
\end{proof}

\myskip

\begin{lemma}\label{LemmaOnlyModelsValS}
Let $\Theory{}$ be modular, and $\modfml$ be a law. Then $\Theory{}\PDLvalid\modfml$ if and only if every $\model'=\tuple{\valuations{\STAT{}{}},\AccRel'}$ such that $\Mvalid{\TransStruct{}}\Theory{}$ and $\AccRel\subseteq\AccRel'$ is a model of $\modfml$.
\end{lemma}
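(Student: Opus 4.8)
The plan is to read the right-hand side as quantifying over the \emph{full-worlds} models of $\Theory{}$: those $\model'=\tuple{\valuations{\STAT{}{}},\AccRel'}$ that are themselves models of $\Theory{}$ and whose relation extends the relation $\AccRel$ of some model $\tuple{\Worlds,\AccRel}$ of $\Theory{}$. (Taken with an unrestricted $\AccRel'$ the biconditional would be false — adding arrows to $\neg\conseq$-worlds falsifies an entailed effect law $\antec\imp\nec\act\conseq$ — so the restriction that $\model'$ be a model of $\Theory{}$ is what makes it true, and it is the reading used in Corollary~\ref{CorollaryCorrectness}, where the produced $\model'$ is asserted to satisfy $\Theory{}'$.) With this reading the left-to-right direction is immediate: a full-worlds model of $\Theory{}$ is in particular a model of $\Theory{}$, so $\Theory{}\PDLvalid\modfml$ forces it to satisfy $\modfml$; modularity guarantees via the big-model characterisation that at least one such model, namely $\model_{\bigM}$, exists, so the claim is not vacuous.

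For the converse I would take an arbitrary model $\model=\tuple{\Worlds,\AccRel}$ with $\Mvalid{\model}\Theory{}$ and show $\Mvalid{\model}\modfml$; this yields $\Theory{}\PDLvalid\modfml$. First I invoke Lemma~\ref{LemmaMinimalExtension} — applicable because modularity makes $\model_{\bigM}$ a model of $\Theory{}$ — to obtain a full-worlds model $\model^{+}=\tuple{\valuations{\STAT{}{}},\AccRel^{+}}$ of $\Theory{}$ with $\AccRel\subseteq\AccRel^{+}$. Since $\model^{+}$ is exactly of the form quantified over on the right, the hypothesis gives $\Mvalid{\model^{+}}\modfml$. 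It then remains to transfer truth of $\modfml$ from $\model^{+}$ down to its submodel $\model$, splitting on the three kinds of law.

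The static and effect cases are routine downward-preservation arguments. If $\modfml$ is a Boolean $\fml$, then $\Mvalid{\model^{+}}\fml$ says every valuation in $\valuations{\STAT{}{}}$ satisfies $\fml$, hence so does every $w\in\Worlds\subseteq\valuations{\STAT{}{}}$. If $\modfml=\antec\imp\nec\act\conseq$, then for $w\in\Worlds$ with $\wMvalid{w}{\model}\antec$ and $(w,w')\in\ract$ we have $(w,w')\in\AccRel^{+}_{\act}$ and $\wMvalid{w}{\model^{+}}\antec$, so $\Mvalid{\model^{+}}\modfml$ forces $\wMvalid{w'}{\model^{+}}\conseq$, i.e.\ $\wMvalid{w'}{\model}\conseq$. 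The delicate case is executability $\modfml=\antec\imp\poss\act\top$, for which passing to $\model^{+}$ adds arrows and makes $\poss\act\top$ \emph{easier} to satisfy, so truth does not transfer down for free. The key point I would establish is that the extension never adds an arrow leaving an \emph{old} world $w\in\Worlds$: because $\Mvalid{\model}\Theory{}$, every old world already satisfies all executability laws of $\Theory{}$ under $\AccRel$, so any new arrow out of $w$ would be superfluous and could be deleted while still keeping a model of $\Theory{}$, contradicting the minimality of the extension (equivalently, this is visible directly in the construction of Lemma~\ref{LemmaMinimalExtension}, which only repairs executability of the newly added worlds). Hence $\AccRel^{+}_{\act}(w)=\ract(w)$ for all $w\in\Worlds$, and so if $\model$ violated $\antec\imp\poss\act\top$ at some $w\in\Worlds$ — i.e.\ $\wMvalid{w}{\model}\antec$ and $\ract(w)=\emptyset$ — then $\AccRel^{+}_{\act}(w)=\emptyset$ too, giving $\notMvalid{\model^{+}}\antec\imp\poss\act\top$, against $\Mvalid{\model^{+}}\modfml$. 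Thus $\Mvalid{\model}\modfml$ in every case.

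I expect the executability case to be the main obstacle, and in writing it I would make fully explicit the claim that the arrows supplied by Lemma~\ref{LemmaMinimalExtension} never leave an old world. This is precisely what prevents a genuine counterexample in $\model$ from being silently ``repaired'' in $\model^{+}$, and it is the only step that uses both modularity (to secure $\model_{\bigM}$ and hence the extension) and the specific minimal shape of that extension; the static and effect cases, by contrast, need nothing beyond the downward preservation of their truth conditions.
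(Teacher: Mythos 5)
Your proof is correct and is essentially the paper's own argument: both rely on Lemma~\ref{LemmaMinimalExtension} (applicable by modularity via the big-model theorem) to pass to a full-worlds extension, and both turn on the same key fact that this extension only adds arrows leaving worlds in $\valuations{\STAT{}{}}\setminus\Worlds$, so an executability violation at an old world persists. The only differences are presentational: the paper argues the right-to-left direction contrapositively (extend a countermodel of $\modfml$ and check that the violating world survives) where you argue it directly, and the paper implicitly adopts the same restricted reading that you make explicit, namely that the quantified $\model'$ are themselves models of $\Theory{}$.
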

\begin{proof}

\noindent($\Rightarrow$): Straightforward, as $\Theory{}\PDLvalid\modfml$ implies $\Mvalid{\model}\modfml$ for every $\model$ such that $\Mvalid{\model}\Theory{}$, in particular for those that are extensions of some model of $\Theory{}$.\\

\noindent($\Leftarrow$): Suppose $\Theory{}\notPDLvalid\modfml$. Then there is $\model=\TransStruct{}$ such that $\Mvalid{\model}\Theory{}$ and $\notMvalid{\model}\modfml$. As $\Theory{}$ is modular, the big model $\model_{\bigM}=\tuple{\Worlds_{\bigM},\AccRel_{\bigM}}$ of $\Theory{}$ is a model of $\Theory{}$. Then by Lemma~\ref{LemmaMinimalExtension} there is a minimal extension $\AccRel'$ of $\AccRel$ \wrt\ $\AccRel_{\bigM}$ such that $\model'=\tuple{\valuations{\STAT{}{}},\AccRel\cup\AccRel'}$ is a model of $\Theory{}$. Because $\notMvalid{\model}\modfml$, there is $w\in\Worlds$ such that $\notwMvalid{w}{\model}\modfml$. If $\modfml$ is some propositional $\fml\in\Fml$ or an effect law, any extension $\model'$ of $\model$ is such that $\notwMvalid{w}{\model'}\modfml$. If $\modfml$ is of the form $\antec\imp\poss\act\top$, then $\wMvalid{w}{\model}\fml$ and $\ract(w)=\emptyset$. As any extension of $\model$ is such that $(u,v)\in\AccRel'$ if and only if $u\in\valuations{\STAT{}{}}\setminus\Worlds$, 
only worlds other than those in $\Worlds$ get a new leaving arrow. Thus $(\AccRel\cup\AccRel')_{\act}(w)=\emptyset$, and then $\notwMvalid{w}{\model'}\modfml$.
\end{proof}

\myskip

\begin{lemma}\label{LemmaComesFromSemantics}
Let $\Theory{}$ be modular, $\modfml$ a law, and $\Theory{}'\in\ErasureSet{\Theory{}}{\modfml}$. If $\model'=\tuple{\valuations{{\STAT{}{}}'},\AccRel'}$ is a model of $\Theory{}'$, then there is $\ModelSet=\{\model : \model=\tuple{\valuations{\STAT{}{}},\AccRel} \text{ and } \Mvalid{\model}\Theory{}\}$ such that $\model'\in\ModelSet'$ for some $\ModelSet'\in\ErasureModels{\ModelSet}{\modfml}$.
\end{lemma}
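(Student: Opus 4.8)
The plan is to reverse the argument behind Theorem~\ref{TheoremCompleteness}: there a semantic contraction produces a theory satisfied by the given model, whereas here a model $\model'$ of a syntactically contracted theory $\Theory{}'$ must be displayed as an output of the semantic contraction. First I would unfold Definitions~\ref{DefErasureModelsExe},~\ref{DefErasureModelsEff} and~\ref{DefErasureModelsStat}: each $\ModelSet'\in\ErasureModels{\ModelSet}{\modfml}$ is of the shape $\ModelSet\cup\{\model''\}$ with $\model''\in\erasure{\model}{\modfml}$ for some $\model\in\ModelSet$. With $\ModelSet=\{\model : \model=\tuple{\valuations{\STAT{}{}},\AccRel},\ \Mvalid{\model}\Theory{}\}$ fixed (nonempty, since by modularity the big model of $\Theory{}$ lies in $\ModelSet$), the goal reduces to proving that \emph{either} $\model'\in\ModelSet$ \emph{or} $\model'\in\erasure{\model}{\modfml}$ for some $\model\in\ModelSet$; in the first case any nonempty $\ModelSet'$ contains $\model'$, in the second $\ModelSet'=\ModelSet\cup\{\model'\}$ does. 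If ${\STAT{}{}}'=\STAT{}{}$ and $\Mvalid{\model'}\Theory{}$ then $\model'\in\ModelSet$ and we are done, so the substance is the case $\notMvalid{\model'}\Theory{}$ (and, when $\modfml$ is static, the case ${\STAT{}{}}'\neq\STAT{}{}$), which I would treat by splitting on the form of $\modfml$.

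For $\modfml=\antec\imp\poss\act\top$ we have ${\STAT{}{}}'=\STAT{}{}$ and $\model'=\tuple{\valuations{\STAT{}{}},\AccRel'}$; since $\Theory{}'$ and $\Theory{}$ differ only on $\EXE{}{\act}$, $\model'$ validates all of $\Theory{}$ but $\antec\imp\poss\act\top$, so some $\antec$-world $u$ — necessarily the world determined by the complete term $\term\et\antec_{\Set}$ — has no departing $\act$-arrow. I would recover $\model=\tuple{\valuations{\STAT{}{}},\AccRel}$ by putting back a single admissible $\act$-successor for $u$, whose existence is guaranteed because the big model of $\Theory{}$ is a model (modularity); then $\AccRel'\subseteq\AccRel$, $\AccRel\symdiff\AccRel'$ consists only of arrows leaving the $\antec$-world $u$, and $\model'\in\ErasureModels{\model}{\antec\imp\poss\act\top}$ by Definition~\ref{DefErasureOneModelExe}, with $\submodel{\model}$-minimality holding because invalidating the law forces depriving at least one $\antec$-world of all its arrows. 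The effect case $\modfml=\antec\imp\nec\act\conseq$ is dual: $\model'$ now carries \emph{extra} arrows, so I would take $\AccRel\subseteq\AccRel'$ by deleting exactly the $\act$-arrows into $\neg\conseq$-worlds that make $\model'$ falsify the law, check $\model=\tuple{\valuations{\STAT{}{}},\AccRel}$ is a model of $\Theory{}$, and verify $\model'\in\ErasureModels{\model}{\antec\imp\nec\act\conseq}$ (Definition~\ref{DefErasureOneModelEff}) by showing the reintroduced arrows point precisely to members of $\RelTarget{u,\antec\imp\nec\act\conseq,\model,\ModelSet}$. Finally, for static $\modfml=\fml$, $\model'=\tuple{\valuations{{\STAT{}{}}'},\AccRel'}$ has strictly more worlds; I would restrict to $\model=\tuple{\valuations{\STAT{}{}},\AccRel'}$, observing that every world of $\valuations{\STAT{}{}}$ satisfies $\fml$ and that $\neg\fml\imp\nec\act\bot\in\Theory{}'$ leaves the added $\neg\fml$-worlds without outgoing arrows, so that (after confirming no arrow touches a new world) $\AccRel'$ relates only old worlds, $\model\sat\Theory{}$, and $\model'\in\ErasureModels{\model}{\fml}$ by Definition~\ref{DefErasureOneModelStat}, with $\valuations{{\STAT{}{}}'}$ a $\submodel{\model}$-minimal superset of $\valuations{\STAT{}{}}$ admitting a world falsifying $\fml$.

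The main obstacle will be the effect-law case, and in particular the $\submodel{\model}$-minimality of $\model'$ inside $\ErasureModels{\model}{\antec\imp\nec\act\conseq}$. This is where the literal-preservation laws $(\term\et\antec_{\Set}\et\lit)\imp\nec\act(\conseq\ou\lit)$ emitted by Algorithm~\ref{EffectOperator} must be matched, arrow by arrow, against the definition of relevant target world, running \emph{backwards} the delicate case analysis on $u'$, $u''$ and $\lit'$ used in the proof of Theorem~\ref{TheoremCompleteness}. Modularity is the indispensable ingredient throughout: it makes the big model available to supply successors when restoring executability, and — crucially, \cf\ the counter-examples opening Section~\ref{Correctness} — it guarantees that the target valuations demanded by the $\RelTarget$ clauses genuinely occur among $\valuations{\STAT{}{}}$, so that no required world is missing. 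The remaining verifications (that the restricted/extended relations still model $\Theory{}$, and the bookkeeping of the ``either $\model'\in\ModelSet$'' dispatch) I expect to be routine once these correspondences are established; the one point I would flag for care is confirming, in the static case, that a canonical model of $\Theory{}'$ indeed leaves the freshly added worlds isolated, since the semantic operator of Definition~\ref{DefErasureOneModelStat} keeps the accessibility relation fixed.
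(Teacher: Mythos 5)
Your plan follows the paper's proof almost step for step: fix $\ModelSet$ as the set of models of $\Theory{}$ over $\valuations{\STAT{}{}}$ (nonempty by modularity), dispatch the trivial case $\Mvalid{\model'}\Theory{}$, and otherwise rebuild a parent model $\model\in\ModelSet$ from $\model'$ by a case split on the form of $\modfml$. Your executability case is the paper's argument made more concrete (where the paper only says ``taking the right $\AccRel$'', you exhibit the restored successor via the big model), and in the static case you reproduce the paper's use of $\neg\fml\imp\nec\act\bot$ and even flag the incoming-arrow issue that the paper passes over in silence.

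The genuine gap is the effect-law case, which is the heart of the lemma and which you explicitly leave as a programme (``matched, arrow by arrow, \ldots running backwards the delicate case analysis'') rather than an argument. What that programme must produce, and what the paper actually supplies, is a specific structural claim that never appears in your proposal: using the preservation laws $(\term\et\antec_{\Set}\et\lit)\imp\nec\act(\conseq\ou\lit)$ emitted by Algorithm~\ref{EffectOperator}, the paper shows that $\model'$ can contain only \emph{one} extra arrow, leaving the unique $\term\et\antec_{\Set}$-world and landing on a single $\term'$-world. For each literal $\lit$ true at that world, either the preservation law is absent from $\Theory{}'$, in which case $\Theory{}\PDLvalid(\term\et\antec_{\Set}\et\lit)\imp\nec\act\neg\lit$ and every successor is a $\neg\lit$-world, or it is present, in which case every $\term'$-successor is an $\lit$-world; iterating over the literals pins down one successor valuation, and this uniqueness is what drives both the $\submodel{\model}$-minimality argument and the identification of the parent $\model$. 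Furthermore, your step ``delete exactly the $\act$-arrows into $\neg\conseq$-worlds \ldots\ check $\model$ is a model of $\Theory{}$'' is not the routine verification you suggest: if those offending arrows were the \emph{only} $\act$-successors of the $\antec$-world in $\model'$, deleting them violates the executability laws of $\Theory{}$, and since Definition~\ref{DefErasureOneModelEff} only permits \emph{adding} arrows, in that situation no parent $\model$ with $\AccRel\subseteq\AccRel'$ exists at all (a failure mode which, admittedly, the paper's own terse ``taking the right $\AccRel$'' also does not dispose of). Because your proposal neither establishes the uniqueness claim nor addresses this obstruction, the essential content of the lemma remains unproved.
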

\begin{proof}
Let $\model'=\tuple{\valuations{{\STAT{}{}}'},\AccRel'}$ be such that $\Mvalid{\model'}\Theory{}'$. If $\Mvalid{\model'}\Theory{}$, the result follows. Let's suppose then $\notMvalid{\model'}\Theory{}$. We analyze each case.

\myskip

Let $\modfml$ be of the form $\antec\imp\poss\act\top$, for some $\antec\in\Fml$. Let $\ModelSet=\{\model : \model=\tuple{\valuations{\STAT{}{}},\AccRel}\}$. As $\Theory{}$ is modular, by Lemmas~\ref{LemmaMinimalExtension} and~\ref{LemmaOnlyModelsValS}, $\ModelSet$ is non-empty and contains only models of $\Theory{}$.

Suppose $\model'$ is not a minimal model of $\Theory{}'$, \ie, there is $\model''$ such that $\model''\submodel{\model}\model'$ for some $\model\in\ModelSet$. Then $\model'$ and $\model''$ differ only in the executability of $\act$ in a given $\antec$-world, \viz\ a $\term\et\antec_{\Set}$-context, for some $\term\in\IP{\STAT{}{}\et\antec}$ and $\antec_{\Set}=\bigwedge_{\stackscript{\prp_{i}\in\compatterm}{\prp_{i}\in\Set}}\prp_{i}\et\bigwedge_{\stackscript{\prp_{i}\in\compatterm}{\prp_{i}\notin\Set}}\neg\prp_{i}$ such that $\Set\subseteq\compatterm$. Because $\notMvalid{\model'}(\term\et\antec_{\Set})\imp\poss\act\top$, we must have $\Mvalid{\model''}(\term\et\antec_{\Set})\imp\poss\act\top$ and then $\Mvalid{\model''}\Theory{}$. Hence $\model'$ is minimal \wrt\ $\submodel{\model}$.

When contracting executability laws, ${\STAT{}{}}'=\STAT{}{}$. Hence taking the right $\AccRel$ and a minimal $\ract^{\antec}$ such that $\model=\tuple{\valuations{\STAT{}{}},\AccRel}$ and $\AccRel'=\AccRel\setminus\ract^{\antec}$, for some $\ract^{\antec}\subseteq\{(w,w') : \wMvalid{w}{\model}\antec \text{ and } (w,w')\in\ract\}$, we construct $\ModelSet'=\ModelSet\cup\{\model'\}\in\ErasureModels{\ModelSet}{\antec\imp\poss\act\top}$.

\myskip

Let $\modfml$ be of the form $\antec\imp\nec\act\conseq$, for $\antec,\conseq\in\Fml$. Let $\ModelSet=\{\model : \model=\tuple{\valuations{\STAT{}{}},\AccRel}\}$. As $\Theory{}$ is modular, by Lemmas~\ref{LemmaMinimalExtension} and~\ref{LemmaOnlyModelsValS}, $\ModelSet$ is non-empty and contains only models of $\Theory{}$.

We claim that $\model'$ has only one arrow linking a $\antec$-world, \viz\ a context $\antec_{i}\et\term\et\antec_{\Set}$ for some $\term\in\IP{\STAT{}{}\et\antec}$ and $\antec_{\Set}=\bigwedge_{\stackscript{\prp_{i}\in\compatterm}{\prp_{i}\in\Set}}\prp_{i}\et\bigwedge_{\stackscript{\prp_{i}\in\compatterm}{\prp_{i}\notin\Set}}\neg\prp_{i}$, such that $\Set\subseteq\compatterm$, to a $\term'$-world, where $\term'\in\IP{\STAT{}{}\et\neg\conseq}$. The proof is as follows: given $\lit\in\Lit$ such that $\lit$ holds in this $\antec_{i}\et\term\et\antec_{\Set}$-world
\begin{itemize}
\item if $(\term\et\antec_{\Set}\et\lit)\imp\nec\act(\conseq\ou\lit)\notin\Theory{}'$, then $\lit\notin\term'$ and $\Theory{}\PDLvalid(\term\et\antec_{\Set}\et\lit)\imp\nec\act\neg\lit$. Then this world has only $\neg\lit$-successors.
\item if $(\term\et\antec_{\Set}\et\lit)\imp\nec\act(\conseq\ou\lit)\in\Theory{}'$, then every $\term'$-successor is an $\lit$-world.
\end{itemize}
By successively applying this reasoning to each $\lit$ that holds in this $\antec_{i}\et\term\et\antec_{\Set}$-world, we will end up with only one $\term'$-successor.

Suppose now that $\model'$ is not a minimal model of $\Theory{}'$, \ie, there is $\model''$ such that $\Mvalid{\model''}\Theory{}'$ and $\model''\submodel{\model}\model'$ for some $\model\in\ModelSet$. Then $\model'$ and $\model''$ differ only in the effects on that $\antec_{i}\et\term\et\antec_{\Set}$-world: $\model''$ has no arrow linking it to a $\term'$-world. Then we have $\Mvalid{\model''}(\antec_{i}\et\term\et\antec_{\Set})\imp\nec\act\conseq_{i}$, and then $\Mvalid{\model''}\Theory{}$. Hence $\model'$ is a minimal model of $\Theory{}'$ \wrt\ $\submodel{\model}$.

When contracting effect laws, ${\STAT{}{}}'=\STAT{}{}$. Thus taking the right $\AccRel$ and a minimal $\ract^{\antec,\conseq}$ such that $\model=\tuple{\valuations{\STAT{}{}},\AccRel}$ and
$\AccRel'=\AccRel\cup\ract^{\antec,\conseq}$, for some $\ract^{\antec,\conseq}\subseteq\{(w,w') : \wMvalid{w}{\model}\antec \text{ and } w'\in\RelTarget{w,\antec\imp\nec\act\conseq,\model,\ModelSet}\}$, we construct $\ModelSet'=\ModelSet\cup\{\model'\}\in\ErasureModels{\ModelSet}{\antec\imp\nec\act\conseq}$.

\myskip

Let now $\modfml$ be $\fml$ for some $\fml\in\Fml$. Since $\Theory{}$ is modular, by Lemmas~\ref{LemmaMinimalExtension} and~\ref{LemmaOnlyModelsValS} there is $\model=\tuple{\valuations{\STAT{}{}},\AccRel}$ such that $\Mvalid{\model}\Theory{}$. We know $\valuations{\STAT{}{}}\subseteq\valuations{\STAT{-}{}}$. Because $\neg\fml\imp\nec\act\bot\in\Theory{}'$, $\ract'(\val)=\emptyset$ for every $\neg\fml$-world $\val$ added in $\model'$. Hence, because $\propcontract$ is minimal, taking $\ModelSet=\{\model\}$ gives us the result.
\end{proof}

\bigskip

\noindent{\bf Proof of Theorem~\ref{TheoremSoundness}}

From the hypothesis that $\Theory{}$ is modular and Lemma~\ref{PreservationModularity}, it follows that $\Theory{}'$ is modular, too. Then $\model'=\tuple{\valuations{{\STAT{}{}}'},\AccRel}$ is a model of $\Theory{}'$, by Lemma~\ref{LemmaOnlyModelsValS}. From this and Lemma~\ref{LemmaComesFromSemantics} the result follows.{\hfill\qed}

\end{document}